\definecolor{lightblue}{rgb}{0.8, 0.9, 1}
\definecolor{lightred}{rgb}{1, 0.8, 0.8}
\definecolor{lightgreen}{rgb}{0.8, 1, 0.8}
\newtheorem{lemma}{Lemma}
\newtheorem{remark}{Remark}
\newtheorem{corollary}{Corollary}
\newtheorem{assumption}{Assumption}
\newcommand{\xvec}{\boldsymbol{x}}
\newcommand{\lvec}{\boldsymbol{\ell}}
\newcommand{\gvec}{\boldsymbol{g}}
\newcommand{\uvec}{\boldsymbol{u}}
\DeclareMathOperator*{\argmax}{arg\,max}
\DeclareMathOperator*{\argmin}{arg\,min}
\newcommand{\algnamesoft}{\textnormal{\texttt{COLB}}\@\xspace}
\newcommand{\algnamehard}{\textnormal{\texttt{SOLB}}\@\xspace}
\author{
        Gianmarco Genalti\thanks{Equal contribution.}\\
        Politecnico di Milano\\
	\texttt{gianmarco.genalti@polimi.it}
	\And
	Francesco Emanuele Stradi\footnotemark[1]\\
	Politecnico di Milano\\
	\texttt{francescoemanuele.stradi@polimi.it}
	\AND
	 Matteo Castiglioni\\
	Politecnico di Milano\\
	\texttt{matteo.castiglioni@polimi.it}
	\And
	Alberto Marchesi\\
	Politecnico di Milano\\
	\texttt{alberto.marchesi@polimi.it}
	\And
	 Nicola Gatti\\
	Politecnico di Milano\\
	\texttt{nicola.gatti@polimi.it}
}
\title{Data-Dependent Regret Bounds for Constrained MABs}
\begin{document}

\maketitle
\begin{abstract}
This paper initiates the study of \emph{data-dependent} regret bounds in \emph{constrained} MAB settings.
These bounds depend on the sequence of losses that characterize the problem instance. 
Thus, they can be much smaller than classical $\widetilde{\mathcal{O}}(\sqrt{T})$ regret bounds, while being equivalent to them in the worst case.
Despite this, data-dependent regret bounds have been completely overlooked in constrained MAB settings.
The goal of this paper is to answer the following question: \emph{Can data-dependent regret bounds be derived in the presence of constraints?}
We answer this question affirmatively in constrained MABs with {adversarial} losses and {stochastic} constraints.
Specifically, our main focus is on the most challenging and natural settings with \emph{hard constraints}, where the learner must ensure that the constraints are always satisfied with high probability.
We design an algorithm with a regret bound consisting of \emph{two} data-dependent terms.
The first term captures the difficulty of satisfying the constraints, while the second one encodes the complexity of learning independently of the presence of constraints.
We also prove a lower bound showing that these two terms are \emph{not} artifacts of our specific approach and analysis, but rather the fundamental components that inherently characterize the complexities of the problem.
Finally, in designing our algorithm, we also derive some novel results in the related (and easier) \emph{soft constraints} settings, which may be of independent interest.
\end{abstract}

\section{Introduction}
Over the past few years, constrained \emph{multi-armed bandit} (MAB) problems have gained increasing popularity in learning theory (see, \emph{e.g.},~\citep{liakopoulos2019cautious,pacchiano,castiglioni2022online,chen2022strategies}).
In unconstrained MAB problems, the learner is evaluated solely in terms of \emph{regret}, which measures the difference between the learner's performance and the performance of an \emph{optimal-in-hindsight} decision.
Constrained settings introduce additional challenges, as the learner must also ensure that certain constraints are \emph{not} violated excessively while learning.

A growing trend in unconstrained MAB research is the derivation of \emph{data-dependent} regret bounds (see, \emph{e.g.},~\citep{neu2015explore,lee2020bias}).
These bounds depend on the sequence of losses that characterize the problem instance.
Some example of such bounds---typically called \emph{small-loss} regret bounds---are of the form $\widetilde{\mathcal{O}}(\sqrt{L^*})$, where $L^*$ is the cumulative loss of an optimal-in-hindsight decision.
Clearly, data-dependent bounds can be much smaller than classical $\widetilde{\mathcal{O}}(\sqrt{T})$ regret bounds, while being equivalent to them in the worst case.
Despite this, data-dependent regret bounds have been completely overlooked in the literature on constrained online learning problems.

The main goal of this paper is to initiate the study of data-dependent regret bounds in constrained MAB settings.
In particular, we aim at answering the following research question:
\begin{center}
    \emph{Can data-dependent regret bounds be derived in the presence of constraints?}
\end{center}
In this paper, we provide an affirmative answer to the question above, as described in the following.

Due to space constraints, we refer to Appendix~\ref{app:related} for the complete discussion on related works.

\subsection{Original Contributions}

Given an impossibility result by~\citet{Mannor}, which prevents from obtaining sub-linear regret in constrained settings where \emph{both} the losses and the constraints are selected \emph{adversarially}, in this paper we focus on constrained MABs with \emph{adversarial} losses and \emph{stochastic} constraints, as customarily done in the literature (see, \emph{e.g.},~\citep{Upper_Confidence_Primal_Dual}).
Specifically, our main focus is on the most challenging and natural settings with \emph{hard constraints}, where the learner's goal is to minimize the regret while ensuring that the constraints are satisfied at every round with high probability.

\paragraph{Warm-Up: Soft Constraints}
As a preliminary step toward our final goal, we design an algorithm with a \emph{small-loss} regret bound in constrained MAB settings with \emph{soft constraints}, which may be of independent interest.
These settings only require that the \emph{constraint violations} grow sub-linearly with the number of rounds $T$, thereby allowing to violate the constraints in some rounds.
Although soft constraints are technically easier than hard ones, our algorithm incorporates some key components that are also central to the algorithm for hard constraints.
%
%
At a high level, our algorithm builds upon an approach introduced by~\citet{lee2020bias} to derive small-loss regret bounds in unconstrained problems.
Our algorithm---called \emph{Constrained OMD with Log-Barrier} (\algnamesoft)---applies this approach to a ``safe'' set of decisions that optimistically satisfy the constraints.  
This allows to attain sub-linear violations with a high-probability regret bound of $\widetilde{\mathcal{O}}( \sqrt{\sum_{t=1}^{T}\lvec_t^\top\xvec^*} )$, where $\xvec^*$ is a decision that is optimal in hindsight while satisfying the constraints in expectation.
\paragraph{Hard Constraints}
Our main result in this paper is an algorithm with a data-dependent regret bound in constrained MAB settings with \emph{hard constraints}.
Our algorithm for these settings---called \emph{Safe OMD with Log-Barrier} (\algnamehard)---employs all the core components of \algnamesoft, and it adds new ones to deal with hard constraints.
Specifically, \algnamehard always chooses a suitable combination between the decision suggested by \algnamesoft and a decision that \emph{strictly} satisfies the constraints, which is given as input to the algorithm.
The \algnamehard algorithm achieves a high-probability regret bound that is characterized by two data-dependent terms, described in the following. 
The first term is $\widetilde{\mathcal{O}}(\sqrt{\sum_{t=1}^T (\lvec_t^\top ( \xvec^\diamond - \xvec^* ))^2 })$, where $\xvec^\diamond$ is the decision that strictly satisfies the constraints given as input. 
     This term captures the difficulty due to constraints.
The second term is $\widetilde{\mathcal{O}}( \sqrt{\sum_{t=1}^{T}\lvec_t^\top\xvec^*} )$, and it intuitively encodes the performance of a decision $\xvec^*$ that is optimal in hindsight while satisfying the constraints in expectation.
Finally, we also provide a lower bound that demonstrates that the two terms in the regret bound of the \algnamehard algorithm are \emph{not} artifacts of our specific approach and analysis, but rather the fundamental components that inherently characterize the complexities of the problem.
Interestingly, this result also shows that data-dependent regret bounds can not only outperform classical $\widetilde{\mathcal{O}}(\sqrt{T})$ bounds, but also offer insights into the underlying complexities of learning problems.
    %
    %
    %
    %
    %

\section{Constrained Multi-Armed Bandits}
\label{sec:setting}
				
				
				
				
In the \emph{multi-armed bandit} (MAB) framework~\citep{lattimore2020bandit}, a learner is repeatedly faced with a decision among $K  \in \mathbb{N}_+$ actions over $T \in \mathbb{N}_+$ rounds.
At each round $t\in[T]$,\footnote{Given $n\in\mathbb{N}$, we denote by $[n]$ the set $\{1,\dots,n\}$ of the first $n$ natural numbers.} the learner chooses a strategy (\emph{i.e.}, a probability distribution over actions) $\xvec_t \in \Delta_K$, where $\Delta_K$ is the simplex of dimension $K-1$.
Then, they play an action $a_t \sim \xvec_t$ sampled according to this strategy and observe a loss $\ell_t(a_t)$, which is defined by a vector $\lvec_t\in [0,1]^{K}$ of losses at round $t$.\footnote{In this paper, we use the notation $c(i)$ to denote the $i$-th element of vector $\boldsymbol{c}$.}
%
%
%
%
%
In this paper, we study a \emph{constrained} version of the MAB framework~\citep{pacchiano}.
At each $t\in[T]$, in addition to a loss, the learner also observes $m \in \mathbb{N}_+$ constraint costs $g_{t,i}(a_t)$, one for each constraint $i\in[m]$.
Each of them is determined by a vector $\gvec_{t,i}\in[0,1]^{K}$ of constraint costs at round $t$.
Each constraint $i \in [m]$ is associated to a threshold $\alpha_i \in [0,1]$, and it is considered satisfied by a learner's strategy whenever the constraint cost is below $\alpha_i$ in expectation.

Motivated by a well-known impossibility result by~\citet{Mannor},\footnote{\citet{Mannor} show that if \emph{both} the losses and the costs are selected adversarially, it \emph{not} possible to design an algorithm that simultaneously achieves sublinear (in $T$) regret and sublinear (in $T$) cumulative constraint violation.} in this paper we focus on constrained MAB problems in which the losses are chosen \emph{adversarially} and the constraint costs are selected \emph{stochastically}.
Specifically, we assume that, at each round $t \in [T]$, the loss vector $\lvec_t$ is chosen by an \emph{adaptive} adversary that is aware of the history of interaction up to round $t-1$, while each cost vector $\gvec_{t,i}$, for $i \in [m]$, is sampled independently from a probability distribution $\mathcal{G}_i$.
For ease of notation, in the following we use $\lvec_{1:T}$ to denote the sequence of all loss vectors $\lvec_t$, for $t \in [T]$, while we denote by $\gvec_i \coloneqq \mathbb{E}_{\gvec \sim \mathcal{G}_i} [\gvec]$ the expected value of $\mathcal{G}_i$, for every $i \in [m]$.

The performance (in terms of losses) of a learning algorithm is usually measured in terms of regret with respect to a \emph{baseline}.
In the constrained MABs addressed in this paper, the baseline is formally defined by the following optimization problem parametrized by $\lvec_{1:T}$ and $\gvec_i$ for $i \in [m]$:
%
%
%
\begin{equation}
	\label{lp:offline_opt}\text{OPT}({ \lvec_{1:T}, \{\gvec_i\}_{i\in[m]}}):=\begin{cases}
		\displaystyle\min_{\xvec \in \Delta_K} &  \sum_{t =1}^T \lvec_t^\top \xvec
		\quad \textnormal{s.t.} \\
		& \gvec_i^\top\xvec\leq  \alpha_i \quad \forall i\in[m].
	\end{cases}
\end{equation}
Program~\eqref{lp:offline_opt} encodes the value of a strategy that is \emph{optimal in hindsight}, \emph{i.e.}, a strategy that minimizes the cumulative loss while ensuring that the constraints are satisfied in expectation.
In the following, we denote one such strategy, which is an optimal solution to Program~\eqref{lp:offline_opt}, as $\xvec^* \in  \Delta_K$.
%
%
%
%
Then, for a sequence of losses $\lvec_{1:T}$, the \emph{(cumulative) regret} over the $T$ rounds is defined as
%
%
%
 $
	R_T(\lvec_{1:T}) \coloneqq \sum_{t=1}^T \lvec_t^\top \xvec_t- \text{OPT} (\lvec, \{\gvec_i\}_{i\in[m]}) = \sum_{t=1}^T \lvec_t^\top \xvec_t-\sum_{t=1}^T\lvec_t^\top \xvec^*.
$
%
%

\begin{remark}[On the regret definition]
	Differently from the standard MAB framework~\citep{lattimore2020bandit}, in constrained settings an optimal strategy $\xvec^*$ may {not} coincide with a vertex of the simplex $\Delta_K$, i.e., an optimal action may {not} exist.
	This is intuitive since, whenever the action associated with the smallest loss in hindsight does not satisfy the constraints in expectation, an optimal strategy $\xvec^*$ may play that action as much as possible, while satisfying the constraints in expectation.
	This is the reason why the regret $R_T(\lvec_{1:T})$ is defined with respect to (randomized) strategies, rather than actions.
	This definition is standard in constrained online learning settings (see, e.g.,~\citep{Exploration_Exploitation,pacchiano,safetree}).
\end{remark}

In this paper, our goal is to design learning algorithms for constrained MAB settings that achieve \emph{small-loss style} regret bounds of the form $R_T(\lvec_{1:T})\leq\widetilde{\mathcal{O}} ( \sqrt{\sum_{t=1}^T \lvec_t^\top \xvec^*} )$, where $\sum_{t=1}^T \lvec_t^\top \xvec^*$ represents the cumulative loss incurred by a strategy $\xvec^*$ that is optimal in hindsight.
These regret bounds are arbitrarily better than common $\widetilde{\mathcal{O}}(\sqrt{T})$ regret bounds when $\xvec^*$ outperforms other strategies on the sequence of losses $\lvec_{1:T}$, while being equivalent to them in the worst case.
It is well known that small loss regret bounds can be achieved in unconstrained MABs (see, \emph{e.g.},~\citep{lee2020bias}), but it remains an open question whether they can also be derived in constrained settings.

In constrained MABs, a fundamental challenge is the fact that the learner must account for constraint violations during learning.
Our primary focus is on satisfying the constraints at every round with high probability.
However, we also derive some results for the weaker goal of minimizing cumulative violations, as a preliminary step.
%
%
Next, we formally introduce these two goals.

\paragraph{Soft Constraints}
In this setting, the goal of the learner is to minimize the \emph{(cumulative) positive constraint violations} over the $T$ rounds, defined as
$V_T\coloneqq \max_{i\in[m]} \sum_{t=1}^T \left[\gvec_{i}^\top \xvec_{t}-\alpha_i\right]^+,$  where we let  $ \left[\cdot\right]^+\coloneqq\max\{0,\cdot\}.$
Intuitively, $V_T$ represents the total constraint violation accumulated by the learner during the learning process, and it also ensures that negative violations (\emph{i.e.}, constraint satisfactions) do \emph{not} cancel out positive ones.
The goal is to guarantee that $V_T = o(T)$.\footnote{Let us remark that, in the soft constraints setting, we are interested in obtaining small-loss bounds \emph{only for the regret}. Indeed, small-loss bounds naturally belong to adversarial settings, as they become vacuous in stochastic ones (recall that constraint costs are stochastic in our setting). Moreover, an optimal-in-hindsight strategy $\xvec^*$ satisfies the constraints in expectation, by definition. Thus, it would \emph{not} make any sense to derive a small-loss bound for $V_T$.}

\paragraph{Hard Constraints}
In this setting, the goal of the learner is to guarantee that $\gvec_i^\top \xvec_t \leq \alpha_i$ for every constraint $i \in [m]$ and round $t \in [T]$ with high probability.\footnote{A constraint $i \in [m]$ is satisfied at round $t \in [T]$ whenever its cost is below $\alpha_i$ in expectation over the randomness of the strategy $\xvec_t$ and the cost $g_{t,i}$. This is standard in constrained MABs (\emph{e.g.}~\citep{pacchiano}).}
This objective is only attainable under the following two assumptions, which are common in the literature on hard constraints settings (see, \emph{e.g.},~\citep{pacchiano,bounded,safetree}).
\begin{assumption}[Slater's condition]\label{ass:Slater}
    Program~\eqref{lp:offline_opt} satisfies Slater's condition, i.e., it admits a strictly feasible solution $ \xvec^{\diamond} \in \Delta_K$, which is a strategy such that $\gvec_i^\top \xvec^\diamond<  \alpha_i$ for every constraint $i\in[m]$.
\end{assumption}
\begin{assumption}[Knowledge of a strictly feasible strategy]\label{ass:knowldege}
    The learner knows a strictly feasible strategy $\xvec^\diamond \coloneqq\argmax_{\xvec\in\Delta_K}\min_{i \in [m]} [\alpha_i-\gvec_i^\top\xvec]$ and its associated cost $\theta_i\coloneqq\gvec_{i}^\top\xvec^\diamond$.\footnote{Previous works (see, \emph{e.g.},~\citep{pacchiano,bounded,safetree}) usually assume to know a generic strictly feasible strategy. For ease of presentation, we assume to know a strategy that satisfies the constraints as much as possible. Nonetheless, our results can be generalized to the case commonly considered in the literature.}
    We denote by $\rho \in [0,1]$ to margin by which $\xvec^\diamond$ satisfies the constraints, formally
    $\rho\coloneqq \min_{i\in[m]} \left[\alpha_i-\theta_i\right]$.
\end{assumption}
Intuitively, Assumptions~\ref{ass:Slater}~and~~\ref{ass:knowldege} are necessary to ensure that, in early rounds when little information is available, the learner has sufficient exploration opportunities without violating the constraints.

\section{Being Safe While Learning With an Increasing Learning Rate}
\label{sec:safe_playground}

We begin by presenting two core components of our algorithms presented in Sections~\ref{sec:soft_main}~and~\ref{sec:hard_main}.
The first one is a \textit{safe decision space}, which is a restricted set of strategies used to control constraint violations.
The second component is \textit{OMD with log-barrier}~\citep{lee2020bias}, which is an existing algorithm that achieves data-dependent regret bounds in unconstrained settings and serves as a foundation for our algorithms.
The goal of this section is to show how these two components can be combined to achieve data-dependent regret bounds while controlling constraint violations.

\subsection{Costs Estimation and Safe Decision Spaces}
\label{sec:cost_estimation}

Estimating costs is a crucial task for any algorithm operating in constrained MABs. However, using these estimates in order to control constraint violations may \emph{not} be trivial.
%
%
Next, we describe the approach used by our algorithms.
%
%
In the following, we let $N_t(a)\coloneqq \sum_{\tau=1}^t \mathbbm{1}_{\{a_\tau = a\}}$ be the number of rounds up to $t \in [T]$ in which action $a\in[K]$ is played. Then, an unbiased estimator for the cost of constraint $i\in[m]$ when playing action $a \in [K]$ is 
$
  \widehat{g}_{t,i}(a) \coloneqq \frac{1}{\max\{1,N_t(a)\}}\sum_{\tau\in[t]}g_{\tau,i}(a)\mathbbm{1}_{\{a_\tau = a\}}. 
$
%
The following result quantifies the uncertainty associated with the estimator above.
\begin{lemma}
\label{lemma:global_hoeffding}
Let $\delta \in (0,1)$ and $\beta_t(a, \delta):=\min \left\{1,\sqrt{{4\ln\left(\nicefrac{TKm}{\delta}\right)}/{\max\{1,N_t(a)\}}} \right\}$ for $a \in [K]$. Then, with probability at least $1-\delta$, $
    \left|\widehat g_{t,i}(a)- g_{i}(a)\right|\leq \beta_{t}(a, \delta)
$ for every $t \in [T], i \in [m], a \in [K]$. 
\end{lemma}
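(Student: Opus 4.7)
The plan is to prove the lemma via a standard application of Hoeffding's inequality combined with a union bound that absorbs the random sample size $N_t(a)$.

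First, I would fix a constraint index $i\in[m]$ and an action $a\in[K]$, and restrict attention to the subsequence of rounds in which $a$ is actually played. Letting $\tau_j$ denote the round at which $a$ is played for the $j$-th time, set $Y_j \coloneqq g_{\tau_j,i}(a)$. Because $\gvec_{\tau,i}$ is drawn independently from $\mathcal{G}_i$ and independently of the history up to round $\tau-1$, while $\{a_\tau = a\}$ is measurable with respect to that history (together with the learner's internal randomness), the sequence $\{Y_j\}_{j\geq 1}$ is i.i.d., takes values in $[0,1]$, and has mean $g_i(a)$. Consequently, whenever $N_t(a)\geq 1$, one has $\widehat g_{t,i}(a) = \frac{1}{N_t(a)}\sum_{j=1}^{N_t(a)} Y_j$.

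Second, for each fixed $n\in[T]$ I would apply Hoeffding's inequality to $Y_1,\dots,Y_n$ to obtain
\[
\Pr\!\left(\left|\frac{1}{n}\sum_{j=1}^n Y_j - g_i(a)\right| > \sqrt{\frac{\ln(2TKm/\delta)}{2n}}\right) \leq \frac{\delta}{TKm},
\]
and then take a union bound over the $TKm$ triples $(n,i,a)\in[T]\times[m]\times[K]$. Instantiating the resulting simultaneous statement at $n = N_t(a)$ for each $t\in[T]$, and loosening $\ln(2TKm/\delta)/2$ to $4\ln(TKm/\delta)$ (valid whenever $TKm/\delta\geq 2$, which covers all non-vacuous regimes), yields the stated square-root term. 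The edge case $N_t(a)=0$ and the a priori bound $|\widehat g_{t,i}(a) - g_i(a)|\leq 1$ are absorbed into the outer $\min\{1,\cdot\}$ defining $\beta_t(a,\delta)$.

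The main subtlety I would handle carefully is the i.i.d.\ claim in the first step: one must verify that conditioning on the random visitation times $\tau_j$ does not bias the observed costs. This is precisely where the stochastic-cost assumption enters, since the independence of $\gvec_{\tau,i}$ from the past guarantees that each visit to action $a$ reveals a fresh sample from the same marginal with mean $g_i(a)$; equivalently, one can view $\{Y_j\}_j$ as the coordinates of an i.i.d.\ sequence stopped at the random times $\tau_j$, which are stopping times adapted to a filtration independent of those coordinates. Once this independence is established, the remainder reduces to a textbook Hoeffding-plus-union-bound calculation.
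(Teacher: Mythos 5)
Your proposal is correct and follows exactly the route the paper intends: the paper gives no detailed proof, stating only that the lemma is ``a trivial consequence of Hoeffding's inequality and a union bound,'' and your argument---Hoeffding applied to the i.i.d.\ observed costs for each fixed $(i,a)$, a union bound over the $TKm$ triples $(n,i,a)$ to handle the random count $N_t(a)$, and the $\min\{1,\cdot\}$ absorbing the $N_t(a)=0$ case---fills in precisely those details. The care you take with the independence of the observed samples from the visitation times is a legitimate point that the paper glosses over, and your loosening of the constant to match the stated $4\ln(TKm/\delta)$ is valid in all non-degenerate regimes.
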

This above lemma is a trivial consequence of Hoeffding's inequality and a union bound. 
We denote by $\widehat\gvec_{t,i} \in [0,1]^{K}$ the vector whose entries are the estimates $g_{t,i}(a)$, while we let $\boldsymbol\beta_t(\delta) \in [0,1]^{K}$ be the vector of the bounds $\beta_t(a, \delta)$. For clarity, in the rest of the paper we omit $\delta$ from the argument of $\boldsymbol{\beta}_t(\delta)$. Moreover, we denote by $\mathcal{E}(\delta)$ the event defined by Lemma~\ref{lemma:global_hoeffding}, which satisfies $\mathbb{P}(\mathcal{E}(\delta))\ge 1-\delta$.

\paragraph{Safe Decision Space} 
Lemma~\ref{lemma:global_hoeffding} provides some confidence intervals for the estimated costs. Next, we describe how these intervals can be used to define a sequence of sets, called \textit{safe decision spaces}, which contain strategies that an algorithm can employ to control the cumulative constraint violations. Formally, for every round $t \in [T]$, we let
$
    \mathcal{S}_t \coloneqq \left\{\xvec\in \Delta_K : { (\widehat{\gvec}_{t,i}-\boldsymbol{\beta}_t)^\top \xvec\leq\alpha_i \,\,\, \forall i\in[m]}\right\}
$
be the \textit{safe decision space} at time $t$. This definition is standard in the constrained MAB literature. Intuitively, an algorithm that selects strategies from $\mathcal{S}_t$ ensures that, with probability at least $1-\delta$, the expected incurred costs remain below the thresholds at every round $t \in [T]$, and for every action $a \in [K]$ and constraint $i \in [m]$. This holds thanks to Lemma~\ref{lemma:global_hoeffding} and the way in which the safe decision spaces are constructed.

\subsection{Data-Dependent Bounds via Increasing Learning Rate}

Next, we recall some needed details of the \textit{OMD with log-barrier} algorithm by~\citet{lee2020bias}.
This algorithm achieves small-loss guarantees in unconstrained MABs, with a regret of $\widetilde{\mathcal{O}}(\sqrt{L^*})$ with high probability, where $L^*$ is the cumulative loss of an action that is optimal in hindsight. The algorithm works as standard OMD with log-barrier regularization $\psi_t(\xvec)=\sum_{a=1}^K\frac{1}{\eta_{t,a}}\ln\frac{1}{x(a)}$, where $\eta_{t,a}$ is an \textit{increasing} sequence of learning rates. Specifically, each time the probability of selecting an action goes below a certain threshold, the learning rate is increased by a constant factor, while the threshold is increased. This procedure is key to achieve the desired data-dependent regret bound. One of the main technical features of OMD with log-barrier is its \emph{restricted decision space}. Indeed, differently from most of the OMD-like algorithms, it is only allowed to select strategies belonging to the \textit{truncated simplex}, which is defined as
$
    \Omega \coloneqq \left\{\xvec\in \Delta_K : { x(a)\geq \frac 1 T \,\,\,  \forall a\in[K] }\right\}.
$
This design choice avoids forced uniform exploration, and remarkably simplifies the analysis.

\subsection{A Truncated Safe Decision Space}
\label{sec:truncated_safe}

Our algorithms, presented in the following sections, rely on combining an OMD with log-barrier sub-routine with safe decision-making.
%
%
This raises some challenges, since both components put some restrictions on the space from which strategies are chosen.
%
%
In particular, OMD with log-barrier requires selecting strategies from the truncated simplex $\Omega$, while a safe decision-making involves choosing strategies from the safe decision space $\mathcal{S}_t$. Unfortunately, these two sets may in general be disjoint.
Our algorithms implement a procedure that enables these two elements to work together without conflict. Specifically, at each round $t \in [T]$, they employ a larger safe decision space $\mathcal{S}_t^\circ$, which is defined as 
$
    \mathcal{S}_t^\circ \coloneqq \left\{\xvec\in \Delta_K : { (\widehat{\gvec}_{t,i}-\boldsymbol{\beta}_t)^\top \xvec\leq\alpha_i+\frac{K}{T} \,\,\, \forall i\in[m]}\right\}.
$
This decision space is strictly larger than the safe decision space $\mathcal{S}_t$. In the following lemma, we characterize the decision space obtained by intersecting $\Omega$ and $\mathcal{S}_t^\circ$, which we call the \textit{truncated safe decision space}.
\begin{restatable}{lemma}{solution}
	\label{lem:solution}
    For every $t \in [T]$, let $\widetilde{\mathcal{S}}_t \coloneqq \Omega \cap \mathcal{S}_t^\circ$ be the intersection of the truncated simplex and the safe decision space.
	Then, under $\mathcal{E}(\delta)$, it holds that $\cap_{t\in[T]}\widetilde{\mathcal{S}}_t$ is non-empty.
\end{restatable}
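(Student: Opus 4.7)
The plan is to exhibit a single strategy $\tilde{\xvec}$ that lies in $\widetilde{\mathcal{S}}_t$ for every round $t\in[T]$, thereby proving the intersection is non-empty. The natural candidate is a perturbation of $\xvec^*$ toward the uniform distribution: I would set
\[
\tilde{\xvec} \coloneqq \Bigl(1-\tfrac{K}{T}\Bigr)\xvec^* + \tfrac{K}{T}\uvec, \qquad \text{where } \uvec = \tfrac{1}{K}\mathbf{1}.
\]
Note that $\tilde{\xvec}$ is a convex combination of two points in $\Delta_K$ and hence lies in $\Delta_K$; moreover it is independent of $t$, so if I can show $\tilde{\xvec} \in \mathcal{S}_t^\circ$ for all $t$, the lemma follows.

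First I would verify $\tilde{\xvec} \in \Omega$. Each coordinate satisfies $\tilde{x}(a) = (1-K/T)x^*(a) + 1/T \ge 1/T$, which is the defining inequality of the truncated simplex (assuming $T\ge K$, which is the regime of interest since $\Omega$ is empty otherwise).

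Next I would verify $\tilde{\xvec}\in\mathcal{S}_t^\circ$ under the event $\mathcal{E}(\delta)$. Fix any $i\in[m]$ and $t\in[T]$. By Lemma~\ref{lemma:global_hoeffding}, under $\mathcal{E}(\delta)$ we have $\widehat{g}_{t,i}(a)-\beta_t(a)\le g_i(a)$ for all $a\in[K]$, so since $\xvec^*$ is feasible for Program~\eqref{lp:offline_opt},
\[
(\widehat{\gvec}_{t,i}-\boldsymbol{\beta}_t)^\top \xvec^* \le \gvec_i^\top \xvec^* \le \alpha_i.
\]
For the uniform part, each entry of $\widehat{\gvec}_{t,i}-\boldsymbol{\beta}_t$ is upper bounded by $1$ (again using $\widehat{g}_{t,i}(a)-\beta_t(a)\le g_i(a) \le 1$ under $\mathcal{E}(\delta)$), so $(\widehat{\gvec}_{t,i}-\boldsymbol{\beta}_t)^\top \uvec \le 1$. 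Combining the two bounds by convexity yields
\[
(\widehat{\gvec}_{t,i}-\boldsymbol{\beta}_t)^\top \tilde{\xvec} \le \Bigl(1-\tfrac{K}{T}\Bigr)\alpha_i + \tfrac{K}{T} \le \alpha_i + \tfrac{K}{T},
\]
so $\tilde{\xvec}\in\mathcal{S}_t^\circ$ and therefore $\tilde{\xvec}\in\widetilde{\mathcal{S}}_t$ for every $t$.

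There is no real obstacle here; the only subtle point is choosing the perturbation strength exactly equal to $K/T$, matching the amount of slack built into $\mathcal{S}_t^\circ$. This matching is the whole reason $\mathcal{S}_t^\circ$ was defined with the extra $K/T$ term rather than just using $\mathcal{S}_t$: the slack is precisely what is needed to absorb the projection of $\xvec^*$ onto the truncated simplex. If desired, one could instead apply the construction to $\xvec^\diamond$ in the hard-constraint regime, but the argument above only uses feasibility (not strict feasibility) of $\xvec^*$, so it also covers the soft-constraint setting.
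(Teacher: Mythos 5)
Your proof is correct and follows essentially the same route as the paper: the paper takes the $\ell_1$-projection of a feasible point onto $\Omega$, bounds its distance by $K/T$, and absorbs that via H\"older into the $K/T$ slack of $\mathcal{S}_t^\circ$, while you make the perturbation explicit as the mixture $(1-\tfrac{K}{T})\xvec^* + \tfrac{K}{T}\uvec$ — the same $K/T$-sized move into the truncated simplex. Your witness is time-independent and verifiably in every $\widetilde{\mathcal{S}}_t$ under $\mathcal{E}(\delta)$, so the argument is complete (modulo the standing assumption $T\ge K$, which you correctly flag).
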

The above lemma can be proven by showing that any $\xvec^\circ \in \Delta_K$ that satisfies the constraints in expectation is included in $\mathcal{S}_t^\circ$. Moreover, $\|\xvec^\circ-\widetilde\xvec\|_1\leq \frac{K}{T}$, where $\widetilde\xvec\coloneqq\argmin_{\xvec\in\Omega}\|\xvec-\xvec^{\circ}\|_1$. Thus, $\widetilde{\xvec}$ belongs to both $\Omega$ and $\mathcal{S}_t^\circ$. Additional details can be found in Appendix \ref{App:soft}. Intuitively, Lemma~\ref{lem:solution} states that, with high probability, at every round $t\in[T]$ there exists a strategy satisfying both the requirements of OMD with log-barrier and a ``suitably-relaxed'' safety condition. 
\section{Warm-Up: Small-Loss Guarantees in MABs with Soft Constraints}
\label{sec:soft_main}
\NewEnviron{highlightblue}{
\noindent
  \tikz[]{
    \node[
      fill=lightblue!30,
      rounded corners,
      inner sep=0pt,
      text width=\linewidth,
      align=left
    ] (X) {\BODY};
  }
}

\NewEnviron{highlightgreen}{
\noindent
  \tikz[]{
    \node[
      fill=green!10,
      rounded corners,
      inner sep=0pt,
      text width=\linewidth,
      align=left
    ] (X) {\BODY};
  }
}

\begin{algorithm}[!htp]
	\caption{\algnamesoft}
	\label{alg: bounded violations}
    \begin{algorithmic}[1]
		\Require Learning rate $\eta > 0$, confidence $\delta \in (0,1)$, thresholds $\{\alpha_i \}_{i \in [m]}$ 
		
		\State Define increase factor $\kappa\gets e^{\frac{1}{\ln T}}$ \label{alg1:line2}
		
		\State Initialize $\xvec_1\gets \frac{1}{K}\boldsymbol{1}$, $h_{1,a}\gets 2K$, $\eta_{1,a}\gets\eta$ for all $a\in[K]$, $\widehat{\gvec}_{1,i}\gets 0$ for all $i\in[m]$, $\boldsymbol\beta_1\gets \boldsymbol{1}$ \label{alg1:line3}
		
		\For{$t	\in[T]$}
		\State Select action $a_t\sim\xvec_t$ \label{alg1:line5}
		 
		\State Observe loss $\ell_t(a_t)$ and constraint costs $\boldsymbol{g}_{t,i}(a_t),\forall i \in [m]$ \label{alg1:line6}
        \Statex \begin{highlightblue}
        \State Update $\widehat{\gvec}_{t,i}$ and $\boldsymbol{\beta}_t$ as described in Section~\ref{sec:cost_estimation} \label{alg1:line8} 

        \State Compute the safe decision space ${\mathcal{S}}_t^\circ$ as described in Section~\ref{sec:truncated_safe}\label{alg1:line85} 

        \State Compute the truncated safe decision space $\widetilde{\mathcal{S}}_t \gets \Omega \cap \mathcal{S}_t^\circ$\label{alg1:line9}
        \end{highlightblue}
	\If{$\widetilde{\mathcal{S}}_t$ is not empty \label{alg1:line10}}%
        \Statex \begin{highlightgreen} 
        \State Compute $\widehat{\ell}_t(a)\gets\frac{\ell_t(a)\mathbbm{1}_{\{a_t=a\}}}{x_t(a)}, \forall a\in[K]$ \label{alg1:line7}
         
        \State $\xvec_{t+1}\gets\argmin_{\xvec\in\widetilde{\mathcal{S}}_t} \widehat{\lvec_t}^\top \xvec + D_{\psi_t}(\xvec, \xvec_t),$ where $\psi_t(\xvec)=\sum_{a=1}^K\frac{1}{\eta_{t,a}}\ln\frac{1}{x(a)}$  \label{alg1:line11}
		
		\For{$a\in[K]$}
		\If{$\frac{1}{x_{t+1}(a)}>h_{t,a}$ \label{alg1:line13}} 
		\State $h_{t+1,a}\gets\frac{2}{x_{t+1}(a)}$, $\eta_{t+1,a}\gets\eta_{t,a} \kappa$ 
		\Else 
		 \State $h_{t+1,a}\gets h_{t,a}$, $\eta_{t+1,a}\gets\eta_{t,a} $ \label{alg1:line17}
         \EndIf
         \EndFor
        \end{highlightgreen}
	\Else 
		 \State Select strategy $\xvec_{t+1}\sim \Omega$ randomly \label{alg1:line22}\;
	\EndIf
    
	\EndFor
    \end{algorithmic}
	\end{algorithm}
As a preliminary step toward our main result for the hard constraints settings, we design an algorithm with small-loss regret bound for \emph{soft constraints} settings.
Although these settings are technically easier than hard constraints ones, our algorithm incorporates some key components that are also central to the algorithm for hard constraints settings designed in Section~\ref{sec:hard_main}. 
Moreover, the algorithm may also be of independent interest for other learning settings with soft constraints.
%

\subsection{The \algnamesoft Algorithm}

Algorithm~\ref{alg: bounded violations} provides the pseudo-code of \emph{Constrained OMD with Log-Barrier} (\algnamesoft). 
At a high level, the algorithm implements the combination of OMD with log-barrier and safe decision spaces introduced in Section~\ref{sec:safe_playground}.
Indeed, Algorithm~\ref{alg: bounded violations} is conceptually split into two main blocks.
One block contains the set of instructions necessary to control constraint violations, highlighted in \textcolor{blue!40!black!50}{blue}.
The other block, highlighted in \textcolor{green!40!black!50}{green}, defines the OMD with log-barrier sub-routine.
%
Algorithm~\ref{alg: bounded violations} first defines a factor $\kappa$ that is employed to increase the learning rate $\eta_{t,a}$ for action $a$ if the probability of choosing the action falls below a certain threshold $h_{t,a}$ (Line~\ref{alg1:line2}).
%
%
At each round, after playing an action and observing some feedback (Lines~\ref{alg1:line5}-\ref{alg1:line6}), the algorithm updates empirical means and confidence bounds for constraint costs (Line~\ref{alg1:line8}). Then, it builds the \emph{truncated safe decision space} $\widetilde{\mathcal{S}_t}$ (Line~\ref{alg1:line9}), as described in Section~\ref{sec:safe_playground}.
If this set is \emph{not} empty (Line~\ref{alg1:line10}), then an update of OMD with log-barrier is performed over $\widetilde{\mathcal{S}}_t$ (Lines~\ref{alg1:line7}-\ref{alg1:line11}).
Moreover, if the probability specified by computed strategy is too \textit{small} for some action $a \in [K]$, \emph{i.e.}, $\nicefrac{1}{x_{t+1}(a)}\geq h_{t,a}$, then the learning rate $\eta_{t,a}$ is increased by a $\kappa$ factor and the threshold is increased to $\nicefrac{2}{x_{t+1}(a)}$ (Line~\ref{alg1:line13}-\ref{alg1:line17}).
Finally, if $\widetilde{\mathcal{S}}_t$ is empty, then a strategy is sampled randomly from the truncated simplex (Line~\ref{alg1:line22}).

In the following, we provide the theoretical guarantees attained by \algnamesoft. 
We start by providing an upper bound on the cumulative positive constraint violations attained by the algorithm.
\begin{restatable}{theorem}{violationsoft}
	\label{thm:viol}
	Let $\delta \in (0,1)$. Then, with probability at least $1-2\delta$, the \algnamesoft algorithm suffers cumulative positive constraint violations $V_T \le \mathcal{O}\left( \sqrt{KT\ln\left(\nicefrac{TKm}{\delta}\right)} \right).$
\end{restatable}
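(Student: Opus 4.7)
The plan is to condition throughout on the good event $\mathcal{E}(\delta)$ from Lemma~\ref{lemma:global_hoeffding}, under which Lemma~\ref{lem:solution} guarantees that the truncated safe decision space $\widetilde{\mathcal{S}}_t$ is non-empty at every round $t$. Thus the ``else'' branch on Line~\ref{alg1:line22} is never executed and the algorithm always performs the OMD update, which in particular forces $\xvec_{t+1} \in \mathcal{S}_t^\circ$ for every $t \in [T-1]$. Since $\xvec_1$ is the uniform distribution, its contribution to any single constraint's violation is at most $1$; the non-trivial task will therefore be to bound the per-round violations from round $2$ onward.

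For any $i \in [m]$ and $t \geq 2$, the membership $\xvec_t \in \mathcal{S}_{t-1}^\circ$ gives $(\widehat{\gvec}_{t-1,i} - \boldsymbol{\beta}_{t-1})^\top \xvec_t \leq \alpha_i + K/T$, while the confidence bound from Lemma~\ref{lemma:global_hoeffding} yields $\gvec_i \leq \widehat{\gvec}_{t-1,i} + \boldsymbol{\beta}_{t-1}$ componentwise. Adding and subtracting $\boldsymbol{\beta}_{t-1}^\top \xvec_t$ I would obtain
\[
\gvec_i^\top \xvec_t - \alpha_i \;\leq\; \frac{K}{T} + 2\, \boldsymbol{\beta}_{t-1}^\top \xvec_t,
\]
which is the same bound for every $i$ and is always non-negative, hence it also upper-bounds $\max_i [\gvec_i^\top \xvec_t - \alpha_i]^+$. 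Summing over $t$ and absorbing the first round, I arrive at $V_T \leq 1 + K + 2\sum_{t=1}^{T-1} \boldsymbol{\beta}_t^\top \xvec_{t+1}$.

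To bound the empirical sum I would first pass to the played actions. Because $\boldsymbol{\beta}_t$ and $\xvec_{t+1}$ are both measurable w.r.t.\ the history up to round $t$ and $a_{t+1} \sim \xvec_{t+1}$, the sequence $\bigl(\beta_t(a_{t+1}) - \boldsymbol{\beta}_t^\top \xvec_{t+1}\bigr)_t$ is a $[-1,1]$-bounded martingale difference sequence, so Azuma--Hoeffding gives $\sum_{t=1}^{T-1}\boldsymbol{\beta}_t^\top \xvec_{t+1} \leq \sum_{t=1}^{T-1}\beta_t(a_{t+1}) + \sqrt{2T \ln(1/\delta)}$ with probability at least $1-\delta$. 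For the empirical sum I would reorganize by action: the rounds in which $a$ is played contribute terms of the form $2\sqrt{\ln(TKm/\delta)/n}$ for $n = 1, \ldots, N_T(a)$, which telescope via $\sum_{n=1}^N 1/\sqrt{n} \leq 2\sqrt{N}$. Applying Cauchy--Schwarz to $\sum_a \sqrt{N_T(a)} \leq \sqrt{K \sum_a N_T(a)} = \sqrt{KT}$ then delivers the announced $\mathcal{O}(\sqrt{KT\ln(TKm/\delta)})$ bound, and a union bound over $\mathcal{E}(\delta)$ and the martingale concentration explains the $1-2\delta$ confidence.

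The delicate point to get right is the one-step timing: $\boldsymbol{\beta}_t$ is updated in Line~\ref{alg1:line8} \emph{after} the observation at round $t$, so what lies in $\mathcal{S}_t^\circ$ is $\xvec_{t+1}$ rather than $\xvec_t$. I expect the main care to go into maintaining this lag consistently so that the martingale in the previous paragraph is well-defined (the crucial fact being that $\xvec_{t+1}$ is $\mathcal{F}_t$-measurable while $a_{t+1}$ is not), and in verifying that both the $K/T$ slack introduced by the relaxed safe space and the $\sqrt{T \ln(1/\delta)}$ martingale fluctuation remain absorbed into the final $\widetilde{\mathcal{O}}(\sqrt{KT})$ rate rather than producing a larger dominant term.
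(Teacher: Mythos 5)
Your proposal is correct and follows essentially the same route as the paper's proof: bound the per-round violation by $\frac{K}{T}+2\boldsymbol{\beta}_{t-1}^\top\xvec_t$ using membership in $\mathcal{S}_{t-1}^\circ$ together with Lemma~\ref{lemma:global_hoeffding}, pass from $\boldsymbol{\beta}_{t-1}^\top\xvec_t$ to the played actions via Azuma, telescope the per-action sums of $1/\sqrt{n}$, and finish with Cauchy--Schwarz and a union bound. Your explicit treatment of the one-step indexing lag and of the first round is slightly more careful than the paper's write-up, but it is the same argument.
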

The proof of Theorem~\ref{thm:viol} relies on Lemma~\ref{lem:solution}, 
and it can be found in Appendix \ref{proofs:violation}. The bound provided in the theorem matches, up to constant factors, the lower bound provided in Theorem 3 of \citep{safetree}.
We are now ready to prove the main result of this section, which is a small-loss bound on the cumulative regret suffered by Algorithm~\ref{alg: bounded violations}. This is done by means of the following theorem.
\begin{restatable}{theorem}{regretsoft} 
	\label{thm:reg_soft}
	Let $\delta \in (0,1)$ and $\eta= \min\left\{\nicefrac{1}{40H\ln T\ln\left(\nicefrac{H}{\delta}\right)}, \sqrt{\nicefrac{K}{\sum_{t=1}^{T}\lvec_t^\top\xvec^* \ln\left(\nicefrac{1}{\delta}\right)}}\right\}$, where $H \coloneqq \ln\left(\nicefrac{\lceil\ln(T)\rceil \lceil3\ln(T)\rceil}{\delta}\right)$. Then, with probability at least $1-4\delta$, \algnamesoft suffers a cumulative regret bounded as
	$R_T(\lvec_{1:T}) \le \widetilde{\mathcal{O}}\left( \sqrt{K\sum_{t=1}^{T}\lvec_t^\top\xvec^* \ln\left(\nicefrac{1}{\delta}\right)}\right).
    $
\end{restatable}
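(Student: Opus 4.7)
The plan is to split the argument into a \emph{safety} step that guarantees the algorithm never enters the fallback branch (Line~\ref{alg1:line22}) and a \emph{regret} step that applies the OMD--with--log--barrier template of~\citet{lee2020bias} over the truncated safe decision space $\widetilde{\mathcal{S}}_t$. First I would condition on the event $\mathcal{E}(\delta)$ of Lemma~\ref{lemma:global_hoeffding}; by Lemma~\ref{lem:solution} this implies $\widetilde{\mathcal{S}}_t \neq \emptyset$ for every $t$, so Line~\ref{alg1:line22} is never executed. The construction underlying Lemma~\ref{lem:solution} also supplies a surrogate $\widetilde{\xvec}^* \in \bigcap_{t}\widetilde{\mathcal{S}}_t$ with $\|\widetilde{\xvec}^* - \xvec^*\|_1 \le K/T$, so the regret against $\xvec^*$ and the regret against $\widetilde{\xvec}^*$ differ by at most $\sum_{t=1}^T \lvec_t^\top (\widetilde{\xvec}^* - \xvec^*) \le K$, which will be absorbed into the stated bound.

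The core is then to control $\sum_t \lvec_t^\top (\xvec_t - \widetilde{\xvec}^*)$. I would apply the standard OMD analysis to the update of Line~\ref{alg1:line11}: the importance-weighted regret $\sum_t \scpr{\widehat{\lvec}_t}{\xvec_t - \widetilde{\xvec}^*}$ decomposes into a Bregman-divergence term controlled by $\psi_1(\widetilde{\xvec}^*)$ and a per-round stability term that, with the log-barrier potential and the learning-rate doubling rule of Lines~\ref{alg1:line13}--\ref{alg1:line17}, is $\mathcal{O}(\eta \, \ell_t(a_t)^2 / x_t(a_t))$ up to polylogarithmic factors. The increase factor $\kappa = e^{1/\ln T}$ together with the threshold scheme guarantees that every $\eta_{t,a}$ stays within a constant factor of $\eta$ and is updated only $\mathcal{O}(\ln T)$ times per arm; since $\widetilde{\xvec}^*(a) \ge 1/T$, the divergence contribution is $\mathcal{O}(K \ln T / \eta)$. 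A self-bounding step follows: $\ell_t(a_t)^2/x_t(a_t) \le \widehat{\lvec}_t^\top \widetilde{\xvec}^* + \widehat{\lvec}_t^\top (\xvec_t - \widetilde{\xvec}^*)$ after importance-weighted accounting, so summing over $t$ expresses the stability sum as a multiple of $\sum_t \lvec_t^\top \widetilde{\xvec}^*$ plus a small fraction of the regret itself, which I move to the left-hand side. Invoking a Freedman-type inequality to pass from $\sum_t \widehat{\lvec}_t^\top \widetilde{\xvec}^*$ to $\sum_t \lvec_t^\top \widetilde{\xvec}^*$ and then to $\sum_t \lvec_t^\top \xvec^* + K$ with probability $1-\mathcal{O}(\delta)$ yields the inequality $R_T \lesssim K\ln T/\eta + \eta \, L^* \cdot \mathrm{polylog} + \text{l.o.t.}$, where $L^* = \sum_t \lvec_t^\top \xvec^*$. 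Optimising $\eta$ under the cap $1/(40 H \ln T \ln(H/\delta))$ required by the log-barrier stability argument reproduces the stated $\widetilde{\mathcal{O}}(\sqrt{K L^* \ln(1/\delta)})$ rate.

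The main obstacle I expect is the bookkeeping of high-probability events. Four of them must hold simultaneously: the cost-estimation event $\mathcal{E}(\delta)$ underpinning Lemma~\ref{lem:solution}, the Freedman-type concentration for $\sum_t \widehat{\lvec}_t^\top \widetilde{\xvec}^*$, the analogous concentration for the aggregate stability sum, and the event needed in~\citet{lee2020bias}'s log-barrier analysis to control the magnitudes $1/x_t(a_t)$ uniformly across the $\mathcal{O}(K \ln T)$ possible configurations of $\{\eta_{t,a}\}$. The parameter $H = \ln(\lceil\ln T\rceil \lceil 3\ln T\rceil/\delta)$ in the hypothesis on $\eta$ is exactly the factor produced by this last union bound; keeping the pieces of the self-bounding inequality consistent through the union bound, without degrading the small-loss dependence to $\sqrt{T}$, is the delicate part. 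Once the events are aligned, a single union bound delivers the $1 - 4\delta$ confidence advertised in the statement.
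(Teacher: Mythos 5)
Your route is essentially the paper's: condition on $\mathcal{E}(\delta)$, replace $\xvec^*$ by a surrogate comparator $\uvec\in\bigcap_{t}\widetilde{\mathcal{S}}_t$ at cost $K$ via Lemma~\ref{lem:solution}, run the OMD--with--log--barrier regret bound over the (convex, always non-empty) sets $\widetilde{\mathcal{S}}_t$, control the two estimation terms $\sum_t(\lvec_t-\widehat{\lvec}_t)^\top\xvec_t$ and $\sum_t(\widehat{\lvec}_t-\lvec_t)^\top\uvec$ by Freedman and by the strengthened Freedman inequality of \citet{lee2020bias}, and close with the self-bounding step $\eta\sum_t\lvec_t^\top\xvec_t\le\tfrac12 R_T+\eta\sum_t\lvec_t^\top\xvec^*$; the union-bound accounting over four events matches the $1-4\delta$ confidence.

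Two points need fixing, one of which would break the argument as literally written. First, the per-round stability term of the log-barrier potential is \emph{not} $\mathcal{O}(\eta\,\ell_t(a_t)^2/x_t(a_t))$ (that is the entropy/Exp3 local norm); it is $\sum_a\eta_{t,a}x_t^2(a)\widehat{\ell}_t^2(a)=\eta_{t,a_t}\ell_t^2(a_t)\le 5\eta\,\ell_t(a_t)$, since the increase factor $\kappa=e^{1/\ln T}$ is applied at most $\mathcal{O}(\ln T)$ times per arm. Your subsequent inequality $\ell_t(a_t)^2/x_t(a_t)\le\widehat{\lvec}_t^\top\uvec+\widehat{\lvec}_t^\top(\xvec_t-\uvec)$ is false in general (its right-hand side equals $\ell_t(a_t)$), and the quantity $\sum_t\ell_t(a_t)^2/x_t(a_t)$ cannot be self-bounded by $\sum_t\lvec_t^\top\xvec^*$, so the small-loss rate would be lost; with the correct local norm the chain $\eta\ell_t(a_t)\rightsquigarrow\eta\lvec_t^\top\xvec_t\rightsquigarrow\tfrac12R_T+\eta\lvec_t^\top\xvec^*$ goes through exactly as you intend. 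Second, you attribute the cap $\eta\le 1/(40H\ln T\ln(H/\delta))$ to the stability analysis, but its actual role is the cancellation at the heart of the increasing-learning-rate trick: the OMD bound carries the \emph{negative} term $-\boldsymbol{h}_T^\top\uvec/(10\eta\ln T)$, and the strengthened Freedman bound on $\sum_t(\widehat{\lvec}_t-\lvec_t)^\top\uvec$ produces data-dependent range terms proportional to $\boldsymbol{h}_T^\top\uvec$ (after AM--GM, of size roughly $H^2\eta\ln T\sum_t\lvec_t^\top\uvec\ln(H/\delta)+\boldsymbol{h}_T^\top\uvec/(20\eta\ln T)+2H\boldsymbol{h}_T^\top\uvec\ln(H/\delta)$); the stated cap on $\eta$ is precisely what makes the negative term absorb these. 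Without spelling out this cancellation the bound on the third term retains an uncontrolled $\boldsymbol{h}_T^\top\uvec$, which can be as large as $T$.
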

Theorem~\ref{thm:reg_soft} is proved by noticing that, even though the strategy update of Algorithm~\ref{alg: bounded violations} works on changing decision spaces $\widetilde{\mathcal{S}}_t$, projecting does \emph{not} prevent the OMD sub-routine to guarantee small-loss bounds. The proof can be found in Appendix \ref{proofs:regretsoft}.
Some remarks are in order.
\begin{remark}[Tightness of the bound in Theorem~\ref{thm:reg_soft}]
    Let us remark that the bound presented in Theorem \ref{thm:reg_soft} is tight, up to constants and logarithmic terms.
    Indeed, Theorem 3 of \citep{gerchinovitz2016refined} provides a regret lower bound of $\Omega(\sqrt{KL^*})$ in unconstrained MABs, where $L^*$ is the cumulative loss of an action that is optimal in hidsight.
    The bound in Theorem~\ref{thm:reg_soft} scales with the total loss of an optimal-in-hindsight strategy $\xvec^*$, which can be randomized.
    However, since in unconstrained settings an optimal (randomized) strategy is as powerful as  an optimal action, the lower bound in Theorem 3 of \citep{gerchinovitz2016refined} carries over to our setting as well.
\end{remark}
\begin{remark}[Knowledge of $\sum_{t=1}^T \boldsymbol{\ell}_t^\top \xvec^*$]
\label{remark:knowledge}
    Assuming knowledge of $L^*$ to set the learning rate is standard in the literature on small-loss bounds. As discussed in Remark 2 of \citep{allenberg2006hannan} and Remark 1 of \citep{lee2020bias}, a doubling trick can relax this requirement, while ensuring that the regret bound does not deteriorate. This procedure is described in Appendix C.3 of \citep{lee2020closer}. In our case, we require that $\sum_{t=1}^T \boldsymbol{\ell}_t^\top \xvec^*$ is known, instead of $L^*$. However the considerations made for $L^*$ still hold, and our algorithms can be made adaptive w.r.t. this quantity.
\end{remark}

\section{Data-Dependent Guarantees in MABs with Hard Constraints}
\label{sec:hard_main}

This section is entirely devoted to the main contribution of this paper, which is an algorithm that achieves a (tight) \emph{data-dependent} regret bound in constrained MABs with \emph{hard constraints}.
The section begins by introducing the algorithm, which builds on the \algnamesoft algorithm introduced in Section~\ref{sec:soft_main} for soft constraints. 
%
%
%
%
After proving the guarantees attained by the algorithm, the section ends by proving a lower bound demonstrating that the regret bound of the algorithm is tight. 
%
%
%

\subsection{The \algnamehard Algorithm}

\NewEnviron{highlightpurple}{
\noindent
  \tikz[]{
    \node[
      fill=purple!10,
      rounded corners,
      inner sep=0pt,
      text width=\linewidth,
      align=left
    ] (X) {\BODY};
  }
}

\begin{algorithm}[!htp]
\caption{\algnamehard}
\label{alg: hard}
\begin{algorithmic}[1]
 \Require 
Learning rate $\eta> 0$, confidence parameter $\delta \in (0,1)$, thresholds $\{ \alpha_i \}_{i \in [m]}$, \emph{strictly} feasible strategy $\xvec^\diamond \in \Delta_K$ with its associated costs $\{\theta_i \}_{i \in [m]}$ (see Assumption~\ref{ass:knowldege}) \label{line:require_hard} 
	
	\State Define increase factor $\kappa\gets e^{\frac{1}{\ln T}}$ 
	
	\State Initialize $\widetilde\xvec_1\gets\frac{1}{K}\boldsymbol{1}$, $\rho_{1,a}\gets2K$, $\eta_{1,a}\gets\eta$  for all $a\in[K]$, $\widehat{\gvec}_{1,i}\gets0$ for all $i\in[m]$, $\boldsymbol\beta_1\gets\boldsymbol{1}$
    \Statex \begin{highlightpurple}
    \State Initialize $\gamma_0\gets\max_{i\in[m]}\frac{1-\alpha_i}{1-\theta_i}$ \label{line:init_hard_1}
	
	\State Select $\xvec_1\gets\gamma_0\xvec^\diamond + (1-\gamma_0)\widetilde\xvec_1$ \label{line:init_hard_2}
    \end{highlightpurple}
	\For{$t	\in[T]$} 
		\State Select action $a_t\sim\xvec_t$ \label{alg2:line5}
		 
		\State Observe loss $\ell_t(a_t)$ and constraint costs $\boldsymbol{g}_{t,i}(a_t),$ $\forall i \in [m]$ 
        \Statex \begin{highlightblue}
        \State Update $\widehat{\gvec}_{t,i}$ and $\boldsymbol{\beta}_t$ as described in Section~\ref{sec:cost_estimation}\label{alg2:line8} 

        \State Compute the safe decision space ${\mathcal{S}}_t^\circ$ as described in Section~\ref{sec:truncated_safe} \label{alg2:line85} 

        \State Compute the truncated safe decision space $\widetilde{\mathcal{S}}_t \gets \Omega \cap \mathcal{S}_t^\circ$\label{alg2:line9}
        \end{highlightblue}
\If {$\widetilde{\mathcal{S}}_t$ is not empty} \label{alg2:line10}
    \Statex \begin{highlightgreen}
        \State   Compute $\widehat{\ell}_t(a)\gets\frac{\ell_t(a)\mathbbm{1}_{\{a_t=a\}}}{x_t(a)},\forall a\in[K]$ \label{alg2:line7}
         
        \State $\xvec_{t+1}\gets\argmin_{\xvec\in\widetilde{\mathcal{S}}_t} \widehat{\lvec_t}^\top \xvec + D_{\psi_t}(\xvec, \xvec_t)$, where  $\psi_t(\xvec)=\sum_{a=1}^K\frac{1}{\eta_{t,a}}\ln\frac{1}{x(a)}$  \label{alg2:line11}
		
    \For{$a\in[K]$}
	\If{$\frac{1}{x_{t+1}(a)}>h_{t,a}$} \label{alg2:line13}
	\State $h_{t+1,a}\gets\frac{2}{x_{t+1}(a)}$,          $\eta_{t+1,a}\gets\eta_{t,a} \kappa$   
	\Else 
	\State $h_{t+1,a}\gets h_{t,a}$, $\eta_{t+1,a}\gets\eta_{t,a} $\label{alg2:line17}
	\EndIf
    \EndFor
    \end{highlightgreen}
    \Statex \begin{highlightpurple}
    \State Compute the combination factor:
	$$
        \gamma_t \gets \begin{cases}
		\max_{  i\in[m]}\left\{\frac{\min\{(\widehat{\gvec}_{t,i}+\boldsymbol{\beta}_t)^\top \widetilde\xvec_{t+1},1\}-\alpha_i}{\min\{(\widehat{\gvec}_{t,i}+\boldsymbol{\beta}_t)^\top \widetilde\xvec_{t+1},1\}-\theta_i}\right\} &   \text{if} \ \ \mathcal{E} \text{ holds}\\
				0 & \text{otherwise}
	\end{cases},\;$$ \label{line:combination_factor} 
        \State where $\mathcal{E}= \{\exists i\in[m]: (\widehat{\gvec}_{t,i}+\boldsymbol{\beta}_t)^\top \widetilde\xvec_{t+1} >\alpha_i\}$
    \Else \end{highlightpurple}
    \State Select strategy $\widetilde{\xvec}_{t+1}\sim \Omega$ randomly
    \Statex \begin{highlightpurple}
    \State Set the combination factor to $\gamma_t\gets1$ \label{line:gamma_1}
\EndIf
\State
$\xvec_{t+1}\gets \gamma_t\xvec^{\diamond}+(1-\gamma_t)\widetilde{\xvec}_{t+1}$ \label{line:combination} 
\EndFor \end{highlightpurple}
\end{algorithmic}
\end{algorithm}

Algorithm~\ref{alg: hard} provides the pseudo-code of \emph{Safe OMD with log-barrier} (\algnamehard).
Notice that the algorithm takes additional inputs compared to \algnamesoft.
Specifically, it takes as inputs a \emph{strictly feasible} strategy, \emph{i.e.}, a strategy $\xvec^\diamond \in \Delta_K$ as defined in Assumption~\ref{ass:knowldege}, and its associates costs $\{ \theta_i\}_{i \in [m]}$.

Algorithm \ref{alg: hard} highlights in \textcolor{purple!30}{pink} its differences with respect to Algorithm \ref{alg: bounded violations}.
The key difference between \algnamesoft and \algnamehard is that the strategy chosen by the latter is \emph{not} readily the one selected through the OMD update.
Specifically, at each round $t $, \algnamehard plays a convex combination between the strictly feasible strategy $\xvec^\diamond$ given as input and the one selected by OMD, denoted $\widetilde\xvec_t$.
The combination factor $\gamma_{t}$ is chosen in an adaptive way to guarantee that the resulting strategy $\xvec_{t}$ satisfies the constraints with high probability.
Intuitively, the combination factor $\gamma_{t}$ weights how safe is to play the strategy computed by OMD rather than the strictly feasible strategy.
If the strategy $\widetilde{\xvec}_{t}$ produced by the OMD update satisfies the constraints with high probability, then $\gamma_{t}$ is set to zero, and the algorithm selects strategy $\widetilde\xvec_t$. 
A larger $\gamma_{t}$ weights more the strictly feasible strategy $\xvec^\diamond$ than $\widetilde\xvec_t$. 
In the first rounds, since confidence intervals for cost estimates are large, $\gamma_{t}$ is strictly greater than zero, while, as the confidence intervals become smaller, $\gamma_t$ approaches zero.
As in Algorithm \ref{alg: bounded violations}, when the truncated safe decision space is \emph{not} empty, the algorithm runs a step of the OMD with log-barrier sub-routine.
Then, a new combination factor is computed (Line \ref{line:combination_factor}).
%
%
On the other hand, if the truncated safe decision space is empty, then $\gamma_t$ is set to zero (Line \ref{line:gamma_1}), and the algorithm thus selects the known strictly feasible strategy.
In Figure~\ref{fig:1} in Appendix \ref{app:image}, we provide a graphical intuition on how $\xvec_t$ is selected when $K=3$.

\subsection{Theoretical Guarantees of \algnamehard}
In this section, we provide the theoretical guarantees attained by \algnamehard. 
We start by showing that \algnamehard satisfies the constraints at every round with high probability.
\begin{restatable}{theorem}{safety}
\label{thm:safety}
	Let $\delta\in(0,1)$. With probability at least $1-\delta$, \algnamehard guarantees that $ \gvec_i^\top\xvec_{t}\leq \alpha_i$ holds for every action $a \in [K]$, constraint $i \in [m]$, and round $t \in [T]$.
\end{restatable}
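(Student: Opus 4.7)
The plan is to condition on the good event $\mathcal{E}(\delta)$ from Lemma~\ref{lemma:global_hoeffding}, which holds with probability at least $1-\delta$, and then show \emph{deterministically} that the construction of the combination factor $\gamma_t$ forces $\xvec_t$ into the feasible region for every $t\in[T]$ and every constraint $i\in[m]$. The core observation is that, under $\mathcal{E}(\delta)$, for every strategy $\xvec\in\Delta_K$ and every constraint $i$ we have the pointwise upper confidence bound
$$
\gvec_i^\top \xvec \;\le\; (\widehat{\gvec}_{t,i}+\boldsymbol{\beta}_t)^\top\xvec,
$$
so it suffices to ensure that the \emph{optimistic} cost of the played strategy does not exceed $\alpha_i$.

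The key step is to combine this with the convex combination rule $\xvec_{t+1}=\gamma_t\xvec^\diamond+(1-\gamma_t)\widetilde{\xvec}_{t+1}$. Writing $U_{t,i}:=\min\{(\widehat{\gvec}_{t,i}+\boldsymbol{\beta}_t)^\top\widetilde{\xvec}_{t+1},1\}$ and using $\gvec_i^\top\widetilde{\xvec}_{t+1}\le U_{t,i}$ (since $\gvec_i\in[0,1]^K$ gives the trivial bound by $1$) together with $\gvec_i^\top\xvec^\diamond=\theta_i$, I would bound
$$
\gvec_i^\top \xvec_{t+1} \;\le\; \gamma_t\,\theta_i + (1-\gamma_t)U_{t,i} \;=\; U_{t,i} - \gamma_t(U_{t,i}-\theta_i).
$$
Requiring the right-hand side to be $\le \alpha_i$ is equivalent to $\gamma_t\ge (U_{t,i}-\alpha_i)/(U_{t,i}-\theta_i)$ whenever $U_{t,i}>\theta_i$, which is exactly the quantity maximized over $i$ in Line~\ref{line:combination_factor}. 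Taking the maximum guarantees the inequality simultaneously for all constraints. I then need to verify that the prescribed $\gamma_t$ is a valid combination weight in $[0,1]$: the denominator is strictly positive because Assumption~\ref{ass:knowldege} gives $\theta_i\le\alpha_i-\rho<U_{t,i}$ whenever the event $\mathcal{E}$ of Line~\ref{line:combination_factor} is active, and $\gamma_t\le 1$ follows from $\theta_i\le\alpha_i$.

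The remaining cases are easy and must be checked for completeness. When $\mathcal{E}$ (the existence of an $i$ with $U_{t,i}>\alpha_i$) does not hold, $\gamma_t=0$ and feasibility follows directly from $\gvec_i^\top\widetilde{\xvec}_{t+1}\le U_{t,i}\le\alpha_i$. When the truncated safe decision space is empty, $\gamma_t=1$ and $\xvec_{t+1}=\xvec^\diamond$, which is feasible by Assumption~\ref{ass:knowldege}. Finally, for the initial round, the same linear combination argument with the crude bound $\gvec_i^\top\widetilde{\xvec}_1\le 1$ yields $\gvec_i^\top\xvec_1\le 1-\gamma_0(1-\theta_i)\le\alpha_i$ precisely because $\gamma_0=\max_i (1-\alpha_i)/(1-\theta_i)$ (Line~\ref{line:init_hard_1}).

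The main obstacle is really just bookkeeping: making sure that (i) every strategy actually played at some round $t$ is covered by one of the three cases above, (ii) the denominator $U_{t,i}-\theta_i$ is handled correctly when $U_{t,i}\le\alpha_i$ (in which case the ratio is nonpositive and inactive in the max, so it does not perturb the argument), and (iii) the whole chain of inequalities rests only on $\mathcal{E}(\delta)$, so the $1-\delta$ high-probability guarantee follows from a single application of Lemma~\ref{lemma:global_hoeffding} rather than any additional concentration step.
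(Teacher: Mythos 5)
Your proposal is correct and follows essentially the same route as the paper's proof: condition on $\mathcal{E}(\delta)$, use the optimistic bound $\gvec_i^\top\widetilde{\xvec}_{t+1}\le\min\{(\widehat{\gvec}_{t,i}+\boldsymbol{\beta}_t)^\top\widetilde{\xvec}_{t+1},1\}$, and verify via the convex combination that the algebraic form of $\gamma_t$ pushes the pessimistic cost down to exactly $\alpha_i$, with the degenerate cases $\gamma_t=0$ and $\gamma_t=1$ handled separately. The only difference is cosmetic: you unify the paper's two subcases of $\gamma_t\in(0,1)$ (the clipped and unclipped forms) through the single quantity $U_{t,i}$ and make explicit that the max over $i$ yields feasibility for all constraints simultaneously, which the paper leaves implicit.
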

Theorem \ref{thm:safety} can be proven by analyzing the behavior of the combination factor $\gamma_t$. We show that $\gamma_t$ is large enough to compensate the violations potentially suffered by the strategy $\widetilde\xvec_{t+1}$ computed by the OMD update. Specifically, $\gamma_t=0$ when $\widetilde{\xvec}_{t+1}$ satisfies the constraints with high probability. Otherwise, $\gamma_t$ is proportional to the pessimistic violation that $\widetilde{\xvec}_{t+1}$ would suffer. Assuming that $\xvec^\diamond$ is strictly feasible implies that $\gamma_t<1$ in every round $t$ in which the truncated safe decision space is non-empty. To see this, notice that $\gamma_t\leq \max_{i\in[m]}\frac{1-\alpha_i}{1-\theta_i}\leq\max_{i\in[m]} \frac{1-\alpha_i}{1-\alpha_i+\rho}<1$. 
Thus, a minimum amount of exploration is always guaranteed. When the truncated safe decision space is empty, the algorithm uses the strictly feasible strategy. 

We now analyze the regret suffered by \algnamehard. Before presenting our main result, we introduce some technical lemmas that are useful to understand the nature of the regret bound.
\begin{restatable}{lemma}{firstpart}
	\label{regret:first_part}
	Let $\delta\in(0,1)$ and $\rho\geq \frac{12K}{T}$. Then, with probability at least $1-2\delta$, \algnamehard satisfies:
	\begin{equation}
    \label{eq:gamma_t}
		R_T^\diamond(\lvec_{1:T})\leq {\mathcal{O}}\left(\frac{K}{\rho}\sqrt{\sum_{t=1}^T\left(\lvec_t^\top(\xvec^\diamond - \xvec^*)\right)^2}\cdot\ln\left(\frac{KTm}{\delta}\right) + \frac{K}{\rho^6}\ln\left(\frac{KTm}{\delta}\right)\right),
	\end{equation}
    where $R_T^\diamond(\lvec_{1:T}) \coloneqq \sum_{t=1}^T \gamma_{t-1}{\boldsymbol\ell}_t^\top(\xvec^\diamond-\xvec^*)$.
\end{restatable}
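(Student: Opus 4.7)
The target quantity $R_T^\diamond(\lvec_{1:T}) = \sum_t \gamma_{t-1}\lvec_t^\top(\xvec^\diamond - \xvec^*)$ is a weighted sum of (signed) per-round suboptimality gaps, where the weight $\gamma_{t-1}$ measures how unsafe the OMD iterate $\widetilde{\xvec}_t$ is. The overall strategy is to show that the weights $\gamma_{t-1}$ are tightly controlled by the confidence widths $\boldsymbol{\beta}_{t-1}$, then Cauchy--Schwarz against $\bigl(\lvec_t^\top(\xvec^\diamond-\xvec^*)\bigr)^2$, and finally bound $\sum_t\gamma_{t-1}^2$ via concentration of the arm-pull counts.

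\textbf{Step 1 (pointwise bound on $\gamma_{t-1}$).} Condition on the good event $\mathcal{E}(\delta)$ of Lemma~\ref{lemma:global_hoeffding}. Combining the explicit formula in Line~\ref{line:combination_factor} with the defining inequality $(\widehat{\gvec}_{t-1,i} - \boldsymbol{\beta}_{t-1})^\top \widetilde{\xvec}_t \leq \alpha_i + K/T$ of $\mathcal{S}_{t-1}^\circ$ yields, for the active constraint $i$,
\[
\gamma_{t-1} \;\le\; \frac{(\widehat{\gvec}_{t-1,i}+\boldsymbol{\beta}_{t-1})^\top\widetilde{\xvec}_t - \alpha_i}{\alpha_i - \theta_i} \;\le\; \frac{1}{\rho}\left(\frac{K}{T} + 2\,\boldsymbol{\beta}_{t-1}^\top\widetilde{\xvec}_t\right),
\]
using the denominator bound $\alpha_i-\theta_i\geq\rho$ from Assumption~\ref{ass:knowldege}. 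Slater's margin also implies $1-\gamma_{t-1}\geq\rho$, so the pointwise inequality $\widetilde{x}_t(a)\leq x_t(a)/\rho$ holds (since $\xvec_t=\gamma_{t-1}\xvec^\diamond+(1-\gamma_{t-1})\widetilde{\xvec}_t$).

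\textbf{Step 2 (Cauchy--Schwarz split).} Write
\[
R_T^\diamond(\lvec_{1:T}) \;\leq\; \sqrt{\sum_{t=1}^{T}\gamma_{t-1}^2}\cdot\sqrt{\sum_{t=1}^{T}\bigl(\lvec_t^\top(\xvec^\diamond-\xvec^*)\bigr)^2}.
\]
The second factor is already the one appearing in the claim, so everything reduces to bounding $\sum_t\gamma_{t-1}^2$.

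\textbf{Step 3 (bookkeeping the confidence widths).} Squaring Step~1 and applying $(\boldsymbol{\beta}^\top\widetilde{\xvec})^2 \leq \sum_a \beta(a)^2\widetilde{x}(a)$ (Cauchy--Schwarz weighted by $\widetilde{x}(a)$) gives
\[
\sum_{t=1}^T\gamma_{t-1}^2 \;\le\; \frac{2K^2}{\rho^2 T} \;+\; \frac{8}{\rho^2}\sum_{t=1}^T\sum_{a=1}^K \beta_{t-1}(a)^2\,\widetilde{x}_t(a).
\]
Plugging in $\beta_{t-1}(a)^2 \leq 4\ln(KTm/\delta)/\max\{1,N_{t-1}(a)\}$ and $\widetilde{x}_t(a)\leq x_t(a)/\rho$ from Step~1 reduces the task to controlling, for each arm $a$, the quantity $\sum_{t=1}^T x_t(a)/\max\{1,N_{t-1}(a)\}$.

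\textbf{Step 4 (concentration of arm-pull counts).} Applying a Freedman-type inequality to the martingale $\bigl\{N_t(a)-\sum_{\tau\leq t}x_\tau(a)\bigr\}_t$, one obtains with high probability $N_{t-1}(a) \gtrsim \sum_{\tau<t} x_\tau(a)$ whenever the expected cumulative count exceeds a logarithmic threshold; the initial ``burn-in'' phase contributes only a polylogarithmic tail. A standard integral-comparison argument then yields $\sum_{t=1}^{T} x_t(a)/\max\{1,N_{t-1}(a)\} \leq \widetilde{\mathcal{O}}(\ln(KTm/\delta))$ uniformly in $a$, so
\[
\sum_{t=1}^T\gamma_{t-1}^2 \;\leq\; \widetilde{\mathcal{O}}\!\left(\frac{K}{\rho^3}\,\ln^2(KTm/\delta)\right).
\]

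\textbf{Step 5 (conclusion via AM--GM).} Substituting into Step~2 gives a single-term bound of order $\widetilde{\mathcal{O}}\bigl(\sqrt{(K/\rho^3)\sum_t(\lvec_t^\top(\xvec^\diamond-\xvec^*))^2}\bigr)$. A weighted AM--GM of the shape $\sqrt{AB}\leq\varepsilon A+B/(4\varepsilon)$, tuned so that the first summand becomes proportional to $(K/\rho)\sqrt{\sum_t(\lvec_t^\top(\xvec^\diamond-\xvec^*))^2}$, produces the claimed decomposition with residual term $K/\rho^6$. The hypothesis $\rho\geq 12K/T$ is invoked here to keep the AM--GM split non-vacuous in the small-$\rho$ regime and to absorb the $K^2/(\rho^2 T)$ residual of Step~3 into the $K/\rho^6$ term.

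\textbf{Main obstacle.} The delicate point is Step~4: the weights $\widetilde{x}_t(a)$ in the numerator pertain to the \emph{pre-mixing} OMD iterate, while the counts $N_{t-1}(a)$ are driven by the \emph{actually played} mixture $\xvec_t$. Bridging the two forces the detour through $\widetilde{x}_t(a)\leq x_t(a)/\rho$, which costs an extra factor of $1/\rho$ and is ultimately responsible for the $\rho$-exponents in the final bound. Establishing the uniform Freedman bound on $N_{t-1}(a)$ despite the adaptive, data-dependent mixing weights $\{\gamma_{\tau-1}\}_{\tau\leq t}$ (which themselves couple back to past $\boldsymbol\beta$'s) is the main piece of bookkeeping to get right.
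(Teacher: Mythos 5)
Your Steps 1--3 track the paper's argument for the ``well-behaved'' rounds, and Step 4's concentration of $\sum_t\sum_a x_t(a)/\max\{1,N_{t-1}(a)\}$ is exactly the tool the paper uses (Lemma~9 of \citep{JinLearningAdversarial2019}). The gap is in how you pass from $\widetilde{\xvec}_t$ to $\xvec_t$ and in Step~5. Applying $\widetilde{x}_t(a)\le x_t(a)/\rho$ on \emph{every} round injects an extra factor $1/\rho$ into $\sum_t\gamma_{t-1}^2$, so you end up with $\sum_t\gamma_{t-1}^2 = \widetilde{\mathcal{O}}(K\ln^2(\cdot)/\rho^3)$ and hence a leading term of order $\sqrt{K/\rho^3}\,\sqrt{\sum_t(\lvec_t^\top(\xvec^\diamond-\xvec^*))^2}$ rather than the claimed $(K/\rho)\sqrt{\sum_t(\cdot)^2}$. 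When $\rho<1/K$ the coefficient $\sqrt{K}/\rho^{3/2}$ strictly exceeds $K/\rho$, and the AM--GM of Step~5 cannot repair this: you are trying to bound $c\sqrt{S}$ by $c'\sqrt{S}+d$ with $c'<c$ and $d$ independent of $S$, which forces $\sqrt{S}\le d/(c-c')$; since $S=\sum_t(\lvec_t^\top(\xvec^\diamond-\xvec^*))^2$ can be as large as $T$, this fails for all sufficiently large $T$ (e.g.\ $K=2$, $\rho=0.1$, $S=T\gg K/\rho^9$). So as written the argument proves a strictly weaker bound with the wrong $\rho$-exponent on the data-dependent term, which matters downstream for Theorem~\ref{thr:ub_hard} and the matching lower bound.

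The paper's fix is a case split you are missing: partition the rounds into $T_1=\{t:\gamma_{t-1}\le 1/2\}$ and $T_2=\{t:\gamma_{t-1}>1/2\}$. On $T_1$ one has $1-\gamma_{t-1}\ge 1/2$, so $\boldsymbol{\beta}_{t-1}^\top\widetilde{\xvec}_t\le 2\,\boldsymbol{\beta}_{t-1}^\top\xvec_t$ — the conversion costs a factor $2$, not $1/\rho$ — and Cauchy--Schwarz plus your Steps~3--4 give $\sum_{t\in T_1}\gamma_{t-1}^2=\mathcal{O}(K^2\ln^2(\cdot)/\rho^2)$, yielding the $(K/\rho)\ln(\cdot)\sqrt{\sum_t(\cdot)^2}$ term. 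On $T_2$ the per-round contribution is trivially bounded by $1$ via H\"older, and the whole work goes into bounding $|T_2|$: there, $\gamma_{t-1}>1/2$ forces $\boldsymbol{\beta}_{t-1}^\top\widetilde{\xvec}_t\ge\rho/2-K/T$, hence (using $\xvec_t\succeq(1-\gamma_{t-1})\widetilde{\xvec}_t$ and $1-\gamma_{t-1}\ge\rho/(1+\rho)$) the \emph{sampled} confidence width has conditional mean at least $\rho^2/6$; a reverse-Markov argument plus the bounded total budget $\sum_t\beta_{t-1}(a_t)=\mathcal{O}(\sqrt{KT\ln(\cdot)})$ restricted to such rounds gives $|T_2|\le\mathcal{O}(K\ln(\cdot)/\rho^6)$. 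This is the actual source of the additive $K/\rho^6$ term, and it is also where the hypothesis $\rho\ge 12K/T$ is used (to ensure $\rho^2/4-\rho K/T\ge\rho^2/6$) — not, as you suggest, to absorb residuals in an AM--GM. Your ``main obstacle'' paragraph correctly identifies the $\widetilde{\xvec}_t$-versus-$\xvec_t$ mismatch as the crux, but the resolution is the $\gamma\lessgtr 1/2$ dichotomy, not a global $1/\rho$ conversion.
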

%
Intuitively, the above lemma states that the regret accrued by the strictly feasible strategy, when weighted by the combination factors, is not \emph{too large}. Specifically, the term that appears in Equation~\eqref{eq:gamma_t} is the scaled Euclidean norm of the sequence of instantaneous regrets suffered by $\xvec^\diamond$. This quantity represents some sort of distance between an optimal strategy and the strictly feasible one. At the end of this section, we provide a discussion on the role of this quantity, showing that it represents a source of complexity for the problem instance.

\begin{restatable}{lemma}{secondpart}
	\label{regret:second_part}
	Let $\delta\in(0,1)$ and $\eta \le \frac{\rho}{40H\ln T\ln\left(\nicefrac{H}{\delta}\right)}$, where $H \coloneqq \ln\left(\nicefrac{\lceil\ln(T)\rceil \lceil3\ln(T)\rceil}{\delta}\right)$. Then, with probability at least $1-2\delta$, \algnamehard satisfies:
	\begin{align*}
		\widetilde R_T(\lvec_{1:T})\leq{\mathcal{O}}\Bigg(\frac{K}{\eta}+ \frac{\eta}{\rho}\sum_{t=1}^T\lvec_t^\top\xvec_t+ \frac{K\eta}{\rho}\ln\left(\frac{1}{\delta}\right) +\sqrt{\sum_{t=1}^{T}\lvec_t^\top\xvec_t\ln\left(\frac{1}{\delta}\right)}+\frac{\eta}{\rho}\sum_{t=1}^T\lvec_t^\top\xvec^*\ln\left(\frac{1}{\delta}\right)\Bigg),
	\end{align*}
	where $\widetilde R_T(\lvec_{1:T})=\sum_{t=1}^T (1-\gamma_{t-1}) \lvec_t^\top(\widetilde{\xvec}_t - \xvec^*)$.
\end{restatable}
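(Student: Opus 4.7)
The plan is to extend the high-probability small-loss OMD-with-log-barrier analysis of~\citet{lee2020bias} --- already employed for \algnamesoft in Theorem~\ref{thm:reg_soft} --- to the setting where the importance-weighted estimator $\widehat{\lvec}_t$ is computed from the actually played strategy $\xvec_t=\gamma_{t-1}\xvec^\diamond+(1-\gamma_{t-1})\widetilde{\xvec}_t$ rather than from the OMD iterate $\widetilde{\xvec}_t$. The central quantitative lever is the lower bound $x_t(a)\ge(1-\gamma_{t-1})\widetilde{x}_t(a)$ combined with the inequality $\gamma_{t-1}\le 1-\rho/(1-\alpha_i+\rho)$ established in the proof of Theorem~\ref{thm:safety}: together these give $x_t(a)\gtrsim \rho\,\widetilde{x}_t(a)$ on every round in which $\widetilde{\mathcal{S}}_t$ is non-empty, which is the source of every $1/\rho$ factor appearing in the statement.

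I would first replace $\xvec^*$ by a competitor $\widetilde{\xvec}^*\in\bigcap_{t\in[T]}\widetilde{\mathcal{S}}_t$ supplied by the construction underlying Lemma~\ref{lem:solution}, at the $O(K)$ cost implied by $\|\widetilde{\xvec}^*-\xvec^*\|_1\le K/T$; any round in which $\widetilde{\mathcal{S}}_t=\emptyset$ also contributes only $O(K)$, absorbed into the $K/\eta$ term. I would then decompose
\begin{equation*}
\sum_{t=1}^T (1-\gamma_{t-1})\lvec_t^\top(\widetilde{\xvec}_t-\widetilde{\xvec}^*) = \sum_{t=1}^T(1-\gamma_{t-1})(\lvec_t-\widehat{\lvec}_t)^\top(\widetilde{\xvec}_t-\widetilde{\xvec}^*)+\sum_{t=1}^T(1-\gamma_{t-1})\widehat{\lvec}_t^\top(\widetilde{\xvec}_t-\widetilde{\xvec}^*),
\end{equation*}
and treat the two pieces separately.

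The main obstacle is the OMD piece, because the algorithm's update is driven by $\widehat{\lvec}_t$ and not by $(1-\gamma_{t-1})\widehat{\lvec}_t$. I would invoke the per-step OMD inequality for the log-barrier regularizer and sum across $t\in[T]$; the bookkeeping of~\citet{lee2020bias} --- the threshold schedule on $h_{t,a}$ and the multiplicative increase $\kappa=e^{1/\ln T}$ --- gives $\eta_{t,a}\le e\eta$ throughout, so the Bregman contribution telescopes to $O(K/\eta)$ (with a $\ln T$ hidden in $\mathcal{O}$). The stability sum $\sum_{t,a}\eta_{t,a}\widetilde{x}_t(a)^2\widehat{\ell}_t(a)^2$ is then bounded by first using $(1-\gamma_{t-1})\widetilde{x}_t(a)/x_t(a)\le 1$ to shrink one factor of $\widetilde{x}_t(a)\widehat{\ell}_t(a)$, then applying a Freedman-type concentration on the martingale $\sum_t \widetilde{x}_t(a_t)\ell_t(a_t)/x_t(a_t)-\lvec_t^\top\widetilde{\xvec}_t$ (whose increments are at most $1/\rho$ and whose conditional variance is $O(\lvec_t^\top\xvec_t/\rho)$) to obtain the two terms $\frac{\eta}{\rho}\sum_t\lvec_t^\top\xvec_t$ and $\frac{K\eta}{\rho}\ln(1/\delta)$. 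The log-barrier small-step prerequisite $\eta_{t,a}\widetilde{x}_t(a)\widehat{\ell}_t(a)\le 1/2$ holds because $\eta_{t,a}\widetilde{x}_t(a)\widehat{\ell}_t(a)\le e\eta/\rho\le 1/2$ by the hypothesis $\eta\le\rho/(40H\ln T\ln(H/\delta))$.

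For the bias piece I would apply two Freedman inequalities. On the $\widetilde{\xvec}_t$ side, $(1-\gamma_{t-1})\widehat{\lvec}_t^\top\widetilde{\xvec}_t\le \ell_t(a_t)\le 1$ and the conditional variance is at most $\lvec_t^\top\xvec_t$ (via $(1-\gamma_{t-1})^2\widetilde{x}_t(a)^2/x_t(a)\le x_t(a)$), producing the $\sqrt{\sum_t\lvec_t^\top\xvec_t\ln(1/\delta)}$ deviation. On the $\widetilde{\xvec}^*$ side, the coarser bound $\widetilde{x}^*(a)/x_t(a)\lesssim 1/\rho$ gives a max step of $O(1/\rho)$ and conditional variance $O(\lvec_t^\top\xvec^*/\rho)$; a Freedman bound combined with the AM-GM step $\sqrt{ab\ln(1/\delta)}\le \rho a/(4\eta)+\eta b\ln(1/\delta)/\rho$ yields the $\frac{\eta}{\rho}\sum_t\lvec_t^\top\xvec^*\ln(1/\delta)$ term, with the $\rho a/(4\eta)$ portion absorbed into the $K/\eta$ contribution. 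Summing the four contributions reproduces the stated bound.
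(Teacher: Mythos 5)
Your overall architecture matches the paper's: replace $\xvec^*$ by a competitor $\uvec\in\bigcap_t\widetilde{\mathcal{S}}_t$ at cost $O(K)$, split into an OMD piece and bias pieces, and exploit $x_t(a)\ge(1-\gamma_{t-1})\widetilde{x}_t(a)\gtrsim\rho\,\widetilde{x}_t(a)$ to pay only $1/\rho$ factors. Your handling of the stability term and of the bias term on the $\widetilde{\xvec}_t$ side is essentially the paper's argument. However, there is a genuine gap in your treatment of the bias term on the competitor side, $\sum_t(1-\gamma_{t-1})(\widehat{\lvec}_t-\lvec_t)^\top\uvec$. You claim a max step of $O(1/\rho)$ from ``$u(a)/x_t(a)\lesssim 1/\rho$'' and a conditional variance of $O(\lvec_t^\top\xvec^*/\rho)$. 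Neither holds: the truncated simplex only guarantees $\widetilde{x}_t(a)\ge 1/T$, so $x_t(a)$ can be as small as roughly $\rho/T$ while $u(a)$ can be $\Theta(1)$; the increments are therefore only bounded by $O(T/\rho)$, and the conditional variance $\sum_a \ell_t(a)^2u(a)^2/x_t(a)$ scales with $\boldsymbol{h}_T^\top\uvec\cdot\lvec_t^\top\uvec$, where $\boldsymbol{h}_T^\top\uvec$ can be as large as $T$. A plain Freedman bound with these correct quantities produces an additive $\widetilde{O}(T/\rho)$ term and the argument collapses.

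The missing idea is precisely the mechanism that motivates the increasing learning rate. The paper applies the strengthened Freedman inequality (Theorem 2.2 of \citet{lee2020bias}) with the \emph{random, predictable} increment bound $\boldsymbol{h}_t^\top\uvec$ and variance bound $\boldsymbol{h}_T^\top\uvec\cdot\lvec_t^\top\uvec$, and then cancels every resulting occurrence of $\boldsymbol{h}_T^\top\uvec$ against the \emph{negative} term $-\Omega\bigl(\rho\,\boldsymbol{h}_T^\top\uvec/(\eta\ln T)\bigr)$ that the increasing-learning-rate log-barrier analysis leaves behind; the hypothesis $\eta\le\rho/(40H\ln T\ln(H/\delta))$ is consumed exactly in this cancellation. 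Your proposal discards that negative term (``the Bregman contribution telescopes to $O(K/\eta)$'') and spends the $\eta$ condition only on a small-step prerequisite, so the $\boldsymbol{h}_T^\top\uvec$-dependent quantities have nothing to absorb them. To repair the proof you must retain the negative Bregman term through the OMD analysis (scaled by $\min_t(1-\gamma_{t-1})\ge\rho/(1+\rho)$, which is where the extra $\rho$ in the cancellation condition originates) and route the competitor-side concentration through the strengthened Freedman inequality rather than the vanilla one.
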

Lemma \ref{regret:second_part} provides an upper bound on the regret accrued by the strategy proposed by OMD, when weighted by $1-\gamma_t$. The bound seems involved. However, the terms we are more interested in are the second and the fourth, which dominate the other ones. In particular, we highlight how the quantity $\widetilde R_T(\lvec_{1:T})$ is bounded by a sum of the total loss of the strategy played by \algnamehard and the total loss of the optimal strategy. This provides a link between the total loss of the strategy proposed by OMD, the total loss of the strategy that is actually played by \algnamehard, and the total loss of the optimal strategy. Keeping in mind that our goal is to obtain a regret bound for \algnamehard that scales with the latter, Lemma \ref{regret:second_part} plays a key role in the final result. The proofs of the lemmas are in Appendix~\ref{proofs:regrethard}.

We are now ready to present our main result, \emph{i.e.}, a high-probability regret bound for \algnamehard.
\begin{restatable}{theorem}{regrethard}
	\label{thr:ub_hard}
	Let $\delta\in(0,1)$, $\rho\geq \frac{12K}{T}$, and $\eta= \min\left\{\frac{\rho}{40H\ln T\ln\left(\nicefrac{H}{\delta}\right)}, \sqrt{\nicefrac{K}{\sum_{t=1}^{T}\lvec_t^\top\xvec^* \ln\left(\nicefrac{1}{\delta}\right)}}\right\}$, where $H \coloneqq \ln\left(\nicefrac{\lceil\ln(T)\rceil \lceil3\ln(T)\rceil}{\delta}\right)$. Then, \algnamehard
    suffers a cumulative regret bounded as:
	\begin{equation}
		\label{eq:regrethard}
		R_T(\lvec_{1:T})\leq  {\widetilde{\mathcal{O}}}\Bigg(\underbrace{ \frac{K\ln \left(\nicefrac{1}{\delta}\right)}{\rho}\sqrt{\sum_{t=1}^T\left(\lvec_t^\top(\xvec^\diamond - \xvec^*)\right)^2}}_{\text{\textcolor{red!70!black}{(A) Safety Complexity}}}+\underbrace{\frac{1}{\rho}\sqrt{K\sum_{t=1}^{T}\lvec_t^\top\xvec^* \ln\left(\frac{1}{\delta}\right)}}_{\text{\textcolor{green!50!black}{(B) Bandit Complexity}}}\Bigg),
	\end{equation}
    where $\widetilde{\mathcal{O}}$ hides universal constants and logarithmic terms not depending on $\delta$.
\end{restatable}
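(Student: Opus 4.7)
The plan is to combine Lemmas~\ref{regret:first_part} and~\ref{regret:second_part} via a natural decomposition of the regret, and then tune the free parameter $\eta$ according to the prescription in the theorem statement.

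The first step is the decomposition. Recalling that $\xvec_t = \gamma_{t-1}\xvec^\diamond + (1-\gamma_{t-1})\widetilde{\xvec}_t$ (Line~\ref{line:combination} of Algorithm~\ref{alg: hard}), I write
\begin{equation*}
    R_T(\lvec_{1:T}) = \sum_{t=1}^T \lvec_t^\top \xvec_t - \sum_{t=1}^T \lvec_t^\top \xvec^* = \underbrace{\sum_{t=1}^T \gamma_{t-1}\lvec_t^\top(\xvec^\diamond-\xvec^*)}_{= R_T^\diamond(\lvec_{1:T})} + \underbrace{\sum_{t=1}^T (1-\gamma_{t-1})\lvec_t^\top(\widetilde{\xvec}_t - \xvec^*)}_{= \widetilde{R}_T(\lvec_{1:T})}.
\end{equation*}
Under the good event of Lemma~\ref{lemma:global_hoeffding}, which holds with probability $1-\delta$ and is the event underlying both Lemmas~\ref{regret:first_part} and~\ref{regret:second_part}, I apply each lemma to its corresponding summand. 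A union bound over their failure events yields that, with probability at least $1-4\delta$, both bounds hold simultaneously.

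The second step is removing the dependence on $\sum_{t=1}^T \lvec_t^\top \xvec_t$ appearing in the bound of Lemma~\ref{regret:second_part}. I rewrite this quantity as $\sum_{t=1}^T \lvec_t^\top \xvec_t = \sum_{t=1}^T \lvec_t^\top \xvec^* + R_T(\lvec_{1:T})$ and split the square-root term using $\sqrt{a+b}\le \sqrt{a}+\sqrt{b}$, so that
\begin{equation*}
\sqrt{\sum_{t=1}^T \lvec_t^\top \xvec_t \ln(1/\delta)} \le \sqrt{\sum_{t=1}^T \lvec_t^\top \xvec^* \ln(1/\delta)} + \sqrt{R_T(\lvec_{1:T})\ln(1/\delta)}.
\end{equation*}
The residual $\sqrt{R_T\ln(1/\delta)}$ is then controlled via AM--GM ($\sqrt{ab}\le a/(2c) + bc/2$ for any $c>0$), yielding a term of the form $\tfrac{1}{2}R_T + \mathcal{O}(\ln(1/\delta))$, where the $R_T$ piece can be absorbed into the LHS. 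Similarly, the term $(\eta/\rho)\sum_t \lvec_t^\top \xvec_t$ produces an $(\eta/\rho) R_T$ contribution, which the choice $\eta\le\rho/(40H\ln T\ln(H/\delta))$ makes strictly smaller than $1/2$ (up to constants), so it too can be absorbed into the LHS.

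The third step is plugging in $\eta = \min\{\rho/(40H\ln T\ln(H/\delta)),\sqrt{K/(\sum_t \lvec_t^\top\xvec^*\ln(1/\delta))}\}$ and matching terms. When the second argument of the minimum is active, the terms $K/\eta$, $(\eta/\rho)\sum_t \lvec_t^\top\xvec^*\ln(1/\delta)$, and $\sqrt{\sum_t \lvec_t^\top\xvec^*\ln(1/\delta)}$ all collapse into the Bandit Complexity $\tfrac{1}{\rho}\sqrt{K\sum_t \lvec_t^\top\xvec^*\ln(1/\delta)}$ (the $1/\rho$ factor arising because $\eta$ may saturate at the first argument); when the first argument is active, $\sum_t \lvec_t^\top\xvec^*$ is so small that the bound is dominated by the polylogarithmic-in-$T$ term $K/\eta = \mathcal{O}(KH\ln T \ln(H/\delta)/\rho)$, which is absorbed into $\widetilde{\mathcal{O}}$. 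The Safety Complexity term from Lemma~\ref{regret:first_part} appears unchanged, and the polylog residual $K\rho^{-6}\ln(KTm/\delta)$ is likewise absorbed.

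The main obstacle I anticipate is bookkeeping of the polylogarithmic constants during the absorption of $R_T$ into the LHS: one must verify that the constant in front of $R_T$ after all absorptions is strictly less than one (so the inequality can be solved for $R_T$), and that this is consistent with the choice $\eta\le \rho/(40H\ln T\ln(H/\delta))$. The factor $40$ in the definition of $\eta$ is tailored precisely so that the coefficients $\eta/\rho$ and the AM--GM constant from $\sqrt{R_T\ln(1/\delta)}$ together sum to something like $1/2$, leaving a factor $\ge 1/2$ in front of $R_T$ on the LHS that is then absorbed into the $\mathcal{O}(\cdot)$.
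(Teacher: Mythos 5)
Your proposal is correct and follows essentially the same route as the paper's proof: the identical decomposition $R_T = R_T^\diamond + \widetilde{R}_T$ via Line~\ref{line:combination}, a union bound over Lemmas~\ref{regret:first_part} and~\ref{regret:second_part}, substitution of $\sum_{t}\lvec_t^\top\xvec_t = R_T + \sum_t\lvec_t^\top\xvec^*$, and absorption of the resulting $R_T$ terms (the paper phrases this as solving a quadratic inequality in $R_T$, which is equivalent to your AM--GM absorption) before plugging in the stated $\eta$. The only cosmetic difference is the failure-probability bookkeeping (the paper's proof states $1-5\delta$ rather than your $1-4\delta$), which does not affect the argument.
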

Theorem~\ref{thr:ub_hard} is proved by decomposing the regret in the quantities analyzed in Lemmas~\ref{regret:first_part}~and~\ref{regret:second_part}. By construction, $\xvec_t = \gamma_{t-1}\xvec^\diamond + (1-\gamma_{t-1})\widetilde{\xvec}_t$, which implies $R_T(\lvec_{1:T}) = R_T^\diamond(\lvec_{1:T})+\widetilde{R}_T(\lvec_{1:T})$. Thus, we can sum the upper bounds presented in Lemmas~\ref{regret:first_part}~and~\ref{regret:second_part}. Finally, noting that, by definition of $\eta$, it holds $ \frac{\eta}{\rho}\sum_{t=1}^T\lvec_t^\top\xvec_t \le \frac{1}{2}R_T + \frac{\eta}{\rho}\sum_{t=1}^T\lvec_t^\top\xvec^*$, we can solve the deriving quadratic inequality in $R_T$, which yields Equation \eqref{eq:regrethard}. A detailed proof of Theorem \ref{thr:ub_hard} is in Appendix \ref{proofs:regrethard}.
Theorem~\ref{thr:ub_hard} is one of the main results of this paper. It establishes a regret bound that depends on two contributions: the total loss incurred by an optimal-in-hindisght strategy \textcolor{green!50!black}{(B)}, and the total squared difference between the losses of the strictly feasible strategy and the benchmark \textcolor{red!70!black}{(A)}. This result provides a natural interpretation on the intrinsic difficulty of MABs with hard constraints. On the one hand, our bound scales as the performance of an optimal strategy, which is common to any small-loss bound in unconstrained MABs. We call this contribution \textcolor{green!50!black}{Bandit Complexity}, as it represents the complexity of learning independently of the presence of the constraints. On the other hand, we pay an additional term---peculiar of our setting---that encodes 
the distance between the benchmark and the strictly feasible strategy given as input to the algorithm. We call this contribution \textcolor{red!70!black}{Safety Complexity}, as it represents the complexity of learning an optimal feasible strategy while satisfying the constraints at every round with high probability. 
We acknowledge that the state-of-the-art bounds in online settings with hard constraints are of order $\widetilde{\mathcal{O}}(\nicefrac{1}{\rho}\sqrt{T})$ (\emph{e.g.},~\citet{pacchiano}). Indeed, our bound not only improves the aforementioned result in the best case, while being equivalent in the worst one, but it also decomposes the former in two quantities that are easily interpretable, and possibly very different one from the other.

\subsection{A Small-Loss Style Regret Lower Bound}
In this section, we provide a small-loss style regret lower bound for the hard constrained bandit problem. In~\citep{gerchinovitz2016refined} the authors show that, in adversarial non-constrained bandit problems, the regret suffered by every algorithm is lower bounded as $\mathcal{O}(\sqrt{L^*})$, where $L^*$ is the total loss accrued by the benchmark. This term is represented, in our setting, by the \textcolor{green!50!black}{Bandit Complexity} contribution. In our setting, the strictly feasible strategy $\xvec^\diamond$ is crucial in defining how hard is an instance: in fact, Equation \eqref{eq:regrethard} shows that \algnamehard benefits instances where the optimum is \textit{close} (in terms of performances) to the strictly feasible strategy, which translates in a small \textcolor{red!70!black}{Safety Complexity}. This behavior is natural as the strictly feasible strategy represents a starting point for the exploration, and the algorithm remains somehow tied to that. This raises the natural question on whether this double dependency, one on the optimal total loss and the other on the difference with the strictly feasible strategy, is actually tight. The next two results bridge Theorem \ref{thr:ub_hard} with the standard literature results, \emph{i.e.} regret bounds depending on $T$, and show that the performance of \algnamehard is optimal when disregarding logarithmic terms. We start by introducing an important technical notion, that is handy in bridging constrained small-loss bounds and standard bounds depending on $T$ only.
We define the \textit{constrained small-loss balls} as $
	\mathcal{B}_{\omega, \Delta, T} \coloneqq \left\{ \lvec_{1:T} \in [0,1]^{KT} : \frac{\sum_{t=1}^T \lvec_t^\top \xvec^*}{T} \le \omega \cap \frac{\sum_{t=1}^T (\lvec_t^\top (\xvec^\diamond-\xvec^*))^2}{T} \le \Delta^2 \right\}.
$
The quantities $\omega, \Delta \in [0,1]$ represent two different sources of difficulty for the learning algorithm: $\omega$ express how difficult is the identification of the optimum as in a non-constrained bandit problem, and $\Delta$ represents the additional difficulty provided by the constraints satisfaction. The next result allows us to rephrase the regret upper bound provided in Theorem \ref{thr:ub_hard} in terms of the constrained small-loss ball. In fact, it is a trivial consequence of Equation \eqref{eq:regrethard} and the definition of constrained small-loss ball.
\begin{corollary}
	For all $ \omega \in [0,1]$ and for all $ \Delta \in [0,1]$, it holds $\sup_{\lvec_{1:T} \in \mathcal{B}_{\omega,\Delta, T}} \mathbb{E}[R_T(\lvec_{1:T})] \le \widetilde{\mathcal{O}}\left(\frac{K\Delta}{\rho}\sqrt{T}+ \frac{1}{\rho} \sqrt{K\omega T}\right),
	$
    where the expectation is taken with respect to the internal randomization of the algorithm.
\end{corollary}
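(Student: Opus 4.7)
The plan is to directly invoke the high-probability regret bound of Theorem~\ref{thr:ub_hard} and combine it with the definition of the constrained small-loss ball $\mathcal{B}_{\omega,\Delta,T}$ to produce an in-expectation bound that is uniform over the ball. No new analysis of the algorithm is required; the work consists of a substitution followed by a standard high-probability-to-expectation conversion.

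First, I would fix an arbitrary sequence $\lvec_{1:T} \in \mathcal{B}_{\omega,\Delta,T}$. The definition of the ball gives the two estimates $\sum_{t=1}^T \lvec_t^\top \xvec^* \leq \omega T$ and $\sum_{t=1}^T (\lvec_t^\top (\xvec^\diamond - \xvec^*))^2 \leq \Delta^2 T$. Plugging these directly into the two terms on the right-hand side of Equation~\eqref{eq:regrethard} yields, with probability at least $1 - O(\delta)$,
\[
R_T(\lvec_{1:T}) \leq \widetilde{\mathcal{O}}\!\left(\frac{K\ln(\nicefrac{1}{\delta})}{\rho}\,\Delta\sqrt{T} \;+\; \frac{1}{\rho}\sqrt{K\,\omega\,T\,\ln(\nicefrac{1}{\delta})}\right),
\]
where the $\widetilde{\mathcal{O}}$ absorbs the logarithmic factors in $T$ already present in the statement of Theorem~\ref{thr:ub_hard}. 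Importantly, the resulting bound depends on the sequence $\lvec_{1:T}$ only through $\omega$, $\Delta$, and $T$, so it is uniform over $\mathcal{B}_{\omega,\Delta,T}$.

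Second, I would convert the high-probability statement into an expectation bound by the standard device of choosing $\delta = 1/T$. On the good event, which has probability at least $1 - O(1/T)$, the displayed bound holds; on its complement, the trivial bound $R_T(\lvec_{1:T}) \leq T$ (a consequence of losses lying in $[0,1]$) contributes at most $T \cdot O(1/T) = O(1)$ to the expectation. This additive $O(1)$ is absorbed into $\widetilde{\mathcal{O}}$, and the choice $\delta = 1/T$ only introduces a further $\ln T$ factor which is also hidden by $\widetilde{\mathcal{O}}$. Taking the supremum over $\lvec_{1:T} \in \mathcal{B}_{\omega,\Delta,T}$, which is harmless thanks to the uniformity noted above, yields the claim.

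The main (modest) obstacle is purely bookkeeping: one must make sure that the condition $\rho \geq 12K/T$ and the learning-rate setting required by Theorem~\ref{thr:ub_hard} are compatible with the parameter regime in which the corollary is stated, and that the $\ln(1/\delta) = \ln T$ factor introduced by the expectation conversion is genuinely logarithmic in $T$ and therefore correctly hidden by $\widetilde{\mathcal{O}}$. Beyond these routine checks, the corollary is, as the authors note, a direct consequence of Theorem~\ref{thr:ub_hard} and the definition of $\mathcal{B}_{\omega,\Delta,T}$.
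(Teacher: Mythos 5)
Your proposal is correct and matches the paper's (unstated) argument: the paper treats this corollary as a trivial consequence of Equation~\eqref{eq:regrethard} and the definition of $\mathcal{B}_{\omega,\Delta,T}$, which is exactly the substitution-plus-high-probability-to-expectation conversion you spell out. Your added bookkeeping (choosing $\delta = 1/T$ and absorbing the $O(1)$ contribution of the bad event) is a reasonable way to make the implicit step explicit and introduces no issues beyond the learning-rate tuning question that the paper itself defers to the doubling trick of Remark~\ref{remark:knowledge}.
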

Notice that $\omega$ and $\Delta$ can be treated as instance-dependent parameters, as they represent how far an instance is from the worst-case one, \emph{i.e.} when those two are both equal to $1$.
Finally, the next result shows that no algorithm can have a better dependence on the parameters $\omega$ and $\Delta$.
\begin{restatable}{theorem}{lowerbound}
	\label{thr:lb}
	Let $K\ge 2$, $T \ge \max\left\{2, (11+\ln T)\left(\frac{8}{3}\right)^2\right\}$, and $\omega \in \left[\frac{1}{T}\left(\frac{11}{2}+\ln T\right), \frac{1}{2}\right]$. Then for every randomized algorithm, we have
	$
		\sup_{\lvec_{1:T} \in \mathcal{B}_{\omega,\Delta, T}} \mathbb{E}[R_T(\lvec_{1:T})] \ge \Omega\left(\frac{\Delta}{\rho}\sqrt{T}+  \sqrt{\omega T}\right),
	$
	where the expectation is taken with respect to the internal randomization of the algorithm.
\end{restatable}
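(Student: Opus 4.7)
My approach is to establish the two summands of the lower bound separately and combine them via $\max\{a,b\}\ge(a+b)/2$. The $\Omega(\sqrt{\omega T})$ part is obtained by reducing to the unconstrained small-loss lower bound of \citet{gerchinovitz2016refined}. I would construct a family of two-armed Bernoulli bandit instances in which the stochastic constraints are trivially satisfied (for example, by setting $\gvec_i\equiv\boldsymbol{0}$ for every $i\in[m]$), so that $\xvec^\diamond$ can be chosen equal to $\xvec^*$ and the instance automatically lies in $\mathcal{B}_{\omega,\Delta,T}$ for any $\Delta\ge 0$. On such instances the constrained regret coincides with the unconstrained one, and the hypothesis on $T$ and $\omega$ is precisely what is needed to invoke Theorem~3 of \citet{gerchinovitz2016refined}, yielding the $\Omega(\sqrt{\omega T})$ bound.

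The substantive part of the proof is the $\Omega(\Delta\sqrt{T}/\rho)$ term, which captures the safety complexity. I would design a two-instance ensemble with $K=2$ and $m=1$. Take $\xvec^\diamond=\boldsymbol{e}_1$ with deterministic cost $\theta=\alpha-\rho$, and let the cost of action~$2$ be Bernoulli with mean $\alpha-\epsilon$ in instance $I_-$ and $\alpha+\epsilon$ in instance $I_+$, for some $\epsilon>0$ to be tuned. In $I_-$ the constrained optimum is $\xvec^*=\boldsymbol{e}_2$, whereas in $I_+$ it is the mixture $\xvec^*=\gamma\xvec^\diamond+(1-\gamma)\boldsymbol{e}_2$ with $1-\gamma=\rho/(\rho+\epsilon)$. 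Losses on the two arms are Bernoulli of means $\mu$ and $\mu-\delta$, respectively, with $\mu$ and $\delta$ chosen (together with $\epsilon$) so that both ball conditions $\sum_t\lvec_t^\top\xvec^*/T\le\omega$ and $\sum_t(\lvec_t^\top(\xvec^\diamond-\xvec^*))^2/T\le\Delta^2$ hold with high probability; in particular, the squared-gap condition forces $\delta\asymp\Delta/(1-\gamma)$.

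The information-theoretic step follows the standard two-point template (see, e.g., Chapter~15 of \citet{lattimore2020bandit}). Since $I_+$ and $I_-$ differ only in the Bernoulli parameter of action~$2$'s cost, the KL divergence between the interaction transcripts is bounded by $\mathbb{E}_-[N_T(2)]\cdot O(\epsilon^2/(\alpha(1-\alpha)))$. The hard-constraint requirement forces any safe algorithm in $I_+$ to keep the frequency of action~$1$ above $\gamma=\epsilon/(\rho+\epsilon)$; I would then apply the Bretagnolle--Huber inequality to the event $\{N_T(1)>\gamma T\}$ to show that either (i) the algorithm plays action~$1$ more than $\gamma T$ times in $I_-$, incurring regret $\gtrsim\delta\gamma T$ there, or (ii) its KL budget exceeds a constant. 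Setting the KL to a constant calls for $\epsilon\asymp 1/\sqrt{T}$, and the resulting regret is of order $\delta\gamma T\asymp\delta\epsilon T/\rho$; substituting $\delta\asymp\Delta/(1-\gamma)\approx\Delta$ produces the desired $\Omega(\Delta\sqrt{T}/\rho)$ bound in at least one of the two instances.

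The delicate part of the execution is the geometric bookkeeping: ensuring the construction simultaneously saturates both ball conditions in $\mathcal{B}_{\omega,\Delta,T}$ and produces a per-round regret whose scaling in $\delta$, $\epsilon$, and $\rho$ cleanly yields the $\Omega(\Delta\sqrt{T}/\rho)$ rate after the Bretagnolle--Huber step. A secondary subtlety is that $\xvec^*$ is randomized in $I_+$, so the regret decomposition must carefully track the weight $1-\gamma=\rho/(\rho+\epsilon)$ on action~$2$; this is precisely where the $1/\rho$ factor ultimately emerges, mirroring its role in the upper bound of Theorem~\ref{thr:ub_hard}.
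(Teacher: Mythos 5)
Your high-level strategy is essentially the paper's: an indistinguishable ``dangerous'' cost instance that forces any safe algorithm to keep probability $\epsilon/(\epsilon+\rho)$ on the strictly feasible arm, transferred to a benign instance via an information inequality with $\epsilon\asymp 1/\sqrt{T}$, plus a standard two-point argument for the $\sqrt{\omega T}$ term (the paper does both inside a single four-instance, $K=3$ construction using Pinsker and the Auer et al.\ lemma, rather than two separate families combined via a max). The $\Omega(\Delta\sqrt{T}/\rho)$ reasoning in your $I_\pm$ ensemble is sound. However, there are two genuine gaps. First, ball membership: you assert that the $\gvec_i\equiv\boldsymbol{0}$ instances lie in $\mathcal{B}_{\omega,\Delta,T}$ ``for any $\Delta\ge 0$'' because $\xvec^\diamond$ can be chosen equal to $\xvec^*$. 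But a two-point lower bound needs two sub-instances with \emph{different} optima, and $\xvec^\diamond$ is a single fixed strategy, so in at least one sub-instance $\lvec_t^\top(\xvec^\diamond-\xvec^*)=\pm(\ell_t(1)-\ell_t(2))$. With independent Bernoulli losses this square has per-round expectation $\Theta(\omega)$, so the condition $\frac{1}{T}\sum_t(\lvec_t^\top(\xvec^\diamond-\xvec^*))^2\le\Delta^2$ fails whenever $\Delta\ll\sqrt{\omega}$; the same problem afflicts your $I_\pm$ construction, where ``Bernoulli of means $\mu$ and $\mu-\delta$'' reads as independent arms. The paper avoids this by coupling: the loss of $\xvec^\diamond$ is defined as $(W_t+\Delta)/2$ while every candidate optimum has loss $W_t/2$, so the gap is \emph{deterministically} $\Delta/2$ and the second ball condition holds pointwise. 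Your plan needs this (or an equivalent) coupling to cover small $\Delta$.

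Second, the de-randomization step is missing. The supremum in the statement ranges over \emph{fixed} loss sequences in the ball, while your construction lower-bounds an expectation over both the loss randomness and the algorithm's randomness. Showing that the ball conditions hold ``with high probability'' is not enough: one must rule out that the regret mass is concentrated on the low-probability event where the realized sequence falls outside the ball. The paper handles this in its Step~2 with a Bernstein bound (whence the hypotheses on $T$ and $\omega$, which you attribute to invoking \citet{gerchinovitz2016refined} but which actually serve this concentration step) followed by a contradiction argument extracting a single sequence $\lvec_{1:T}\in\mathcal{B}_{\omega,\Delta,T}$ satisfying the bound. Your plan should include an analogous argument; without it the conclusion as stated does not follow from the stochastic construction.
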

A detailed proof of Theorem \ref{thr:lb} can be found in Appendix \ref{proofs:lowerbound}. This result shows that \algnamehard achieves an optimal dependence on both $\Delta$ and $\omega$. 

Finally, we leave as an open question whether the dependence on the constant $\rho$ in contribution \textcolor{green!50!black}{(B)} of the regret upper bound can be removed. Notice that $\rho$ is a constant tied to the strictly feasible strategy and it encompasses the difficulty provided by the constraints to the instance. Thus, we believe that the $\nicefrac{1}{\rho}$ dependence should affect contribution \textcolor{red!70!black}{(A)} only and we conjecture that our lower bound is actually tight, while it is the upper bound that can be lowered.

\bibliographystyle{plainnat}
\bibliography{bibliography}

\newpage

\appendix

\section*{Appendix}
The Appendix is structured as follows:
\begin{itemize}
	\item In Appendix~\ref{app:image}, we provide a graphical representation of Algorithm \ref{alg: hard}'s update.
	\item In Appendix~\ref{app:related}, we provide the complete discussion on related works.
	\item In Appendix~\ref{App:soft}, we provide the omitted analysis for the \emph{soft constraints} setting.
	\item In Appendix~\ref{App:hard}, we provide the omitted analysis for the \emph{hard constraints} setting.
\end{itemize}

\section{Graphical Representation of the Update of Algorithm \ref{alg: hard}}

\label{app:image}
\begin{figure}[!h]
  \centering
    \subfigure[]{
        \label{fig: 1a}
        \begin{tikzpicture}
	
	\coordinate (A) at (0, 0);
	\coordinate (B) at (4, 0);
	\coordinate (C) at (2, 3);
	
	\fill[white] (A) -- (B) -- (C) -- cycle;
	\draw[thick] (A) -- (B) -- (C) -- cycle;
	
	\fill[blue!30, opacity=0.9]  (B) -- ($(C)!0.7!(B)$) -- (A) -- cycle;
	\draw[thick, blue] (A) -- ($(B)!0.3!(C)$); 
	
	
	\fill[black] ($(B)!0.7!(C)$) circle (2pt) node[above right] {$\widetilde{\boldsymbol{x}}_{t+1}$}; 
	\fill[black] (B) circle (2pt) node[above right] {$\boldsymbol{x}^\diamond$}; 
	\fill[black] ($(B)!0.2!(C)$) circle (2pt) node[above right] {${\boldsymbol{x}}_{t+1}$}; 
	
\end{tikzpicture}
	
    }
    \subfigure[]{
        \label{fig: 1b}
        \begin{tikzpicture}
	
	\coordinate (A) at (0, 0);
	\coordinate (B) at (4, 0);
	\coordinate (C) at (2, 3);
	
	\fill[white] (A) -- (B) -- (C) -- cycle;
	\draw[thick] (A) -- (B) -- (C) -- cycle;
	
	
	\fill[red!30, opacity=0.5] ($(B)!0!(C)$) -- ($(C)!0.3!(B)$) -- (A) -- cycle;
	\draw[thick, red] (A) -- ($(B)!0.7!(C)$); 
	
	\fill[black] ($(B)!0.7!(C)$) circle (2pt) node[above right] {$\widetilde{\boldsymbol{x}}_{t+1}$}; 
	\fill[black] (B) circle (2pt) node[above right] {$\boldsymbol{x}^\diamond$}; 
	\fill[black] ($(B)!0.2!(C)$) circle (2pt) node[above right] {${\boldsymbol{x}}_{t+1}$}; 
	
\end{tikzpicture}
    }
    \subfigure[]{
        \label{fig: 1c}
        \begin{tikzpicture}
	
	\coordinate (A) at (0, 0);
	\coordinate (B) at (4, 0);
	\coordinate (C) at (2, 3);
	
	\fill[white] (A) -- (B) -- (C) -- cycle;
	\draw[thick] (A) -- (B) -- (C) -- cycle;
	
	\fill[blue!30, opacity=0.9]  (B) -- ($(C)!0.7!(B)$) -- (A) -- cycle;
	\draw[thick, blue] (A) -- ($(B)!0.3!(C)$); 
	
	\fill[red!30, opacity=0.5] ($(B)!0!(C)$) -- ($(C)!0.3!(B)$) -- (A) -- cycle;
	\draw[thick, red] (A) -- ($(B)!0.7!(C)$); 
	
	\fill[black] ($(B)!0.7!(C)$) circle (2pt) node[above right] {$\widetilde{\boldsymbol{x}}_{t+1}$}; 
	\fill[black] (B) circle (2pt) node[above right] {$\boldsymbol{x}^\diamond$}; 
	\fill[black] ($(B)!0.2!(C)$) circle (2pt) node[above right] {${\boldsymbol{x}}_{t+1}$}; 
	
\end{tikzpicture}
    }%
    \caption{A graphical representation of the update performed by Algorithm~\ref{alg: hard}. For the sake of exposition, we omit the constraint $x(a)\geq 1/T, \forall a\in[K]$. Specifically, in Figure~\ref{fig: 1a}, we provide the graphical representation of the safe subset of the simplex. Notice that, the strictly safe strategy $\xvec^\diamond$ associated to the Slater's parameter $\rho$ is a vertex of the simplex. In Figure~\ref{fig: 1b}, we provide the graphical representation of the \emph{estimated} safe decision space. Notice that, $\widetilde{\xvec}_{t+1}$ lies on this set as prescribed by the projection of both Algorithm~\ref{alg: bounded violations} and Algorithm~\ref{alg: hard}. Finally, in Figure~\ref{fig: 1c}, we provide how the convex combination is performed. Notice that, $\xvec_{t+1}$ is computed pessimistically. Indeed, due to the high uncertainty in the constraints estimation, $\xvec_{t+1}$ is an interior point of the safe space.}
    \label{fig:1}
\end{figure}
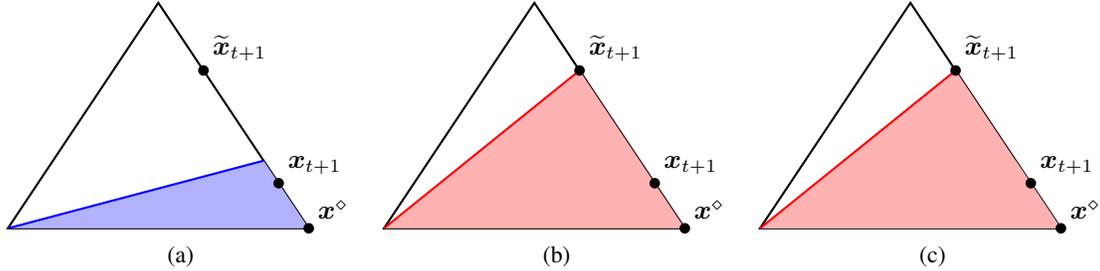
\section{Related Works}
\label{app:related}
\paragraph{Data-Dependent Regret Bounds}
Over the last two decades, in the literature on (unconstrained) adversarial MABs there has been an increasing interest in providing regret guarantees that depend on the difficulty of the specific instance faced by algorithm. Such bounds are usually referred to as \textit{data-dependent bounds} or \textit{instance-dependent bounds}. Examples include, but are not limited to, \textit{small-loss bounds} (also called \textit{first-order bounds}). An algorithm enjoying small-loss guarantees has its regret upper bound scaling with $\mathcal{O}(\sqrt{L^*})$, where $L^* \le T$ is the total loss accrued by the benchmark. Small loss bounds over the \textit{expected} regret have been obtained in several settings, from MABs to contextual bandits \citep{allenberg2006hannan, neu2015first, allen2018make, lee2020closer}. However, few works managed to recover small-loss guarantees that hold with high probability \citep{neu2015explore, lee2020bias}. 
The tightness of small-loss bounds has been analyzed only recently. In particular, in \citep{gerchinovitz2016refined} the authors provide data-dependent lower bounds for adversarial MABs, showing that the $\mathcal{O}(\sqrt{L^*})$ rate cannot be improved.
\paragraph{Constrained Online Learning}	
Online leaning with \emph{unknown} constraints has been widely investigated (see, \emph{e.g.},~\citep{Mannor, liakopoulos2019cautious, castiglioni2022online, Unifying_Framework}). Two main settings are usually studied. In \emph{soft constraints} settings (see, \emph{e.g.},~\citep{chen2022strategies}), the aim is to guarantee that the constraint violations incurred by the algorithm grow sub-linearly. In \emph{hard constraints} settings, the algorithms must satisfy the constraints at every round, by assuming knowledge of a strictly feasible decision (see, \emph{e.g.},~\citep{pacchiano}). Both soft and hard constraints have been generalized to settings that are more challenging than MABs, such as linear bandits~(see, \emph{e.g.},~\citep{polytopes}) and games (see, \emph{e.g.},~\citep{safetree}). 
None of these works in the constrained online learning literature studies data-dependent regret bounds. 
There also exists a huge literature on constrained Markov decision processes (see, \emph{e.g.}, \citep{Online_Learning_in_Weakly_Coupled, Constrained_Upper_Confidence, bai2020provably, Exploration_Exploitation, Upper_Confidence_Primal_Dual, ding2021provably, bounded, pmlr-v151-wei22a, Non_stationary_CMDPs,ding_non_stationary, stradi2024learningconstrainedmarkovdecision, muller2024truly,stradi2024learning, stradi24a, stradi2024best,stradi2024optimalstrongregretviolation}). Most of these works focus on stochastic settings, and none of them provides data-dependent regret bounds.
%
%
Finally, some works focus on constrained online convex optimization settings (see, \emph{e.g.},~\citep{mahdavi2012trading, jenatton2016adaptive, yu2017online}). Nonetheless, they do not provide data-dependent regret bounds.

\section{Omitted Proofs for Soft Constraints}
\label{App:soft}

\subsection{Non-Emptiness of the Truncated Safe Decision Space}
\solution*
\begin{proof}
	Consider a feasible strategy $\xvec^{\circ}$, namely, $\gvec_{i}^\top\xvec^{\circ}\leq \alpha_i$ for all $i\in[m]$. Under the event $\mathcal{E}(\delta)$, Lemma \ref{lemma:global_hoeffding} implies
	\begin{equation}\label{eq1: solution}
		\left(\widehat\gvec_{t,i}-\boldsymbol{\beta}_t\right)^\top\xvec^{\circ}\leq \alpha_i, \quad \forall t\in[T].
	\end{equation}
	Notice that, since $\xvec^*$ is feasible, the aforementioned reasoning still holds for the constrained optimal solution.
	
	Then, notice that for any $\xvec\in\Delta_K$, there exists a strategy $\widetilde{\xvec} \in \Omega$ s.t. $\| \widetilde{\xvec}- \xvec\|_1\le \frac{K}{T}$. 
	
	Thus, employing the reasoning above and taking: \[\widetilde\xvec\coloneq\argmin_{\xvec\in \Omega}\|\xvec-\xvec^{\circ}\|_1,\] we have, for all $i \in [m]$ and for all $t\in[T]$:
	\begin{align*}
		(\widehat{\gvec}_{t,i} - \boldsymbol{\beta}_t)^\top\widetilde{\xvec} &=(\widehat{\gvec}_{t,i} - \boldsymbol{\beta}_t)^\top( \widetilde{\xvec} \pm  \xvec^{\circ} ) \\
		& =(\widehat{\gvec}_{t,i} - \boldsymbol{\beta}_t)^\top\xvec^{\circ}
		+ (\widehat{\gvec}_{t,i} - \boldsymbol{\beta}_t)^\top
		( \widetilde{\xvec} -  \xvec^{\circ} ) \\
		&\le \alpha_i + \| \widetilde{\xvec}- \xvec^{\circ}\|_1  \\
		&\le \alpha_i+  \frac{K}{T},
	\end{align*}
	which holds with probability at least $1-\delta$  and implies that $\widetilde{\mathcal{S}}_t$ is never empty.
	
	To conclude the proof, we notice that:
	\begin{align*}
		\bigcap_{t\in[T]} \widetilde{\mathcal{S}}_t &= \bigcap_{t\in[T]} \left(\Omega \cap {\mathcal{S}}_t^\circ\right)\\
		& = \Omega \cap \left(\bigcap_{t\in[T]} \widetilde{\mathcal{S}}_t\right).
	\end{align*}
	Noticing that, by Equation~\eqref{eq1: solution} and employing the same reasoning above, it holds:
	\[\widetilde{\xvec}\in \bigcap_{t\in[T]} \widetilde{\mathcal{S}}_t,
	\]
	concludes the proof.
\end{proof}

\subsection{Cumulative Regret of \algnamesoft}
\label{proofs:regretsoft}
\begin{lemma}
	\label{lem:soft_regret}
	For any $\delta\in(0,1)$, for any $\uvec$ s.t. $\uvec\in\cap_{t\in[T]}\widetilde{\mathcal{S}}_t$, Algorithm~\ref{alg: bounded violations} guarantees the following bound:
	\begin{equation*}
		\sum_{t=1}^T \widehat{\lvec_t}^\top\left(\xvec_{t}-\uvec\right) \leq \mathcal{O}\left(\frac{K\ln T}{\eta}+\eta\sum_{t=1}^T\ell_t(a_t)\right) - \frac{\boldsymbol h_T^\top\uvec}{10\eta\ln T},
	\end{equation*}
	with probability at least $1-\delta$.
\end{lemma}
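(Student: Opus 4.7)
The plan is to mirror the analysis of OMD with log-barrier and increasing per-coordinate learning rates of \citet{lee2020bias}, adapted to our time-varying feasible sets. The hypothesis $\uvec\in\cap_{t\in[T]}\widetilde{\mathcal{S}}_t$ is used only to make $\uvec$ an admissible competitor in the first-order optimality condition for the OMD step on $\widetilde{\mathcal{S}}_t$ (Line~\ref{alg1:line11}); the standard three-point identity then yields, for every round,
\[
\widehat{\lvec}_t^\top(\xvec_t-\uvec)\le \underbrace{\widehat{\lvec}_t^\top(\xvec_t-\xvec_{t+1})-D_{\psi_t}(\xvec_{t+1},\xvec_t)}_{\text{stability}_t}+D_{\psi_t}(\uvec,\xvec_t)-D_{\psi_t}(\uvec,\xvec_{t+1}).
\]
From this point on the moving sets $\widetilde{\mathcal{S}}_t$ disappear from the analysis, and the problem reduces to (i) bounding the stability term round by round and (ii) summing the Bregman telescope while accounting for the variation of $\psi_t$.

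For (i), since the log-barrier has diagonal Hessian $\mathrm{diag}(1/(\eta_{t,a}x_t(a)^2))$ and the importance-weighted estimator is $1$-sparse with $\widehat\ell_t(a_t)=\ell_t(a_t)/x_t(a_t)$, a local-norm Taylor expansion yields $\text{stability}_t\le \mathcal{O}(\eta_{t,a_t}\ell_t(a_t)^2)\le \mathcal{O}(\eta\,\ell_t(a_t))$. Two algorithmic facts are crucial here: each per-arm rate can be increased at most $\lceil\log_2(T/K)\rceil$ times (because $h_{t,a}$ at least doubles on each trigger and $h_{t,a}\le 2T$ by $x_t(a)\ge 1/T$), so $\eta_{t,a}\le \eta\,\kappa^{\mathcal{O}(\ln T)}=\mathcal{O}(\eta)$ deterministically; and the threshold rule of Lines~\ref{alg1:line13}-\ref{alg1:line17} is precisely what keeps $x_{t+1}(a)\ge x_t(a)/2$ and therefore validates the local-norm approximation. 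Summing over $t$ produces the $\eta\sum_t\ell_t(a_t)$ contribution.

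For (ii), write
\[
\sum_{t=1}^T\bigl[D_{\psi_t}(\uvec,\xvec_t)-D_{\psi_t}(\uvec,\xvec_{t+1})\bigr]=D_{\psi_1}(\uvec,\xvec_1)+\sum_{t=1}^{T-1}\bigl[D_{\psi_{t+1}}(\uvec,\xvec_{t+1})-D_{\psi_t}(\uvec,\xvec_{t+1})\bigr]-D_{\psi_T}(\uvec,\xvec_{T+1}).
\]
The initial term is $\mathcal{O}(K\ln T/\eta)$ because $\xvec_1=\tfrac{1}{K}\mathbf 1$ and $\ln(1/u(a))\le\ln T$ on $\Omega$. Each change term is nonzero only on a round where $\eta_{t+1,a}=\kappa\,\eta_{t,a}$ for some $a$, in which case it equals $\sum_a(\tfrac{1}{\eta_{t+1,a}}-\tfrac{1}{\eta_{t,a}})\bigl(\tfrac{u(a)}{x_{t+1}(a)}-1-\ln\tfrac{u(a)}{x_{t+1}(a)}\bigr)$. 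The prefactor is $\le -c/(\eta\ln T)$ for a universal constant $c$ (expand $e^{-1/\ln T}-1$), and on an increase event the update rule gives $x_{t+1}(a)=2/h_{t+1,a}$, so the second factor is bounded below by $\tfrac{1}{2}u(a)h_{t+1,a}$ whenever $u(a)h_{t+1,a}\ge 2$. Since the values $h_{t,a}$ form a geometric sequence (at least doubling on each increase), the last increase for arm $a$ dominates the sum, yielding the claimed $-\boldsymbol h_T^\top\uvec/(10\eta\ln T)$ after fixing constants, while the $\mathcal{O}(K)$ slack from the few early increases with $u(a)h<2$ is absorbed in the $\mathcal{O}(K\ln T/\eta)$ contribution.

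The main obstacle will be pinning down the constant $10$ in the negative term: this requires a careful combination of (a) the lower bound on $1-e^{-1/\ln T}$, (b) the geometric-sum factor from successive doublings of $h_{t,a}$, and (c) the small-$r$ correction to $r-1-\ln r$ on the first few increases per arm. The high-probability qualifier $1-\delta$ enters through the event $\mathcal{E}(\delta)$ of Lemma~\ref{lemma:global_hoeffding}, which is needed to invoke Lemma~\ref{lem:solution} and hence to make the hypothesis $\uvec\in\cap_t\widetilde{\mathcal{S}}_t$ usable in the OMD inequality; the stability and Bregman-telescope parts of the proof are otherwise deterministic given the algorithm's sample path.
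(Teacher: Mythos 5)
Your plan matches the paper's proof essentially step for step: the same OMD-with-log-barrier regret inequality (local-norm stability term plus Bregman telescope over the convex, nonempty sets $\widetilde{\mathcal{S}}_t$, with the high-probability qualifier entering only through Lemma~\ref{lem:solution}), the same deterministic bound $\eta_{t,a}\le 5\eta$ yielding the $\eta\sum_t\ell_t(a_t)$ contribution, the same $\mathcal{O}(K\ln T/\eta)$ bound on $D_{\psi_1}(\uvec,\xvec_1)$, and the same extraction of $-\boldsymbol h_T^\top\uvec/(10\eta\ln T)$ from the change term at the last learning-rate increase of each arm, using $x_{t+1}(a)=2/h_{T,a}$ there. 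The one imprecision is your claim that the factor $h(r)=r-1-\ln r$ with $r=u(a)h_{t+1,a}/2$ is bounded below by $r$ itself, which is false; the correct step (used in the paper) is $-h(r)\le \ln T+1-r$, whose additive $\ln T+1$ slack is absorbed into the $\mathcal{O}(K\ln T/\eta)$ term exactly as you anticipate for the other slack terms.
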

\begin{proof}
~We fix  $\uvec$ s.t. $\uvec\in\cap_{t\in[T]}\widetilde{\mathcal{S}}_t$, that is, any possible vector belonging to the intersection between the safe sets built by Algorithm~\ref{alg: bounded violations}. Then, we apply standard OMD with log-barrier results (see~\citep{corralling}) to obtain:
	\begin{equation}
		\sum_{t=1}^T \widehat{\lvec_t}^\top\left(\xvec_{t}-\uvec\right) \leq \sum_{t=1}^T\left(D_{\psi_t}(\uvec,\xvec_t)-D_{\psi_t}(\uvec,\xvec_{t+1})\right)+\sum_{t=1}^T\sum_{a=1}^K \eta_{t,a}x^2_{t}(a)\widehat{\ell_t}^2(a). \label{eq1:lem_soft_reg}
	\end{equation}
	The result above holds since $\widetilde{\mathcal{S}}_t$ is a polytope (thus, convex) for any $t\in[T]$. As $\widetilde{\mathcal{S}}_t$ is included in $\Delta_K$, then the intersection is convex. Moreover, we notice that, by Lemma~\ref{lem:solution}, we have, with probability at least $1-\delta$, that $\cap_{t\in[T]}\widetilde{\mathcal{S}}_t$ is non-empty, by construction of the estimated safe set. Equation~\eqref{eq1:lem_soft_reg} holds under the event mentioned above.
	
	For the last term of Equation~\eqref{eq1:lem_soft_reg}, we simply notice that, for any $t\in[T]$, it holds:
	\begin{align*}
		\eta_{t,a}x^2_{t}(a)\widehat{\ell_t}^2(a)&= \eta_{t,a}x^2_{t}(a)\frac{\ell_t^2(a)}{x^2_{t}(a)}\mathbbm{1}\{a_t=a\}\\ 
		&\leq \eta_{t,a_t}\ell_t^2(a_t)\\
		& \leq \eta_{t,a_t}\ell_t(a_t) \\
		& \leq \eta_{T,a_t}\ell_t(a_t)\\
		&\leq 5\eta\ell_t(a_t),
	\end{align*}
	where the last inequality holds since $\eta_{T,a}=\kappa^{n_a}\eta_{1,a}$, where $n_a$ is the number of times Algorithm~\ref{alg: bounded violations} increases the learning rate for arm $a$, and $\kappa^{n_a}\leq5$.
	
	In the following, we define $h(y)=y-1-\ln y$. Thus we bound the first two term of Equation~\eqref{eq1:lem_soft_reg} as follows:
	\begin{align}
		\sum_{t=1}^T&\left(D_{\psi_t}(\uvec,\xvec_t)-D_{\psi_t}(\uvec,\xvec_{t+1})\right)\nonumber\\&\leq D_{\psi_{1}}(\uvec,\xvec_{1}) + \sum_{t=1}^{T-1}\left(D_{\psi_{t+1}}(\uvec,\xvec_{t+1})-D_{\psi_t}(\uvec,\xvec_{t+1})\right) \label{eq2:lem_soft_reg}\\
		& = \frac{1}{\eta}\sum_{a=1}^K h\left(\frac{u(a)}{x_1(a)}\right) + \sum_{a=1}^K\sum_{t=1}^{T-1}\left(\frac{1}{\eta_{t+1,a}}-\frac{1}{\eta_{t,a}}\right)h\left(\frac{u(a)}{x_{t+1}(a)}\right), \nonumber
	\end{align}
	where Inequality~\eqref{eq2:lem_soft_reg} holds since the Bregman is always greater or equal than zero.
	
	Thus, we focus on the first term, bounding it as follows:
	\begin{align}
		\frac{1}{\eta}\sum_{a=1}^K h\left(\frac{u(a)}{x_1(a)}\right) &= 	\frac{1}{\eta}\sum_{a=1}^K -\ln(Ku(a))  \label{eq3:lem_soft_reg}\\
		&\leq \frac{K\ln T}{\eta},  \label{eq4:lem_soft_reg}
	\end{align}
	where Equation~\eqref{eq3:lem_soft_reg} holds since $\xvec_t$ is initialized uniformly and Inequality~\eqref{eq4:lem_soft_reg} holds since $u(a)\geq1/T$ for all $a\in[K]$.
	
	To bound the final term, we will refer as $t_a$ to the last time step where the learning rate of arm $a$ is increased. Thus, we proceed as follows.
	\begin{align*}
		\left(\frac{1}{\eta_{t_j+1,a}}-\frac{1}{\eta_{t_j,a}}\right)h\left(\frac{u(a)}{x_{t_j+1}(a)}\right)&=\frac{1-\kappa}{k^{n_j}\eta}h\left(\frac{u(a)}{x_{t_j+1}(a)}\right)\\
		&\leq \frac{-h\left(\frac{u(a)}{x_{t_j+1}(a)}\right)}{5\eta\ln T}\\
		&= \frac{-h\left(\frac{u(a) h_{T,a}}{2}\right)}{5\eta\ln T}\\
		&=\frac{\ln\left(\frac{u(a) h_{T,a}}{2}\right)- \frac{u(a) h_{T,a}}{2}+1}{5\eta\ln T}\\
		&\leq \frac{\ln T- \frac{u(a) h_{T,a}}{2}+1}{5\eta\ln T},
	\end{align*}
	where we used that $1-\kappa\leq -\frac{1}{\ln T}$ and $\frac{u(a) h_{T,a}}{2}\leq\frac{1}{x_{t_j+1}(a)}\leq T$.
	
	Combining the previous bounds concludes the proof.
\end{proof}

\regretsoft*
\begin{proof}
	We first decompose the regret as follows:
	\begin{align*}
		R_T&\coloneq \sum_{t=1}^T \lvec_t^\top \left(\xvec_t- \xvec^*\right)\\
		&= \sum_{t=1}^T \widehat\lvec_t^\top \left(\xvec_t- \uvec\right) + \sum_{t=1}^T (\lvec_t-\widehat{\lvec_t})^\top \xvec_t+ \sum_{t=1}^T (\widehat{\lvec_t}-\lvec_t)^\top \uvec+ \sum_{t=1}^T \lvec_t^\top \left(\uvec- \xvec^*\right)\\
		&\leq \sum_{t=1}^T \widehat\lvec_t^\top \left(\xvec_t- \uvec\right) + \sum_{t=1}^T (\lvec_t-\widehat{\lvec_t})^\top \xvec_t+ \sum_{t=1}^T (\widehat{\lvec_t}-\lvec_t)^\top \uvec+ K,
	\end{align*}
	where we take $\uvec$ as $\min_{ \uvec \in \cap_{t\in[T]}\widetilde{\mathcal{S}}_t}\|\xvec^*-\uvec\|_1$ and the inequality follows from the Hölder inequality after noticing that, under the event of Lemma~\ref{lem:solution}, which holds with probability at least $1-\delta$, the maximum $\ell_1$ distance between the safe optimum $\xvec^*$ and $\uvec$ is $K/T$.
	
	We will bound the remaining quantities separately.
	\paragraph{Bound on the first term} The first term follows from Lemma~\ref{lem:soft_regret}.
	
	\paragraph{Bound on the second term} To bound the second term we notice that it is a Martingale difference sequence, where any difference is bounded as:
	\begin{align*}
		\left|\left(\mathbb{E}_t\left[\widehat{\lvec_t}\right]-\widehat{\lvec_t}\right)^\top \xvec_t\right|&\leq \widehat{\lvec_t}^\top \xvec_t\\
		& = \sum_{a=1}^K x_t(a)\frac{\ell_t(a)}{x_t(a)}\mathbbm{1}\{a_t=a\}\\
		&\leq 1.
	\end{align*}
	Furthermore, we bound the second moment as:
	\begin{align*}
		\mathbb{E}_t\left[\left(\left(\mathbb{E}_t\left[\widehat{\lvec_t}\right]-\widehat{\lvec_t}\right)^\top \xvec_t\right)^2\right] & =  	\mathbb{E}_t\left[\left(\left(\lvec_t-\widehat{\lvec_t}\right)^\top \xvec_t\right)^2\right]\\
		& \leq \mathbb{E}_t\left[\left(\widehat\lvec_t^\top\xvec_t\right)^2 \right]\\
		& \leq \mathbb{E}_t\left[\widehat\lvec_t^\top\xvec_t \right] \\
		&=\lvec_t^\top\xvec_t.
	\end{align*}
	Thus we can apply the Freedman inequality to attain, with probability at least $1-\delta$:
	\begin{equation*}
		\sum_{t=1}^T (\lvec_t-\widehat{\lvec_t})^\top \xvec_t = \mathcal{O}\left(\sqrt{\sum_{t=1}^{T}\lvec_t^\top\xvec_t\ln\left(\frac{1}{\delta}\right)}+\ln\left(\frac{1}{\delta}\right)\right).
	\end{equation*}
	
	\paragraph{Bound on the third term} To bound the third term, we again notice that the quantity of interest is a Martingale difference sequence, but we apply a modified version of the Freedman inequality (see~\citep{lee2020bias}). 
	
	First we notice that:
	\begin{equation*}
		(\widehat{\lvec_t}-\lvec_t)^\top \uvec\leq \boldsymbol{ h}_t^\top\uvec \in [1,T].
	\end{equation*}
	We now focus on bounding the second moment as follows:
	\begin{align*}
		\mathbb{E}_t\left[ \left((\widehat{\lvec_t}-\lvec_t)^\top \uvec\right)^2\right] & \leq \mathbb{E}_t\left[ \left(\widehat{\lvec_t}^\top \uvec\right)^2\right]\\
		& = \mathbb{E}_t\left[ \frac{\ell^2_t(a_t) u^2(a_t)}{x^2_t(a_t)}\right]\\
		&\leq \sum_{a=1}^Ku^2(a)\ell_t(a) h_{T,a} \\
		& \leq \boldsymbol{ h}_T^\top\uvec \cdot \lvec_t^\top\uvec.
	\end{align*}
	Thus, with probability at least $1-\delta$, we have by Theorem 2.2 of~\citep{lee2020bias}:
	\begin{equation*}
		\sum_{t=1}^T (\widehat{\lvec_t}-\lvec_t)^\top \uvec = H\left( \sqrt{8\sum_{t=1}^T\lvec_t^\top\uvec\cdot\boldsymbol{ h}_T^\top\uvec \ln\left(\frac{H}{\delta}\right)} + 2\boldsymbol{ h}_T^\top\uvec\ln\left(\frac{H}{\delta}\right)\right),
	\end{equation*}
	where $H=\ln\left(\frac{\lceil\log(T)\rceil \lceil3\log(T)\rceil}{\delta}\right)$.
	
	\paragraph{Final result}
	Combining the previous results and applying a Union Bound, we have, with probability at least $1-3\delta$:
	\begin{align*}
		R_T \leq \mathcal{O}\left(\frac{K\ln T}{\eta}+\eta\sum_{T=1}^T\ell_t(a_t)\right) - \frac{\boldsymbol h_T^\top\uvec}{10\eta\ln T} + \mathcal{O}\left(\sqrt{\sum_{t=1}^{T}\lvec_t^\top\xvec_t\ln\left(\frac{1}{\delta}\right)}+\ln\left(\frac{1}{\delta}\right)\right) + \\ \mkern150mu H\left( \sqrt{8\sum_{t=1}^T\lvec_t^\top\uvec\cdot\boldsymbol{ h}_T^\top\uvec \ln\left(\frac{H}{\delta}\right)} + 2\boldsymbol{ h}_T^\top\uvec\ln\left(\frac{H}{\delta}\right)\right).
	\end{align*}
	Now we notice that, applying Freedman inequality, it is easy to show the following bound:
	\begin{equation*}
		\sum_{T=1}^T\ell_t(a_t) - \sum_{t=1}^T \lvec_t^\top \xvec_t \leq 2\sqrt{\sum_{t=1}^T \lvec_t^\top \xvec_t \ln\left(\frac{1}{\delta}\right)} + \ln\left(\frac{1}{\delta}\right),
	\end{equation*}
	which holds with probability at least $1-\delta$ and implies, by AM-GM inequality:
	\begin{equation*}
		\sum_{t=1}^T\ell_t(a_t) \leq 2\sum_{t=1}^T \lvec_t^\top \xvec_t + 2\ln\left(\frac{1}{\delta}\right).
	\end{equation*}
	Now, going back to the regret bound, it holds, with probability at least $1-4\delta$, by Union Bound:
	\begin{subequations}
		\begin{align}
			R_T&\leq \mathcal{O}\left(\frac{K\ln T}{\eta}+\eta\sum_{t=1}^T\lvec_t^\top\xvec_t + \eta\ln\left(\frac{1}{\delta}\right) \right) - \frac{\boldsymbol h_T^\top\uvec}{10\eta\ln T} + \mathcal{O}\left(\sqrt{\sum_{t=1}^{T}\lvec_t^\top\xvec_t\ln\left(\frac{1}{\delta}\right)}+\ln\left(\frac{1}{\delta}\right)\right)  \nonumber\\ &\mkern150mu + H\left( \sqrt{8\sum_{t=1}^T\lvec_t^\top\uvec\cdot\boldsymbol{ h}_T^\top\uvec \ln\left(\frac{H}{\delta}\right)} + 2\boldsymbol{ h}_T^\top\uvec\ln\left(\frac{H}{\delta}\right)\right)\nonumber\\
			& =\mathcal{O}\left(\frac{K\ln T}{\eta}+\eta\sum_{t=1}^T\lvec_t^\top\xvec_t + \ln\left(\frac{1}{\delta}\right) + \sqrt{\sum_{t=1}^{T}\lvec_t^\top\xvec_t\ln\left(\frac{1}{\delta}\right)}\right) - \frac{\boldsymbol h_T^\top\uvec}{10\eta\ln T}  \nonumber\\ &\mkern150mu +H\left( \sqrt{8\frac{20H\eta\ln T}{20H\eta \ln T}\sum_{t=1}^T\lvec_t^\top\uvec\cdot\boldsymbol{ h}_T^\top\uvec \ln\left(\frac{H}{\delta}\right)} + 2\boldsymbol{ h}_T^\top\uvec\ln\left(\frac{H}{\delta}\right)\right)\nonumber\\
			&\leq \mathcal{O}\left(\frac{K\ln T}{\eta}+\eta\sum_{t=1}^T\lvec_t^\top\xvec_t + \ln\left(\frac{1}{\delta}\right) + \sqrt{\sum_{t=1}^{T}\lvec_t^\top\xvec_t\ln\left(\frac{1}{\delta}\right)}\right) - \frac{\boldsymbol h_T^\top\uvec}{10\eta\ln T}  \nonumber\\ &\mkern75mu +160H^2 \eta\ln T\sum_{t=1}^T\lvec_t^\top\uvec \ln\left(\frac{H}{\delta}\right)  + \frac{H}{20H\eta\ln T}\cdot\boldsymbol{ h}_T^\top\uvec + 2H\boldsymbol{ h}_T^\top\uvec\ln\left(\frac{H}{\delta}\right)\label{eq1:regret_soft}\\
			& \leq \mathcal{O}\left(\frac{K\ln T}{\eta}+\eta\sum_{t=1}^T\lvec_t^\top\xvec_t + \ln\left(\frac{1}{\delta}\right) + \sqrt{\sum_{t=1}^{T}\lvec_t^\top\xvec_t\ln\left(\frac{1}{\delta}\right)}\right) \nonumber\\ & \mkern350mu+ 160H^2 \eta\ln T\sum_{t=1}^T\lvec_t^\top\uvec \ln\left(\frac{H}{\delta}\right) \label{eq2:regret_soft}\\
			& \leq \widetilde{\mathcal{O}}\left(\frac{K}{\eta}+\eta\sum_{t=1}^T\lvec_t^\top\xvec_t + \sqrt{\sum_{t=1}^{T}\lvec_t^\top\xvec_t\ln\left(\frac{1}{\delta}\right)} + \eta\sum_{t=1}^T\lvec_t^\top\uvec \ln\left(\frac{1}{\delta}\right)\right)\nonumber\\
			& \leq \widetilde{\mathcal{O}}\left(\frac{K}{\eta}+\eta\sum_{t=1}^T\lvec_t^\top\xvec_t + \sqrt{\sum_{t=1}^{T}\lvec_t^\top\xvec_t\ln\left(\frac{1}{\delta}\right)} + \eta\sum_{t=1}^T\lvec_t^\top\xvec^* \ln\left(\frac{1}{\delta}\right)\right) \label{eq3:regret_soft},
		\end{align}
	\end{subequations}
	where Inequality~\eqref{eq1:regret_soft} holds by AM-GM inequality, Inequality~\eqref{eq2:regret_soft} holds for $\eta \le \frac{1}{40H\ln T\ln\left(\frac{H}{\delta}\right)}$ and Inequality~\eqref{eq3:regret_soft} holds after noticing that by definition of $\uvec$, $\sum_{t=1}^T\lvec_t^\top\uvec\leq \sum_{t=1}^T\lvec_t^\top\xvec^*+K$. Since $\eta \le \frac{1}{2}$, we have:
	\begin{align*}
		\eta\sum_{t=1}^T\lvec_t^\top\xvec_t \le \frac{1}{2}R_T + \eta\sum_{t=1}^T\lvec_t^\top\xvec^*,
	\end{align*}
	and the regret can be rewritten as:
	\begin{align*}
		R_T \le \widetilde{\mathcal{O}}\left( \frac{2K}{\eta} + 2\sqrt{\left(\frac{1}{2}R_T + \eta\sum_{t=1}^T \lvec_t^\top \xvec^*\right)\ln\left(\frac{1}{\delta}\right)}+ 4\eta\sum_{t=1}^T\lvec_t^\top\xvec^*\right).
	\end{align*}
	We then set $\eta= \min\left\{\frac{1}{40H\ln T\ln\left(\frac{H}{\delta}\right)}, \sqrt{\frac{K}{\sum_{t=1}^{T}\lvec_t^\top\xvec^* \ln\left(\frac{1}{\delta}\right)}}\right\}$ and we solve the quadratic inequality in $R_T$, obtaining
	the following regret bound:
	\begin{align*}
		R_T \le \widetilde{\mathcal{O}}\left( \sqrt{K\sum_{t=1}^{T}\lvec_t^\top\xvec^* \ln\left(\frac{1}{\delta}\right)}\right).
	\end{align*}
	This concludes the proof.
\end{proof}

\subsection{Cumulative Violations of \algnamesoft}
\label{proofs:violation}
\violationsoft*
\begin{proof}
	First, we underline that the following analysis holds for every constraint $i \in [m]$, including the one being violated the most, \emph{i.e.}, $\widetilde{i} \in \argmax_{i \in [m]} \sum_{t=1}^T \left[\boldsymbol{g}_i^\top\xvec_t - \alpha_i\right]^+$. 
	
	By Lemma~\ref{lem:solution} we have that, under the clean event, $\xvec_t \in \widetilde{\mathcal{S}}_t$. By construction, this implies $(\widehat{\boldsymbol{g}}_{t-1,i}-\boldsymbol{\beta}_{t-1})^\top \xvec_t \le \alpha_{{i}} + \frac{K}{T}$ for every $t \in [T]$. Employing Lemma~\ref{lemma:global_hoeffding}, we get, under the clean event:
	\begin{align*}
		\left[\gvec_{{i}}^\top \xvec_t -\alpha_{{i}}\right]^+ \le \frac{K}{T} + 2\boldsymbol{\beta}_{t-1}^\top \xvec_t.
	\end{align*}
	To bound the second term we proceed as follows:
	\begin{align}
		\sum_{t=1}^T\boldsymbol{\beta}_{t-1}^\top \xvec_t &= \sum_{t=1}^T \sum_{a=1}^K \beta_{t-1}(a)x_t(a)  \notag \\
		&\le \sum_{t=1}^T \sum_{a=1}^K \beta_{t-1}(a) \mathbbm{1}_t(a)+ \sqrt{2T\log (\delta^{-1})} \label{eq:00} \\
		&= \sqrt{4\ln\left(\frac{TKm}{\delta}\right)} \sum_{t=1}^T \sum_{a=1}^K  \frac{\mathbbm{1}_t(a)}{\sqrt{\max\{1,N_{t-1}(a)\}}}  + \sqrt{2T\ln (\delta^{-1})} \notag \\
		&\le 3\sqrt{4\ln\left(\frac{TKm}{\delta}\right)}  \sum_{a=1}^K  \sqrt{N_T(a)} + \sqrt{2T\ln (\delta^{-1})} \label{eq:01}\\
		&\le 3\sqrt{4KT\ln\left(\frac{TKm}{\delta}\right)} + \sqrt{2T\ln (\delta^{-1})} \label{eq:02},
	\end{align}
	where Inequality~\eqref{eq:00} follows from Azuma Inequality,with probability at least $1-\delta$, Inequality~\eqref{eq:01} holds since $\sum_{t=1}^T \frac{1}{\sqrt{t}} \le 3\sqrt{T}$, and Inequality~\eqref{eq:02} from Cauchy-Schwarz Inequality and the fact that $\sum_{a=1}^K N_T(a) = T$.
	
	Finally, it holds:
	\begin{align*}
		V_T &= \sum_{t=1}^T \left[\boldsymbol{g}_{{i}}^\top \xvec_t - \alpha_{{i}}\right]^+ \\
		&\le  \sum_{t=1}^T \left( \frac{K}{T} + 2\boldsymbol{\beta}_{t-1}^\top \xvec_t\right) \\
		&\le K + 3\sqrt{4KT\ln\left(\frac{TKm}{\delta}\right)} + \sqrt{2T\ln(\delta^{-1})}.
	\end{align*}
	Employing a Union Bound concludes the proof.
\end{proof}
\section{Omitted Proofs for Hard Constraints}
\label{App:hard}

\subsection{Cumulative Regret of \algnamehard}
\label{proofs:regrethard}
\firstpart*
\begin{proof}
	We first split the round in two sets $T_1,T_2$. $T_1$ encompasses the rounds $t\in[T]$ s.t. $\gamma_{t-1}\leq 1/2$, $T_2$ the remaining rounds.
	
	\paragraph{Bound in $T_1$} 
	We apply the Cauchy–Schwarz inequality obtaining the following bound:
	\begin{align*}
		\sum_{t\in T_1} \gamma_{t-1} \lvec_t^\top(\xvec^\diamond - \xvec^*) & \leq \sqrt{\left(\sum_{t\in T_1} \gamma^2_{t-1}\right) \left(\sum_{t\in T_1}\left(\lvec_t^\top(\xvec^\diamond - \xvec^*)\right)^2\right)} \\ 
		& =\sqrt{\sum_{t\in T_1} \gamma^2_{t-1}}\cdot \sqrt{\sum_{t=1}^T\left(\lvec_t^\top(\xvec^\diamond - \xvec^*)\right)^2}. 
	\end{align*}
	We will now focus on the bounding the sequence $\sum_{t\in T_1}\gamma^2_{t-1}$.
	
	We proceed as follows:
	\begin{align*}
		\sum_{t\in T_1} \gamma^2_{t-1} &= \sum_{t\in T_1}\max_{  i\in[m]}\left\{\frac{\min\{(\widehat{\gvec}_{t-1,i}+\boldsymbol{\beta}_{t-1})^\top \widetilde\xvec_{t},1\}-\alpha_i}{\min\{(\widehat{\gvec}_{t-1,i}+\boldsymbol{\beta}_{t-1})^\top \widetilde\xvec_{t},1\}-\theta_i}\right\}^2 \\
		& \leq \sum_{t\in T_1}\max_{  i\in[m]}\left\{\frac{(\widehat{\gvec}_{t-1,i}+\boldsymbol{\beta}_{t-1})^\top \widetilde\xvec_{t}-\alpha_i}{(\widehat{\gvec}_{t-1,i}+\boldsymbol{\beta}_{t-1})^\top \widetilde\xvec_{t}-\theta_i}\right\}^2 \\
		& \leq \sum_{t\in T_1} \left(\frac{2\boldsymbol{\beta}_{t-1}^\top \widetilde{\xvec}_t+ \frac{K}{T}}{\rho}\right)^2\\
		&\leq \sum_{t\in T_1} 2\left(\frac{2\boldsymbol{\beta}_{t-1}^\top \widetilde{\xvec}_t}{\rho}\right)^2 + \frac{2K^2}{\rho^2} ,
	\end{align*}
	where the second inequality holds by definition of $\rho$ and by Lemma~\ref{lemma:global_hoeffding}, with probability at least $1-\delta$.
	
	Thus we bound the following quantity:
	\begin{align*}
		\sum_{t\in T_1}  \left(\boldsymbol{\beta}_{t-1}^\top \widetilde{\xvec}_t\right)^2 & \leq \sum_{t\in T_1} \left(2(1-\gamma_{t-1})\boldsymbol{\beta}_{t-1}^\top \widetilde{\xvec}_t\right)^2 \\
		& \leq 4\sum_{t=1}^T \left(\boldsymbol{\beta}_{t-1}^\top {\xvec}_t\right)^2 \\
		& = 4\sum_{t=1}^T \left(\sum_{a=1}^K\sqrt{\frac{4\ln(TKm/\delta)}{\max\{1,N_{t-1}(a)\}}}x_t(a)\right)^2 \nonumber\\
		& \leq 4K \sum_{t=1}^T \sum_{a=1}^K\left(\sqrt{\frac{4\ln(TKm/\delta)}{\max\{1,N_{t-1}(a)\}}}x_t(a)\right)^2 \nonumber \\
		& \leq K16\ln(TKm/\delta) \sum_{t=1}^T \sum_{a=1}^K\frac{1}{\max\{1,N_{t-1}(a)\}}x_t(a) \nonumber\\
		&= K16\ln(TKm/\delta) \sum_{t=1}^T \sum_{a=1}^K\left( \frac{x_t(a)-\mathbbm{1}_t(a)}{\max\{1,N_{t-1}(a)\}} + \frac{\mathbbm{1}_t(a)}{\max\{1,N_{t-1}(a)\}}\right),
	\end{align*}
	where the first step holds since $\gamma_{t-1}\leq1/2$.
	
	We can bound the second term as:
	\begin{equation*}
		\sum_{t=1}^T\sum_{a=1}^K\frac{\mathbbm{1}_t(a)}{\max\{1,N_{t-1}(a)\}} \leq K\left(1+\sum_{t=1}^T\frac{1}{t}\right) \leq 3K+ 2K\ln(T).
	\end{equation*}
	To bound the first term we notice that it is a martingale difference sequence in which any martingale difference is bounded by $1$. Thus, we proceed as follows:
	\begin{align*}
		\mathbb{E}_t\left[\left(\sum_{a=1}^K\frac{x_t(a)-\mathbbm{1}_t(a)}{\max\{1,N_{t-1}(a)\}}\right)^2\right]
		& \leq \mathbb{E}_t\left[K\sum_{a=1}^K\left(\frac{x_t(a)-\mathbbm{1}_t(a)}{\max\{1,N_{t-1}(a)\}}\right)^2\right]  \\
		&= K\sum_{a=1}^K \frac{\mathbb{E}_t\left[(x_t(a)-\mathbbm{1}_t(a))^2\right]}{\max\{1,N^2_{t-1}(a)\}}\\
		& = K\sum_{a=1}^K \frac{x_t(a)(1-x_t(a))}{\max\{1,N^2_{t-1}(a)\}}\\
		&\leq  K\sum_{a=1}^K \frac{x_t(a)}{\max\{1,N_{t-1}(a)\}},\\
	\end{align*}
	and we apply Lemma~9 of~\citep{JinLearningAdversarial2019} with $\lambda=1/2K$ to obtain, with probability at least $1-\delta$:
	\begin{equation*}
		\sum_{t=1}^T\sum_{a=1}^K\frac{x_t(a)-\mathbbm{1}_t(a)}{\max\{1,N_{t-1}(a)\}} \leq \frac{1}{2}\sum_{t=1}^T\sum_{a=1}^K \frac{x_t(a)}{\max\{1,N_{t-1}(a)\}} + 2K\ln(1/\delta).
	\end{equation*}
	Thus, employing a Union Bound, we obtain, with probability at least $1-2\delta$:
	\begin{equation*}
		\sum_{t\in T_1}  \left(\boldsymbol{\beta}_{t-1}^\top \widetilde{\xvec}_t\right)^2\leq 96K^2\ln(TKm/\delta) + 128K^2\ln^2(TKm/\delta),
	\end{equation*}
	and similarly:
	\begin{equation*}
		\sum_{t\in T_1} \gamma^2_{t-1}\leq \frac{768}{\rho^2}K^2\ln(TKm/\delta) + \frac{1024}{\rho^2}K^2\ln^2(TKm/\delta) + \frac{2K^2}{\rho^2}.
	\end{equation*}
	To conclude, we have, with probability at least $1-2\delta$:
	\begin{align*}
		\sum_{t\in T_1} \gamma_{t-1} & \lvec_t^\top(\xvec^\diamond - \xvec^*) \\ & \leq   \sqrt{\sum_{t\in T_1} \gamma^2_{t-1}}\cdot \sqrt{\sum_{t=1}^T\left(\lvec_t^\top(\xvec^\diamond - \xvec^*)\right)^2} \\
		& \leq \sqrt{\frac{768}{\rho^2}K^2\ln(TKm/\delta) + \frac{1024}{\rho^2}K^2\ln^2(TKm/\delta) + \frac{2K^2}{\rho^2}}\cdot \sqrt{\sum_{t=1}^T\left(\lvec_t^\top(\xvec^\diamond - \xvec^*)\right)^2} \\
		& \leq \frac{43K\ln(TKm/\delta)}{\rho} \sqrt{\sum_{t=1}^T\left(\lvec_t^\top(\xvec^\diamond - \xvec^*)\right)^2}.
	\end{align*}

	\paragraph{Bound in $T_2$}
	We first apply the Hölder inequality to obtain the following bound:
	\begin{align*}
		\sum_{t\in T_2} \gamma_{t-1} \lvec_t^\top(\xvec^\diamond - \xvec^*) & \leq \sum_{t\in T_2} \gamma_{t-1}.
	\end{align*}
	To bound the aforementioned terms, we upper bound the cardinality of the set $T_2$. This is done by first bounding the cardinality of the following set:
	\begin{equation*}
		T_3 = \left\{t\in[T]: \sum_{a=1}^K \beta_{t-1}(a)\mathbbm{1}_t(a)\geq \frac{\rho^2}{8}\right\}.
	\end{equation*}
	From the definition we can state the following lower bound:
	\begin{equation*}
		\sum_{t\in T_3} \sum_{a=1}^K \beta_{t-1}(a)\mathbbm{1}_t(a) \geq |T_3|\frac{\rho^2}{8}.
	\end{equation*}
	We first bound the quantity $\sum_{t\in T_3} \sum_{a=1}^K \beta_{t-1}(a)\mathbbm{1}_t(a)$ similarly to what done in Theorem~\ref{thm:viol} as:
	\begin{equation*}
		\sum_{t\in T_3} \sum_{a=1}^K \beta_{t-1}(a)\mathbbm{1}_t(a)\leq 3\sqrt{4K|T_3|\ln\left(\frac{TKm}{\delta}\right)},
	\end{equation*}
	which holds with probability at least $1-\delta$.
	Combining the previous bounds, we obtain:
	\begin{equation*}
		|T_3|\frac{\rho^2}{8} \leq 3\sqrt{4K|T_3|\ln\left(\frac{TKm}{\delta}\right)},
	\end{equation*}
	which implies:
	\begin{equation*}
		|T_3|\leq \frac{2304}{\rho^4}K\ln\left(\frac{TKm}{\delta}\right).
	\end{equation*}
	Thus, we employ the reverse Markov inequality to bound the probability that $t\in T_2\cap T_3$. First we lower bound the following quantity:
	\begin{align*}
		\mathbb{E}_t \left[\sum_{a=1}^K \beta_{t-1}(a)\mathbbm{1}_t(a)\right] & = \boldsymbol{\beta}_{t-1}^\top \xvec_{t} \\
		& \geq (1-\gamma_{t-1})\boldsymbol{\beta}_{t-1}^\top\widetilde{\xvec}_t\\
		& \geq \frac{\rho}{1+\rho}\boldsymbol{\beta}_{t-1}^\top\widetilde{\xvec}_t\\
		& \geq \frac{\rho^2}{4}-\frac{\rho K}{T}\\
		&\geq \frac{\rho^2}{6},
	\end{align*}
	where the last steps hold since:
	\begin{align*}
		\gamma_{t-1}\leq\max_{i\in[m]}\left\{\frac{1-\alpha_i}{1-\theta_i} \right\}
		= \max_{i\in[m]}\left\{\frac{1-\alpha_i}{1+\rho - \alpha_i}\right\} 
		\leq \frac{1}{1+\rho},
	\end{align*} 
	$\boldsymbol{\beta}_{t-1}^\top\widetilde{\xvec}_t\geq \frac{\rho}{2}-\frac{K}{T}$ when $t\in T_2$, under the clean event and for $\rho\geq\frac{12K}{T}$. We can now employ the reverse Markov inequality to state:
	\begin{equation*}
		\mathbb{P}\left\{\sum_{a=1}^K \beta_{t-1}(a)\mathbbm{1}_t(a)\geq \frac{\rho^2}{8}| \mathcal{F}_{t-1}\right\}\geq \frac{\frac{\rho^2}{6}- \frac{\rho^2}{8}}{1-\frac{\rho^2}{8}} \geq \frac{\rho^2}{24}.
	\end{equation*}
	Employing the equation above we can state that:
	\begin{equation*}
		\frac{2304}{\rho^4}K\ln\left(\frac{TKm}{\delta}\right)\geq 	|T_3| \geq \frac{\rho^2}{24} |T_2|,
	\end{equation*}
	from which:
	\begin{equation*}
		|T_2|\leq \frac{55296}{\rho^6} K\ln\left(\frac{TKm}{\delta}\right).
	\end{equation*}
	To conclude, we have, with probability at least $1-\delta$, the following bound:
	\begin{equation*}
		\sum_{t\in T_2} \gamma_{t-1} \lvec_t^\top(\xvec^\diamond - \xvec^*) \leq \sum_{t\in T_2} \gamma_{t-1}\leq |T_2| \leq \frac{55296}{\rho^6} K\ln\left(\frac{TKm}{\delta}\right).
	\end{equation*}
	
	\paragraph{Combining everything}
	
	Considering the quantity of interest, we have the following bound with probability at least $1-2\delta$ by Union Bound:
	\begin{align*}
		\sum_{t=1}^T \gamma_{t-1} \lvec_t^\top(\xvec^\diamond - \xvec^*)& = \sum_{t\in T_1} \gamma_{t-1} \lvec_t^\top(\xvec^\diamond - \xvec^*) + \sum_{t\in T_2} \gamma_{t-1} \lvec_t^\top(\xvec^\diamond - \xvec^*)\\
		& \leq \frac{43K\ln(TKm/\delta)}{\rho} \sqrt{\sum_{t=1}^T\left(\lvec_t^\top(\xvec^\diamond - \xvec^*)\right)^2}+ \frac{55296}{\rho^6} K\ln\left(\frac{TKm}{\delta}\right).\\
	\end{align*}
	This concludes the proof.	
\end{proof}

\secondpart*
\begin{proof}
	Similarly to the analysis employed to prove Theorem~\ref{thm:reg_soft}, we decompose the quantity of interest as follows:
	\begin{align*}
		&\sum_{t=1}^T (1-\gamma_{t-1}) \lvec_t^\top(\widetilde{\xvec}_t - \xvec^*)\\
		&\leq \sum_{t=1}^T(1-\gamma_{t-1})\widehat\lvec_t^\top \left(\widetilde{\xvec}_t - \uvec\right) + \sum_{t=1}^T(1-\gamma_{t-1}) (\lvec_t-\widehat{\lvec_t})^\top \widetilde{\xvec}_t + \sum_{t=1}^T(1-\gamma_{t-1}) (\widehat{\lvec_t}-\lvec_t)^\top \uvec\\ & \mkern550mu+ \sum_{t=1}^T \lvec_t^\top \left(\uvec- \xvec^*\right)\\
		&\leq \sum_{t=1}^T(1-\gamma_{t-1}) \widehat\lvec_t^\top \left(\widetilde{\xvec}_t - \uvec\right) + \sum_{t=1}^T(1-\gamma_{t-1}) (\lvec_t-\widehat{\lvec_t})^\top \widetilde{\xvec}_t + \sum_{t=1}^T(1-\gamma_{t-1}) (\widehat{\lvec_t}-\lvec_t)^\top \uvec+ K.
	\end{align*}
	We proceed bounding each term separately.
	
	\paragraph{Bound on the first term} 
	
	To bound the first term, we can apply a similar analysis to the one of Lemma~\ref{lem:soft_regret}, since, $\widetilde\xvec_{t}$ is played independently on $\gamma_{t-1}$ except for the loss estimator, to attain, under the clean event:
	\begin{align*}
		\sum_{t=1}^T(1-\gamma_{t-1}) &\widehat\lvec_t^\top \left(\widetilde{\xvec}_t - \uvec\right)\\&\leq \mathcal{O}\left(\frac{K\ln T}{\eta}\right) - \min_{t\in[T]}(1-\gamma_{t-1}) \frac{\boldsymbol h_T^\top\uvec}{10\eta\ln T} + \sum_{t=1}^T(1-\gamma_{t-1})\sum_{a=1}^K \eta_{t,a}\widetilde x^2_{t}(a)\widehat{\ell_t}^2(a)\\
		&\leq \mathcal{O}\left(\frac{K\ln T}{\eta}\right) - \rho \frac{\boldsymbol h_T^\top\uvec}{10\eta\ln T} + \sum_{t=1}^T(1-\gamma_{t-1})\sum_{a=1}^K \eta_{t,a}\widetilde x^2_{t}(a)\widehat{\ell_t}^2(a).
	\end{align*}
	To bound the last term, we proceed as follows:
	\begin{align*}
		(1-\gamma_{t-1})\eta_{t,a}\widetilde x^2_{t}(a)\widehat{\ell_t}^2(a)&= 	(1-\gamma_{t-1})\eta_{t,a}\widetilde x^2_{t}(a)\frac{\ell_t^2(a)}{x^2_{t}(a)}\mathbbm{1}\{a_t=a\}\\ 
		& = \frac{1}{1-\gamma_{t-1}} (1-\gamma_{t-1})^2\eta_{t,a}\widetilde x^2_{t}(a)\frac{\ell_t^2(a)}{x^2_{t}(a)}\mathbbm{1}\{a_t=a\} \\
		&\leq \frac{1}{\rho}\eta_{t,a} x^2_{t}(a)\frac{\ell_t^2(a)}{x^2_{t}(a)}\mathbbm{1}\{a_t=a\} \\
		&\leq \frac{1}{\rho}\eta_{t,a_t}\ell_t^2(a_t)\\
		& \leq \frac{1}{\rho}\eta_{t,a_t}\ell_t(a_t) \\
		& \leq \frac{1}{\rho}\eta_{T,a_t}\ell_t(a_t)\\
		&\leq \frac{5}{\rho}\eta\ell_t(a_t).
	\end{align*}
	Thus, we obtain the following final bound, which holds with probability at least $1-\delta$:
	\begin{equation*}
		\sum_{t=1}^T(1-\gamma_{t-1}) \widehat\lvec_t^\top \left(\widetilde{\xvec}_t - \uvec\right)\leq \mathcal{O}\left(\frac{K\ln T}{\eta}+\frac{\eta}{\rho}\sum_{t=1}^T\ell_t(a_t)\right) - \rho\frac{\boldsymbol h_T^\top\uvec}{10\eta\ln T}.
	\end{equation*}
	\paragraph{Bound on the second term} To bound the second term we first notice that $(1-\gamma_{t-1})\widetilde{\xvec}_t\leq \xvec_t$. Thus, we proceed similarly to~Theorem~\ref{thm:reg_soft}, noticing that the quantity of interest is Martingale difference sequence, where any difference is bounded as:
	\begin{align*}
		\left|(1-\gamma_{t-1})\left(\mathbb{E}_t\left[\widehat{\lvec_t}\right]-\widehat{\lvec_t}\right)^\top\widetilde{ \xvec}_t\right|&\leq (1-\gamma_{t-1})\widehat{\lvec_t}^\top \widetilde\xvec_t\\
		& \leq \widehat{\lvec_t}^\top \xvec_t\\
		& = \sum_{a=1}^K x_t(a)\frac{\ell_t(a)}{x_t(a)}\mathbbm{1}\{a_t=a\}\\
		&\leq 1.
	\end{align*}
	Furthermore, we bound the second moment as:
	\begin{align*}
		\mathbb{E}_t\left[\left((1-\gamma_{t-1})\left(\mathbb{E}_t\left[\widehat{\lvec_t}\right]-\widehat{\lvec_t}\right)^\top \widetilde\xvec_t\right)^2\right] & =  	\mathbb{E}_t\left[\left((1-\gamma_{t-1})\left(\lvec_t-\widehat{\lvec_t}\right)^\top \widetilde\xvec_t\right)^2\right]\\
		& \leq \mathbb{E}_t\left[\left((1-\gamma_{t-1})\widehat\lvec_t^\top\widetilde\xvec_t\right)^2 \right]\\
		& \leq \mathbb{E}_t\left[\left(\widehat\lvec_t^\top\xvec_t\right)^2 \right]\\
		& \leq \mathbb{E}_t\left[\widehat\lvec_t^\top\xvec_t \right] \\
		&=\lvec_t^\top\xvec_t.
	\end{align*}
	Thus we can apply the Freedman inequality to attain, with probability at least $1-\delta$:
	\begin{equation*}
		\sum_{t=1}^T(1-\gamma_{t-1}) (\lvec_t-\widehat{\lvec_t})^\top \widetilde{\xvec}_t \leq  \mathcal{O}\left(\sqrt{\sum_{t=1}^{T}\lvec_t^\top\xvec_t\ln\left(\frac{1}{\delta}\right)}+\ln\left(\frac{1}{\delta}\right)\right).
	\end{equation*}
	\paragraph{Bound on the third term}
	To bound the third term, we notice that the quantity of interest is a Martingale difference sequence. To apply the modified version of the Freedman inequality (see~\citep{lee2020bias}), we notice that:
	\begin{align*}
		(1-\gamma_{t-1})(\widehat{\lvec_t}-\lvec_t)^\top \uvec &\leq  
		(1-\gamma_{t-1})\sum_{a=1}^K\frac{1}{{x}_t(a)}u(a)\\
		& = (1-\gamma_{t-1})\sum_{a=1}^K\frac{1}{\gamma_{t-1}x^\diamond(a)+(1-\gamma_{t-1})\widetilde{x}_t(a)}u(a)\\
		& \leq (1-\gamma_{t-1})\sum_{a=1}^K\frac{1}{(1-\gamma_{t-1})\widetilde{x}_t(a)}u(a)\\
		& \leq \sum_{a=1}^K\frac{1}{\min_{\tau\in[t]}\widetilde{x}_\tau(a)}u(a)\\ & \leq \boldsymbol{h}_t^\top\uvec \in [1,T].
	\end{align*}
	We now focus on bounding the second moment as follows:
	\begin{align*}
		\mathbb{E}_t\left[ \left((1-\gamma_{t-1})(\widehat{\lvec_t}-\lvec_t)^\top \uvec\right)^2\right] & \leq \mathbb{E}_t\left[ \left((1-\gamma_{t-1})\widehat{\lvec_t}^\top \uvec\right)^2\right]\\
		& = \mathbb{E}_t\left[ (1-\gamma_{t-1})^2\frac{\ell^2_t(a_t) u^2(a_t)}{x^2_t(a_t)}\right]\\
		& \leq \mathbb{E}_t\left[ (1-\gamma_{t-1})^2\frac{\ell^2_t(a_t) u^2(a_t)}{(1-\gamma_t)^2\widetilde x^2_t(a_t)}\right]\\
		&\leq \sum_{a=1}^Ku^2(a)\ell_t(a) h_{T,a} \\
		& \leq \boldsymbol{ h}_T^\top\uvec \cdot \lvec_t^\top\uvec.
	\end{align*}
	Thus, with probability at least $1-\delta$, we have by Theorem 2.2 of~\citep{lee2020bias}:
	\begin{equation*}
		\sum_{t=1}^T (1-\gamma_{t-1})(\widehat{\lvec_t}-\lvec_t)^\top \uvec = H\left( \sqrt{8\sum_{t=1}^T\lvec_t^\top\uvec\cdot\boldsymbol{ h}_T^\top\uvec \ln\left(\frac{H}{\delta}\right)} + \boldsymbol{ h}_T^\top\uvec\ln\left(\frac{H}{\delta}\right)\right),
	\end{equation*}
	where $H=\ln\left(\frac{\lceil\ln(T)\rceil \lceil3\ln(T)\rceil}{\delta}\right)$.
	\paragraph{Final result} Combining the previous equations, we get, with probability at least $1-3\delta$, by Union Bound, the following bound:
	\begin{align*}
		\sum_{t=1}^T &(1-\gamma_{t-1})\lvec_t^\top(\widetilde{\xvec}_t - \xvec^*)  \\ &\leq  \mathcal{O}\left(\frac{K\ln T}{\eta}+\frac{\eta}{\rho}\sum_{t=1}^T\ell_t(a_t)\right) - \rho\frac{\boldsymbol h_T^\top\uvec}{10\eta\ln T}+  \mathcal{O}\left(\sqrt{\sum_{t=1}^{T}\lvec_t^\top\xvec_t\ln\left(\frac{1}{\delta}\right)}+\ln\left(\frac{1}{\delta}\right)\right)\\&\mkern10mu+ H\left( \sqrt{8\sum_{t=1}^T\lvec_t^\top\uvec\cdot\boldsymbol{ h}_T^\top\uvec \ln\left(\frac{H}{\delta}\right)} + \boldsymbol{ h}_T^\top\uvec\ln\left(\frac{H}{\delta}\right)\right) + K\\
		&\leq \mathcal{O}\left(\frac{K\ln T}{\eta}+\frac{\eta}{\rho}\sum_{t=1}^T\ell_t(a_t)+\sqrt{\sum_{t=1}^{T}\lvec_t^\top\xvec_t\ln\left(\frac{1}{\delta}\right)}\right) -\rho \frac{\boldsymbol h_T^\top\uvec}{10\eta\ln T}\\&\mkern10mu+ H\left( \sqrt{8\sum_{t=1}^T\lvec_t^\top\uvec\cdot\boldsymbol{ h}_T^\top\uvec \ln\left(\frac{H}{\delta}\right)} + \boldsymbol{ h}_T^\top\uvec\ln\left(\frac{H}{\delta}\right)\right),\\
	\end{align*}
	where $H=\ln\left(\frac{\lceil\ln(T)\rceil \lceil3\ln(T)\rceil}{\delta}\right)$.
	
	Finally, we proceed similarly to Theorem~\ref{thm:reg_soft}, obtaining:
	\begin{align*}
		\sum_{t=1}^T& (1-\gamma_{t-1}) \lvec_t^\top(\widetilde{\xvec}_t - \xvec^*)  \\ 
		&\leq \mathcal{O}\left(\frac{K\ln T}{\eta}+\frac{2\eta}{\rho}\sum_{t=1}^T\lvec_t^\top\xvec_t +\frac{2\eta}{\rho}\ln\left(\frac{1}{\delta}\right)+\sqrt{\sum_{t=1}^{T}\lvec_t^\top\xvec_t\ln\left(\frac{1}{\delta}\right)}\right) - \rho \frac{\boldsymbol h_T^\top\uvec}{10\eta\ln T}\\&\mkern10mu+ H\left( \sqrt{8\sum_{t=1}^T\lvec_t^\top\uvec\cdot\boldsymbol{ h}_T^\top\uvec \ln\left(\frac{H}{\delta}\right)} + \boldsymbol{ h}_T^\top\uvec\ln\left(\frac{H}{\delta}\right)\right)\\
		&= \mathcal{O}\left(\frac{K\ln T}{\eta}+\frac{2\eta}{\rho}\sum_{t=1}^T\lvec_t^\top\xvec_t +\frac{2\eta}{\rho}\ln\left(\frac{1}{\delta}\right)+\sqrt{\sum_{t=1}^{T}\lvec_t^\top\xvec_t\ln\left(\frac{1}{\delta}\right)}\right) - \rho \frac{\boldsymbol h_T^\top\uvec}{10\eta\ln T}\\&\mkern10mu+ H\left( \sqrt{8\frac{20\rho H\eta\ln T}{20\rho H\eta \ln T}\sum_{t=1}^T\lvec_t^\top\uvec\cdot\boldsymbol{ h}_T^\top\uvec \ln\left(\frac{H}{\delta}\right)} + \boldsymbol{ h}_T^\top\uvec\ln\left(\frac{H}{\delta}\right)\right)\\
		&\leq \mathcal{O}\left(\frac{K\ln T}{\eta}+\frac{2\eta}{\rho}\sum_{t=1}^T\lvec_t^\top\xvec_t +\frac{2\eta}{\rho}\ln\left(\frac{1}{\delta}\right)+\sqrt{\sum_{t=1}^{T}\lvec_t^\top\xvec_t\ln\left(\frac{1}{\delta}\right)}\right) - \rho \frac{\boldsymbol h_T^\top\uvec}{10\eta\ln T}\\&\mkern10mu+ \frac{160H^2\eta\ln T}{\rho }\sum_{t=1}^T\lvec_t^\top\uvec\ln\left(\frac{H}{\delta}\right)+\frac{\rho H}{20H\eta\ln T}\boldsymbol{ h}_T^\top\uvec + H\boldsymbol{ h}_T^\top\uvec\ln\left(\frac{H}{\delta}\right)\\
		& \leq \widetilde{\mathcal{O}}\left(\frac{K}{\eta}+ \frac{\eta}{\rho}\sum_{t=1}^T\lvec_t^\top\xvec_t+ \frac{K\eta}{\rho}\ln\left(\frac{1}{\delta}\right)+\sqrt{\sum_{t=1}^{T}\lvec_t^\top\xvec_t\ln\left(\frac{1}{\delta}\right)}+\frac{\eta}{\rho}\sum_{t=1}^T\lvec_t^\top\xvec^*\ln\left(\frac{1}{\delta}\right)\right),
	\end{align*}
	where the first step holds by Freedman inequality and a union bound, setting the confidence to $1-4\delta$, and AM-GM inequality, the third step by AM-GM inequality and the last step holds for $\eta \le \frac{\rho}{40H\ln T\ln\left(\frac{H}{\delta}\right)}$.
	
	This concludes the proof.
\end{proof}

\regrethard*
\begin{proof}
	We first notice that the regret can be decomposed as:
	\begin{align*}
		R_T&\coloneq \sum_{t=1}^T \lvec_t^\top \xvec_t-\lvec_t^\top \xvec^*\\
		& = \sum_{t=1}^T \gamma_{t-1} \lvec_t^\top(\xvec^\diamond - \xvec^*) +  \sum_{t=1}^T (1-\gamma_{t-1}) \lvec_t^\top(\widetilde{\xvec}_t - \xvec^*).
	\end{align*}
	Employing Lemma~\ref{regret:first_part}, Lemma~\ref{regret:second_part} and a Union Bound, we have, with probability at least $1-5\delta$:
	\begin{align*}
		R_T&\leq \frac{43K\ln(TKm/\delta)}{\rho} \sqrt{\sum_{t=1}^T\left(\lvec_t^\top(\xvec^\diamond - \xvec^*)\right)^2}+ \frac{55296}{\rho^6} K\ln\left(\frac{TKm}{\delta}\right) \\ & \mkern10mu+ \widetilde{\mathcal{O}}\left(\frac{K}{\eta}+ \frac{\eta}{\rho}\sum_{t=1}^T\lvec_t^\top\xvec_t+ \frac{K\eta}{\rho}\ln\left(\frac{1}{\delta}\right)+\sqrt{\sum_{t=1}^{T}\lvec_t^\top\xvec_t\ln\left(\frac{1}{\delta}\right)}+\frac{\eta}{\rho}\sum_{t=1}^T\lvec_t^\top\xvec^*\ln\left(\frac{1}{\delta}\right)\right)\\
		& = \widetilde{\mathcal{O}}\Bigg(\frac{K}{\rho} \sqrt{\sum_{t=1}^T\left(\lvec_t^\top(\xvec^\diamond - \xvec^*)\right)^2}+\frac{K}{\eta}+ \frac{\eta}{\rho}\sum_{t=1}^T\lvec_t^\top\xvec_t+ \frac{K\eta}{\rho}\ln\left(\frac{1}{\delta}\right)+\sqrt{\sum_{t=1}^{T}\lvec_t^\top\xvec_t\ln\left(\frac{1}{\delta}\right)}\\&\mkern10mu+\frac{\eta}{\rho}\sum_{t=1}^T\lvec_t^\top\xvec^*\ln\left(\frac{1}{\delta}\right)\Bigg).
	\end{align*}
	Since $\eta \le \frac{\rho}{2}$, we have:
	\begin{align*}
		\frac{\eta}{\rho}\sum_{t=1}^T\lvec_t^\top\xvec_t \le \frac{1}{2}R_T + \frac{\eta}{\rho}\sum_{t=1}^T\lvec_t^\top\xvec^*,
	\end{align*}
	and the regret can be rewritten as:
	\begin{align*}
		R_T \le \widetilde{\mathcal{O}}\Bigg( \frac{K}{\rho} & \sqrt{\sum_{t=1}^T\left(\lvec_t^\top(\xvec^\diamond - \xvec^*)\right)^2}+\frac{2K}{\eta} \\& + 2\sqrt{\left(\frac{1}{2}R_T + \frac{\eta}{\rho}\sum_{t=1}^T\lvec_t^\top\xvec^*\right)\ln\left(\frac{1}{\delta}\right)}+ \frac{4\eta}{\rho}\sum_{t=1}^T\lvec_t^\top\xvec^*\ln\left(\frac{1}{\delta}\right)\Bigg).
	\end{align*}
	We then set $\eta= \min\left\{\frac{\rho}{40H\ln T\ln\left(\frac{H}{\delta}\right)}, \sqrt{\frac{K}{\sum_{t=1}^{T}\lvec_t^\top\xvec^* \ln\left(\frac{1}{\delta}\right)}}\right\}$ and we solve the quadratic inequality in $R_T$, obtaining
	the following regret bound:
	\begin{align*}
		R_T \le \widetilde{\mathcal{O}}\left(  \frac{K}{\rho} \sqrt{\sum_{t=1}^T\left(\lvec_t^\top(\xvec^\diamond - \xvec^*)\right)^2}+\frac{1}{\rho}\sqrt{K\sum_{t=1}^{T}\lvec_t^\top\xvec^* \ln\left(\frac{1}{\delta}\right)}\right).
	\end{align*}
	This concludes the proof.
\end{proof}

\subsection{Safety Property of \algnamehard}
\label{proofs:safety}
\safety*
\begin{proof}
	To prove the result, we consider separately the case in which $\gamma_t=0$ and $\gamma_t\in(0,1)$. Notice that, when $\gamma_t=1$, the constraints are trivially satisfied by definition of the strictly feasible solution.
	
	When $\gamma_t=0$, by construction, it holds that $\forall i \in[m]: (\widehat{\gvec}_{t,i}+\boldsymbol{\beta}_t)^\top \widetilde{\xvec}_{t+1}\leq \alpha_i$ and $\xvec_{t+1}=\widetilde{\xvec}_{t+1}$. Thus, we have:
	\begin{align*}
		\alpha_i &\geq (\widehat{\gvec}_{t,i}+\boldsymbol{\beta}_t)^\top \widetilde{\xvec}_{t+1} \\
		& = (\widehat{\gvec}_{t,i}+\boldsymbol{\beta}_t)^\top {\xvec}_{t+1} \\
		& \geq \gvec_i^\top \xvec_{t+1},
	\end{align*}
	where the last step holds thank to Lemma~\ref{lemma:global_hoeffding} with probability at least $1-\delta$.
	
	When $\gamma_t=(0,1)$, $\gamma_t$ can be selected either as $\frac{1-\alpha_i}{1-\theta_i}$ or as $\frac{(\widehat{\gvec}_{t,i}+\boldsymbol{\beta}_t)^\top \widetilde\xvec_{t+1}-\alpha_i}{(\widehat{\gvec}_{t,i}+\boldsymbol{\beta}_t)^\top \widetilde\xvec_{t+1}-\theta_i}$. In the first case, it holds:
	\begin{align*}
		\gvec_i^\top\xvec_{t+1}&= 	\gvec_i^\top (\gamma_t\xvec^\diamond+ (1-\gamma_t)\widetilde{\xvec}_{t+1}) \\
		&\leq \gamma_t \theta_i + (1-\gamma_t)\\
		& = \frac{1-\alpha_i}{1-\theta_i}(\theta_i-1) + 1\\
		& = \frac{\alpha_i-1}{\theta_i-1}(\theta_i-1) + 1\\
		& = \alpha_i.
	\end{align*}
	Similarly, in the latter case, it holds:
	\begin{align*}
		\gvec_i^\top\xvec_{t+1}&= 	\gvec_i^\top (\gamma_t\xvec^\diamond+ (1-\gamma_t)\widetilde{\xvec}_{t+1}) \\
		&= \gamma_t \theta_i + (1-\gamma_t)\gvec_i^\top\widetilde{\xvec}_{t+1}\\
		&\leq \gamma_t \theta_i + (1-\gamma_t)(\widehat\gvec_{t,i}+\boldsymbol{\beta}_t)^\top\widetilde{\xvec}_{t+1}\\
		& = \gamma_t(\theta_i-(\widehat\gvec_{t,i}+\boldsymbol{\beta}_t)^\top\widetilde{\xvec}_{t+1}) + (\widehat\gvec_{t,i}+\boldsymbol{\beta}_t)^\top\widetilde{\xvec}_{t+1}\\
		& = \frac{(\widehat{\gvec}_{t,i}+\boldsymbol{\beta}_t)^\top \widetilde\xvec_{t+1}-\alpha_i}{(\widehat{\gvec}_{t,i}+\boldsymbol{\beta}_t)^\top \widetilde\xvec_{t+1}-\theta_i} (\theta_i-(\widehat\gvec_{t,i}+\boldsymbol{\beta}_t)^\top\widetilde{\xvec}_{t+1}) + (\widehat\gvec_{t,i}+\boldsymbol{\beta}_t)^\top\widetilde{\xvec}_{t+1}\\
		& =  \frac{\alpha_i-(\widehat{\gvec}_{t,i}+\boldsymbol{\beta}_t)^\top \widetilde\xvec_{t+1}}{\theta_i-(\widehat{\gvec}_{t,i}+\boldsymbol{\beta}_t)^\top \widetilde\xvec_{t+1}} (\theta_i-(\widehat\gvec_{t,i}+\boldsymbol{\beta}_t)^\top\widetilde{\xvec}_{t+1}) + (\widehat\gvec_{t,i}+\boldsymbol{\beta}_t)^\top\widetilde{\xvec}_{t+1}\\
		& =\alpha_i.
	\end{align*}
	This concludes the proof.
\end{proof}

\subsection{Regret Lower Bound in MABs with Hard Constraints}
\label{proofs:lowerbound}
\lowerbound*
\begin{proof}
	We split the proof in two parts: first, we prove a $\Omega\left(\sqrt{\omega T}+\Delta\sqrt{T}/\rho\right)$ lower bound for the expected regret of any randomized algorithm, when the losses are stochastic. In a stochastic setting, $\omega$ represents the double of the expected value of the loss of the best strategy, while $\Delta$ the double of the expected value of the difference between the strictly safe strategy and the benchmark. Second, we show that there exists at least a sequence of loss belonging to $\mathcal{B}_{\omega, \Delta, T}$ such that the lower bound from the first step holds.
\paragraph{Step 1}
	Let $B(\omega)$ indicate a Bernoulli probability distribution with mean $\omega \in (0,1)$.
	We start by introducing four instances of the hard constrained bandit problem where both losses and constraints costs are stochastic. To do so, we assume that both losses and constraints are sampled in advance, and introduce the following auxiliary sequences, for all $t \in [T]$:
	\begin{align*}
		W_t&\sim B(\omega), \\
		Y_t &\sim B(\omega+\psi), \\
		B_t &\sim B(1/2), \\
		C_t &\sim B(1/2-\rho), \\
		D_t &\sim B(1/2+\epsilon), \
	\end{align*}
	where $\omega \in (0,\frac{1}{2}),\psi \in (0, 1-\omega)$ and $\rho,\epsilon \in (0, \frac{1}{2})$.
	We consider four instances $\{\boldsymbol{\nu}_i\}_{i=1}^4$, each with $K=3$ actions, namely $a_1, a_2$ and $a_3$. In Table \ref{tab:instances_lb}, we summarize how losses and constraints costs are generated for each action and in each instance.
	\begin{table}[]
		\begin{center}
			\begin{tabular}{c|c|c|c|c|c|c}
				& $\ell_{t}(a_1)$ & $\ell_{t}(a_2)$ & $\ell_{t}(a_3)$  & $g_{t}(a_1)$ & $g_{t}(a_2)$ & $g_{t}(a_3)$ \\ \hline
				$\boldsymbol{\nu}_1$ & $\nicefrac{W_t}{2}$         & $\nicefrac{W_t}{2}$         & $\nicefrac{(W_t + \Delta )}{2}$ & $D_t$      & $D_t$      & $C_t$      \\
				$\boldsymbol{\nu}_2$ & $\nicefrac{W_t}{2}$        &$ \nicefrac{Y_t}{2}$         & $\nicefrac{(W_t+ \Delta)}{2} $ & $B_t$      & $B_t$      & $C_t$      \\
				$\boldsymbol{\nu}_3$ & $ \nicefrac{Y_t}{2}$         & $\nicefrac{W_t}{2}$         & $\nicefrac{(W_t+ \Delta)}{2} $ & $B_t$      & $B_t$      & $C_t$      \\
				$\boldsymbol{\nu}_4$ & $ \nicefrac{Y_t}{2}$         & $ \nicefrac{Y_t}{2}$         & $\nicefrac{(W_t+ \Delta)}{2} $ & $B_t$      & $B_t$      & $C_t$     
			\end{tabular}
			\caption{Summary of the losses (first three columns) and constraints costs (last three columns) associated to each of the four instances.}
			\label{tab:instances_lb}
		\end{center}
	\end{table}
	Instance $\boldsymbol{\nu}_1$ is the only one having different constraints costs, while action $a_3$ has, for every $t \in [T]$,  the same loss in every instance. In all instances, $\xvec^\diamond = (0,0,1)$ is the only strictly safe strategy.
	
	We start by considering $\boldsymbol{\nu}_1$: in order to be safe with high probability, for a given confidence level $\delta \in (0,1)$, any algorithm must satisfy:
	\begin{align*}
	\mathbb{P}_{\boldsymbol{\nu}_1}\left(\forall t \in [T] : x_t(a_3) \ge \frac{\epsilon}{\epsilon + \rho}\right) \ge 1-\delta,
	\end{align*}
	where $\xvec_t$ is the strategy of the algorithm at time $t$, and $\mathbb{P}_{\boldsymbol{\nu}_1}$ is the probability measure of instance $\boldsymbol{\nu}_1$ which encompasses the randomness of both environment and algorithm. As a consequence, we have:
	\begin{align*}
    \mathbb{P}_{\boldsymbol{\nu}_1}\left(\sum_{t=1}^T x_t(a_3) \ge T\frac{\epsilon}{\epsilon + \rho}\right) \ge 1-\delta.
	\end{align*}
	We now leverage Pinsker's inequality to relate the probability measures $\mathbb{P}_{\boldsymbol{\nu}_1}$ and $\mathbb{P}_{\boldsymbol{\nu}_j}$, with $j \in \{2,3,4\}$, as follows:
	\begin{align*}
    \mathbb{P}_{\boldsymbol{\nu}_j}\left(\sum_{t=1}^T x_t(a_3) \ge T\frac{\epsilon}{\epsilon + \rho}\right) \ge \mathbb{P}_{\boldsymbol{\nu}_1}\left(\sum_{t=1}^T x_t(a_3) \ge T\frac{\epsilon}{\epsilon + \rho}\right)- \sqrt{\frac{1}{2}\text{KL}_T(\boldsymbol{\nu}_j, \boldsymbol{\nu}_1)},
	\end{align*}
	where $\text{KL}_T(\boldsymbol{\nu}_j, \boldsymbol{\nu}_1)$ is the KL-divergence between the probability measures $\mathbb{P}_{\boldsymbol{\nu}_j}$ and $\mathbb{P}_{\boldsymbol{\nu}_1}$ after $T$ rounds of history.
	
	Using the KL decomposition argument from Lemma 1 of \citep{gerchinovitz2016refined} (which holds for correlated losses, as in our case), and by upper bounding the KL between two Bernoulli r.v.s using the $\chi^2$-divergence (see Lemma 2.8 from \cite{tsybakov}), for every $j \in \{2,3,4\}$, we have
	\begin{align*}
		\text{KL}_T(\boldsymbol{\nu}_j, \boldsymbol{\nu}_1) &\le T(2\text{KL}(B(\omega+\psi),B(\omega))+\text{KL}(B(1/2),B(1/2+\epsilon)) ) \notag \\
		& \le T\left(2\frac{\psi^2}{\omega(1-\omega)}+\frac{\epsilon^2}{(\frac{1}{2}+\epsilon)(\frac{1}{2}-\epsilon)}\right) \notag\\
		&\le \frac{1}{4}
	\end{align*}
	where the last step is obtained by setting $\psi = \frac{1}{4}\sqrt{\frac{\omega(1-\omega)}{T}}$ and $\epsilon = \frac{1}{6}\sqrt{\frac{1}{T}}$.
	Thus, for every $j \in \{2,3,4\}$:
    \begin{align*}
    \mathbb{P}_{\boldsymbol{\nu}_j}\left(\sum_{t=1}^T x_t(a_3) \ge T\frac{\epsilon}{\epsilon + \rho}\right) \ge \frac{3}{4}-\delta.
	\end{align*}
	Which implies, for every $\delta \in (0,\frac{1}{2})$:
	\begin{align}
	\label{eq:lb_06}
		\mathbb{E}_{\boldsymbol{\nu}_j}\left[ \sum_{t=1}^T x_t(a_3) \right]\ge \frac{1}{4}T\frac{\epsilon}{\epsilon + \rho}.
	\end{align}
	Now, we focus on instance $\boldsymbol{\nu}_2$ and $\boldsymbol{\nu}_3$: in the former the optimal strategy is to always pull $a_1$, while in the latter to always pull $a_2$. Hence, we can compute the expected regrets as:
	\begin{align*}
		&\begin{cases}
			2\mathbb{E}_{\boldsymbol{\nu}_2}[R_T] = \mathbb{E}_{\boldsymbol{\nu}_2}\left[\sum_{t=1}^T \left(\omega +\psi -\psi x_t(a_1)+ (\Delta-\psi)x_t(a_3)\right)\right]-T\omega,\\
			2\mathbb{E}_{\boldsymbol{\nu}_3}[R_T] = \mathbb{E}_{\boldsymbol{\nu}_3}\left[\sum_{t=1}^T \left(\omega +\psi -\psi x_t(a_2)+ (\Delta-\psi)x_t(a_3)\right)\right]-T\omega,
		\end{cases}\, \\
		&\begin{cases}
			2\mathbb{E}_{\boldsymbol{\nu}_2}[R_T] = T\psi -\psi \mathbb{E}_{\boldsymbol{\nu}_2}\left[\sum_{t=1}^T (x_t(a_1)+x_t(a_3))\right]+ \Delta \mathbb{E}_{\boldsymbol{\nu}_2}\left[\sum_{t=1}^T x_t(a_3)\right],\\
			2\mathbb{E}_{\boldsymbol{\nu}_3}[R_T] = T\psi -\psi \mathbb{E}_{\boldsymbol{\nu}_3}\left[\sum_{t=1}^T (x_t(a_2)+x_t(a_3))\right]+ \Delta \mathbb{E}_{\boldsymbol{\nu}_3}\left[\sum_{t=1}^T  x_t(a_3)\right].
		\end{cases}\,
	\end{align*}
	We now leverage Lemma A.1 from \citep{auer2002nonstochastic} and relate the expectations $\mathbb{E}_{\boldsymbol{\nu}_j}$, for $j\in\{2,3\}$, with $\mathbb{E}_{\boldsymbol{\nu}_4}$:
	\begin{align}
		\label{eq:lb_02}
			\mathbb{E}_{\boldsymbol{\nu}_j}\left[\sum_{t=1}^T x_t(a_i)\right] \le \mathbb{E}_{\boldsymbol{\nu}_4}\left[\sum_{t=1}^T x_t(a_i)\right]+ T\sqrt{\frac{\ln 2}{2}\text{KL}_T(\boldsymbol{\nu}_4, \boldsymbol{\nu}_j)},
	\end{align}
	for every $a_i$, where $\text{KL}_T$ indicates the KL-between the probability measures after $T$ rounds of history, and can be bounded as\footnote{Note that KL divergence is invariant to scaling, thus we directly compare Bernoulli distribution instead of the distributions derived by dividing by two.}:
	\begin{align}
		\label{eq:lb_03}KL_T(\boldsymbol{\nu}_4,\boldsymbol{\nu}_j) \le  T\frac{\psi^2}{\omega(1-\omega)}.
	\end{align}
	It follows:
	\begin{align*}
		&\begin{cases}
			2\mathbb{E}_{\boldsymbol{\nu}_2}[R_T] = T\psi -\psi \mathbb{E}_{\boldsymbol{\nu}_4}\left[\sum_{t=1}^T (x_t(a_1)+x_t(a_3))\right]- \sqrt{\frac{\ln 2}{2}\frac{\psi^4 T^3}{\omega(1-\omega)}}+ \Delta \mathbb{E}_{\boldsymbol{\nu}_2}\left[\sum_{t=1}^T x_t(a_3)\right],\\
			2\mathbb{E}_{\boldsymbol{\nu}_3}[R_T] = T\psi -\psi \mathbb{E}_{\boldsymbol{\nu}_4}\left[\sum_{t=1}^T (x_t(a_2)+x_t(a_3))\right]- \sqrt{\frac{\ln 2}{2}\frac{\psi^4 T^3}{\omega(1-\omega)}}+ \Delta \mathbb{E}_{\boldsymbol{\nu}_3}\left[\sum_{t=1}^T x_t(a_3)\right],
		\end{cases}\, \\
		&\begin{cases}
			2\mathbb{E}_{\boldsymbol{\nu}_2}[R_T] = \psi \mathbb{E}_{\boldsymbol{\nu}_4}\left[\sum_{t=1}^T x_t(a_2)\right]- \sqrt{\frac{\ln 2}{2}\frac{\psi^4 T^3}{\omega(1-\omega)}}+ \Delta \mathbb{E}_{\boldsymbol{\nu}_2}\left[\sum_{t=1}^T x_t(a_3)\right],\\
			2\mathbb{E}_{\boldsymbol{\nu}_3}[R_T] = \psi \mathbb{E}_{\boldsymbol{\nu}_4}\left[\sum_{t=1}^T x_t(a_1)\right]- \sqrt{\frac{\ln 2}{2}\frac{\psi^4 T^3}{\omega(1-\omega)}}+ \Delta \mathbb{E}_{\boldsymbol{\nu}_3}\left[\sum_{t=1}^T x_t(a_3)\right],
		\end{cases}\,
	\end{align*}
	where the first step is a consequence of Equation \eqref{eq:lb_03}, and the second step follows from the identity $x_t(a_1) + x_t(a_2)+x_t(a_3) = 1$, for every $t \in [T]$. We are now ready to lower bound the average expected regret between instances $\boldsymbol{\nu}_2$ and $\boldsymbol{\nu}_3$.
	\begin{align*}
		\mathbb{E}_{\boldsymbol{\nu}_2}[R_T]+\mathbb{E}_{\boldsymbol{\nu}_3}[R_T] &\ge \psi \mathbb{E}_{\boldsymbol{\nu}_4}\left[\sum_{t=1}^T(x_t(a_1)+x_t(a_2))\right]-\sqrt{\frac{\ln 2}{2}\frac{\psi^4 T^3}{\omega(1-\omega)}}+\Delta \mathbb{E}_{\boldsymbol{\nu}_3}\left[\sum_{t=1}^T x_t(a_3)\right] \\
		&=\frac{T\psi}{2} + \left(\Delta-\frac{\psi}{2}\right) \mathbb{E}_{\boldsymbol{\nu}_4}\left[\sum_{t=1}^T x_t(a_3)\right]-\sqrt{\frac{\ln 2}{2}\frac{\psi^4 T^3}{\omega(1-\omega)}} \\
		&\ge \frac{T\psi}{2} + \frac{T\Delta}{8}\frac{\epsilon}{\epsilon + \rho}-\sqrt{\frac{\ln 2}{2}\frac{\psi^4 T^3}{\omega(1-\omega)}} \\
		&\ge \frac{1}{16}\sqrt{\omega(1-\omega)T} + \frac{1}{48\rho}\Delta\sqrt{T}
	\end{align*}
	where the second inequality follows from Equation \eqref{eq:lb_06} and the fact that $\Delta \ge \psi$, the last inequality from the definition of $\psi$ and $\epsilon$, and the fact that  $\rho \ge \epsilon$. Noting that $\max\left\{\mathbb{E}_{\boldsymbol{\nu}_2}[R_T],\mathbb{E}_{\boldsymbol{\nu}_3}[R_T]\right\} \ge \left(\mathbb{E}_{\boldsymbol{\nu}_2}[R_T]+\mathbb{E}_{\boldsymbol{\nu}_3}[R_T]\right)/2$, we can conclude first step.
	\paragraph{Step 2} Note that $\frac{\Delta}{2} = \sqrt{\left(\frac{W_t+\Delta}{2}-\frac{W_t}{2}\right)^2}$, thus the (deterministic) term $\frac{\Delta}{2}\sqrt{T}$ is equivalent to the quadratic term describing the average squared distance between the optimal policy and the strictly safe strategy in instances $\boldsymbol{\nu}_2$ and $\boldsymbol{\nu}_3$. 

    Consider $T \ge \max\left\{2, (11+\ln T)\left(\frac{8}{3}\right)^2\right\}$, and $\frac{1}{2} \ge \omega \ge \frac{1}{T}\max\left\{1, \left(\frac{11}{2}+\ln T\right)\right\}$. Note that the set of existence of $\omega$ is never empty due to the condition on $T$.
    
	Consider modified versions of these instances with $\widetilde{\omega} = \frac{\omega}{2}$: we apply Bernstein Inequality to obtain that, with probability at least $1-\delta'$:
	\begin{align}
		\sum_{t=1}^T \lvec_t^\top\xvec^* &= \sum_{t=1}^T \frac{W_t}{2}  \notag \\
		&\le \frac{T\widetilde{\omega}}{2} + \frac{1}{2}\sqrt{2T\widetilde{\omega}(1-\widetilde{\omega})\ln\left(\frac{1}{\delta'}\right)}+\frac{1}{3}\ln\left(\frac{1}{\delta'}\right)  \notag \\
		&\le\frac{T\omega}{4}+\frac{T\omega}{4} \le \frac{T\omega}{2}, \label{eq:lb_08}
	\end{align}	
	by observing that $\left(\frac{8}{3}\right)^2\ln\left(\frac{1}{\delta'}\right) \le T\omega$, which holds true for $\delta' = \frac{1}{228T}$, given the conditions on $T$ and $\omega$. Thus, instances $\boldsymbol{\nu}_2$ and $\boldsymbol{\nu}_3$ may generate, with high probability, loss sequences in which the optimal one is bounded by $\frac{T\omega}{2}$. We conclude the proof by showing that, among these loss sequences, at least one satisfies the previously derived lower bound. Without loss of generality, consider $\boldsymbol{\nu}_2$ to be the instance with the higher expected regret, we then apply the lower bound derived in Step 1 with $\widetilde{\omega}$ to get
	\begin{align}
		\label{eq:lb_07}
		\mathbb{E}_{\boldsymbol{\nu}_2}[R_T(\lvec_{1:T})] \ge \frac{9}{1024}\sqrt{T\omega}+ \frac{1}{48\rho}\Delta\sqrt{T}, 
	\end{align}
	where $\lvec_{1:T}$ is a fixed loss sequence generated $\boldsymbol{\nu}_2$, and using that $\omega \le \frac{1}{2}$. Suppose by contradiction that, for every $\lvec_{1:T}$ s.t. Equation \eqref{eq:lb_08} holds, then
	\begin{align*}
		\mathbb{E}_{\boldsymbol{\nu}_2}[R_T(\lvec_{1:T})\mathbbm{1}_{Eq. \text{\eqref{eq:lb_08}}}] < \frac{9}{2048}\sqrt{T\omega}+ \frac{1}{48\rho}\Delta\sqrt{T}, 
	\end{align*}
	Then, by setting $\delta' = \frac{1}{228T}$, we have:
	\begin{align*}
		\mathbb{E}_{\boldsymbol{\nu}_2}[R_T(\lvec_{1:T})] &=  \	\mathbb{E}_{\boldsymbol{\nu}_2}[R_T(\lvec_{1:T})\mathbbm{1}_{Eq. \eqref{eq:lb_08}}] + \mathbb{E}_{\boldsymbol{\nu}_2}[R_T(\lvec_{1:T})(1-\mathbbm{1}_{Eq. \eqref{eq:lb_08}})] \\
		&< \frac{9}{2048}\sqrt{T\omega}+ \frac{1}{48\rho}\Delta\sqrt{T} + \frac{1}{228} \\
		&< \frac{9}{1024}\sqrt{T\omega}+ \frac{1}{48\rho}\Delta\sqrt{T},
	\end{align*}
	where the first inequality derives from bounding the regret with $T$ and the definition of $\delta'$, and the second inequality derives from observing that $T\omega > \left(\frac{2048}{9\cdot 228}\right)^2$, given the conditions on $T$ and $\omega$. This represents a contradiction with the lower bound derived in the first step, and thus there must exist a loss sequence such that Equation \eqref{eq:lb_08} and Equation \eqref{eq:lb_07} hold simultaneously.
\end{proof}


\end{document}